\definecolor{jgGreen}{rgb}{0.0, 0.5, 0.0}
\definecolor{pink}{rgb}{0.5, 0.0, 0.25}
\newcommand{\alteta}[1]{}
\newcommand{\ignore}[1]{}
\newcommand{\cB}{{\mathcal B}}
\newcommand{\cD}{\mathcal{D}}
\newcommand{\cL}{\mathcal{L}}
\newcommand{\cM}{\mathcal{M}}
\newcommand{\sU}{{\mathscr U}}
\newcommand{\ba}{{\mathbf a}}
\newcommand{\bb}{{\mathbf b}}
\newcommand{\bh}{{\mathbf h}}
\newcommand{\bx}{{\mathbf x}}
\newcommand{\by}{{\mathbf y}}
\DeclareMathOperator*{\esssup}{ess\,sup}
\DeclareMathOperator{\proj}{Proj}
\DeclareMathOperator{\interior}{int}
\DeclareMathOperator{\supp}{supp}
\DeclareMathOperator{\dist}{dist}
\def\Rset{\mathbb{R}}
\def\Nset{\mathbb{N}}
\newcommand{\PP}{{\mathbb P}}
\newcommand{\QQ}{{\mathbb Q}}
\newcommand{\one}{{\mathbf{1}}}
\newcommand{\zero}{{\mathbf{0}}}
\newcommand{\Wball}[1]{{\cB^\infty_{#1}}}
\newcommand{\tWball}[1]{{\tilde{\cB}^\infty_{#1}}}
\newcommand{\mres}{\mathbin{\vrule height 1.6ex depth 0pt width
0.13ex\vrule height 0.13ex depth 0pt width 1.3ex}}
\newcommand{\ov}{\overline}
\newcommand{\td}{\tilde}
\newcommand{\e}{\epsilon}
\newtheorem{assumption}{Assumption}
\newcommand{\prm}{R_\phi^\e}
\newcommand{\dl}{\bar R_\phi}
\begin{document}
\title{Existence and Minimax Theorems for Adversarial Surrogate Risks in Binary Classification}

\author{\name Natalie S. Frank \email nf1066@nyu.edu \\
       \addr Courant Institute\\
       New York University\\
       New York, NY 10012, USA
       \AND \\
\name Jonathan Niles-Weed \email jnw@cims.nyu.edu \\
       \addr Courant Institute and Center for Data Science\\
       New York University\\
       New York, NY 10012, USA
       }

\editor{Gabor Lugosi}

\maketitle

\begin{abstract}
	We prove existence, minimax, and complementary slackness theorems for adversarial surrogate risks in binary classification.
	These results extend recent work of \citet{PydiJog2021}, who established analogous minimax and existence theorems for the adversarial classification risk.
	We show that their conclusions continue to hold for a very general class of surrogate losses; moreover, we remove some of the technical restrictions present in prior work.
	Our results provide an explanation for the phenomenon of transfer attacks and inform new directions in algorithm development.
\end{abstract}

\begin{keywords}
  Adversarial Learning, 
  Minimax Theorems, Optimal Transport, Adversarial Bayes risk, Convex Relaxation
\end{keywords}

\section{Introduction}\label{sec:intro}

    Neural networks are state-of-the-art methods for a variety of machine learning tasks including image classification and speech recognition. However, a concerning problem with these models is their susceptibility to \emph{adversarial attacks}: small perturbations to inputs can cause incorrect classification by the network \citep{szegedy2013intriguing,biggio2013evasion}. This issue has security implications; for instance, \citet{GuDolanGavittGarg17} show that a yellow sticker can cause a neural net to misclassify a stop sign. 
    Furthermore, one can find adversarial examples that generalize to other neural nets; these sort of attacks are called \emph{transfer attacks}. In other words, an adversarial example generated for one neural net will sometimes be an adversarial example for a different neural net trained for the same classification problem \citep{FlorianPapernotGoodfellowetal17,DemontisMelisetal18,kurakin2017adversarial,RozsaGuntherBoult16,PapernotMcDanielGoodfellowetal16}. This phenomenon shows that access to a specific neural net is not necessary for generating adversarial examples. One method for defending against such adversarial perturbations is \emph{adversarial training}, in which a neural net is trained on adversarially perturbed data points \citep{kurakin2017adversarial,madry2019deep,wang2019}. However, adversarial training is not well understood from a theoretical perspective.

    From a theoretical standpoint, the most fundamental question is whether it is possible to design models which are robust to such attacks, and what the properties of such robust models might be.
    In contrast to the classical, non-adversarial setting, much is still unknown about the basic properties of optimal robust models.
    For instance, in the context of binary classification, several prior works study properties of the \emph{adversarial classification risk}---the expected number of classification errors under adversarial perturbations.
   A minimizer of the adversarial risk is optimally robust against adversarial perturbations of the data; however, it is not clear under what conditions such a minimizer exists.
   Recently, 
    \citet{AwasthiFrankMohri2021}, \citet{BungertGarciaMurray2021}, and \citet{PydiJog2021} all showed existence of a minimizer to the adversarial classification risk under suitable assumptions, and characterized some of its properties.
    A crucial observation, emphasized by \citet{PydiJog2021}, is that minimizing the adversarial classification risk is equivalent to a \emph{dual} robust classification problem involving uncertainty sets with respect to the $\infty$-Wasserstein metric.
    This observation gives rise to a game-theoretic interpretation of robustness, which takes the form of a zero-sum game between a classifier and an adversary who is allowed to perturb the data by a certain amount. As noted by \citet{PydiJog2021}, this interpretation has implications for algorithm design by suggesting that robust classifiers can be constructed by jointly optimizing over classification rules and adversarial perturbations.
    
    This recent progress on adversarial binary classification lays the groundwork for a theoretical understanding of adversarial robustness, but it is limited insofar as it focuses only on minimizers of the adversarial classification risk.
    In practice, minimizing the empirical adversarial classification risk is computationally intractable; as a consequence; the adversarial training procedure typically minimizes an objective based on a \emph{surrogate} risk, which is chosen to be easier to optimize.
    In the non-adversarial setting, the properties of surrogate risks are well known~\cite[see, e.g.][]{BartlettJordanMcAuliffe2006}, but in the adversarial scenario, existing results for the adversarial classification risk fail to carry over to surrogate risks.
    In particular, the existence and minimax results described above are not known to hold.
    We close this gap by developing an analogous theory for adversarial surrogate risks. 
    Our main theorems (Theorems~\ref{th:complimentary_slackness}--\ref{th:existence_primal}) establish that strong duality holds for the adversarial surrogate risk minimization problem, that solutions to the primal and dual problems exist, and that these optimizers satisfy a complementary slackness condition.

    These results suggest explanations for empirical observations, such as the existence of transfer attacks.            Specifically, our analysis suggests that adversarial examples are a property of the data distribution rather than a specific model. 
    In fact, our complimentary slackness theorem states that certain attacks are the strongest possible adversary against \emph{any} function that minimizes the adversarial surrogate risk, which might explain why adversarial examples tend to transfer between trained neural nets. Furthermore, our theorems suggest that a training algorithm should optimize over neural nets and adversarial perturbations simultaneously.  Adversarial training, the current state of the art method for finding adversarially robust networks, does not follow this procedure. The adversarial training algorithm tracks an estimate of the optimal function $\tilde f $ and updates $\tilde f$ through gradient descent. To update $\tilde f$, the algorithm first finds \emph{optimal} adversarial examples at the current estimate $\tilde f$. See the papers \citep{kurakin2017adversarial,madry2019deep,GoodfellowShlensSzegedy14} for more details on adversarial training. Finding these  adversarial examples is a computationally intensive procedure. On the other hand, algorithms for optimizing minimax problems in the finite dimensional setting alternate between primal and dual steps \citep{mokhtari2019unified}. This observation suggests that designing an algorithm that optimizes over model parameters and adversarial perturbations simultaneously is a promising research direction. \citet{GarciaTrillosGarciaTrillos23,WangChizat2023exponentially,domingoenrichbruna2021meanfield} adopt this approach, and one can view the minimax results of this paper as a mathematical justification for the use of surrogate losses in such algorithms. 
    
    Lastly, our theorems are an important first step in understating statistical properties of surrogate losses. Recall that one minimizes a surrogate risk because minimizing the original risk is computationally intractable. If a sequence of functions which minimizes the surrogate risk also minimizes the classification risk, then that surrogate is referred to as a \emph{consistent risk}. Similarly, if a sequence of functions which minimizes the  adversarial surrogate risk also minimizes the adversarial classification risk, then that surrogate is referred to as an \emph{adversarially consistent risk}. Much prior work studies the consistency of surrogate risks \citep{BartlettJordanMcAuliffe2006,Lin2004,Steinwart2007, LongServedioH-consistency,ZhangAgarwal,zhang04}. 
    Alarmingly, \cite{MeunierEttedguietal22} show that a family of surrogates used in applications is not adversarially consistent. In follow up work, we show that our results can be used to characterize adversarially consistent supremum-based risks for binary classification \citep{FrankNilesWeed22}, strengthening results on calibration in the adversarial setting \cite{bao2021calibrated, MeunierEttedguietal22,AwasthiFrankMao2021}.  
    
\section{Related Works}\label{sec:related_works}
    This paper extends prior work on the adversarial Bayes classifier. \citet{PydiJog2021} first proved multiple minimax theorems for the adversarial classification risk using optimal transport and Choquet capacities, showing an intimate connection  between adversarial learning and optimal transport. Later, follow-up work used optimal transport minimax reformulations of the adversarial learning problem to derive new algorithms for adversarial learning. \citet{TrillosJacobsKim22} reformulate adversarial learning in terms of a multi-marginal optimal transport problem and then apply existing techniques from optimal transport to find a new algorithm. \citet{GarciaTrillosGarciaTrillos23,WangChizat2023exponentially,domingoenrichbruna2021meanfield} propose ascent-descent algorithms based on optimal transport and use mean-field dynamics to analyze convergence. These approaches leverage the minimax view of the adversarial training problem to optimize over model parameters and optimal attacks simultaneously. \citet{gao2022wasserstein} use an optimal transport reformulation to find regularizers that encourage robustness. \citet{WongSchmidtKolter2019,wu2020stronger} use Wasserstein metrics to formulate adversarial attacks on neural networks.
    
    Further work analyzes properties of the adversarial Bayes classifier. \citet{AwasthiFrankMohri2021}, \citet{BhagojiNitinCullina2019lower}, and \citet{BungertGarciaMurray2021} all prove the existence of the adversarial Bayes classifier, using different techniques. \citet{YangRashtchian2020} studied the adversarial Bayes classifier in the context of non-parametric methods.  \citet{PydiJog2019} and \citet{BhagojiNitinCullina2019lower} further introduced methods from optimal transport to study adversarial learning. 
    Lastly, \citep{trillosMurray2020} 
    give necessary and sufficient conditions describing the boundary of the adversarial Bayes classifier.     Simultaneous work \citep{LiTelgarsky2023achieving} also proves the existence of minimizers to adversarial surrogate risks using prior results on the adversarial Bayes classifier.
    
    The adversarial training algorithm is also well studied from an empirical perspective. \citet{DemontisMelisetal18} discussed an explanation of transfer attacks on neural nets trained using standard methods, but did not extend their analysis to the adversarial training setting. \citep{wang2019,kurakin2017adversarial,madry2019deep} study the adversarial training algorithm and improving the runtime. Two particularly popular attacks used in adversarial training are the FGSM attack \citep{GoodfellowShlensSzegedy14} and the PGD attack \citep{madry2019deep}. More recent popular variants of this algorithm include \citep{ShafahiNajibietal19,XieWuVanDerMaaten18,KannanKurakinGoodfellow18,WongRiceKolter2020}. 

    \section{Background and Notation}\label{sec:background}
    \subsection{Adversarial Classification}\label{sec:adv_classification} 
        This paper studies binary classification on $\Rset^d$ with two classes encoded as $-1$ and $+1$. Data is distributed according to a distribution $\cD$ on $\Rset^d\times\{-1,+1\}$. We denote the marginals according to the class label as $\PP_0(S)=\cD(S\times \{-1\})$ and $\PP_1(S)=\cD(S\times\{+1\})$. Throughout the paper, we assume  $\PP_0(\Rset^d)$ and $\PP_1(\Rset^d)$ are finite but not necessarily  that $\PP_0(\Rset^d)+\PP_1(\Rset^d)=1$.

        To classify points in $\Rset^d$, algorithms typically learn a real-valued function $f$ and then classify points $\bx$ according to the sign of $f$ (arbitrarily assigning the sign of 0 to be $-1$).
 The \emph{classification error}, also known as the \emph{classification risk}, of a function $f$ is  \begin{equation}\label{eq:standard_zero_one_loss}
    R(f)= \int\one_{f(\bx)\leq 0}d\PP_1+\int  \one_{f(\bx)> 0} d\PP_0.    
        \end{equation}
        Notice that finding minimizers to $R$ is straightforward: define the measure $\PP=\PP_0+\PP_1$ and let $\eta=d\PP_1/d\PP$. Then the risk $R$ can be re-written as 
        \[R(f) =\int \eta(\bx) \one_{f(\bx)\leq 0}+(1-\eta(\bx))\one_{f(\bx)>0}d\PP.\]
        Hence a minimizer of $R$ must minimize the function $C(\eta(\bx),\alpha)= \eta(\bx) \one_{\alpha\leq 0}+(1-\eta(\bx))\one_{\alpha>0}$ at each $\bx$ $\PP$-a.e.
        The optimal Bayes risk is then 
        \[\inf_f R(f)=\int C^*(\eta)d\PP\]
        where $C^*(\eta) =\inf_\alpha C(\eta,\alpha)=\min(\eta,1-\eta)$.

        This paper analyzes the \emph{evasion attack}, in which an adversary knows both the function $f$ and the true label of the data point, and can perturb each input before it is evaluated by the function $f$. To constrain the adversary, we assume that perturbations are bounded by $\e$ in a norm $\|\cdot\|$. Thus a point $\bx$ with label $+1$ is misclassified if there is a perturbation $\bh$ with $\|\bh\|\leq \e$ for which $f(\bx+\bh)\leq 0$ and a point $\bx$ with label $-1$ is misclassified if there is a perturbation $\bh$ with $\|\bh\|\leq \e$ for which $f(\bx+\bh)>0$. Therefore, the \emph{adversarial classification risk} is 
        \begin{equation}\label{eq:adv_zero_one_loss}
     R^\e(f)=\int\sup_{\|\bh\|\leq \e}\one_{f(\bx+\bh)\leq 0}d\PP_1+\int  \sup_{\|\bh\|\leq \e}\one_{f(\bx+\bh)> 0} d\PP_0
        \end{equation}
        which is the expected proportion of errors subject to the adversarial evasion attack. The expression $\sup_{\|\bh\|\leq \e } \one_{f(\bx+\bh)\leq 0}$ evaluates to 1 at a point $\bx$ iff $\bx$ can be moved into the set $[f\leq 0]$ by a perturbation of size at most $\e$. Equivalently, this set is the Minkowski sum $\oplus$ of $[f\leq 0]$ and $\ov{B_\e(\zero)}$. For any set $A$, let $A^\e$ denote
\begin{equation}\label{eq:e_operation_def}
    A^\e=\{\bx\colon \exists \bh\text{ with } \|\bh\|\leq \e \text{ and }\bx+\bh\in A\}=A\oplus \ov{B_\e(\zero)}=\bigcup_{\ba\in A} \ov{B_\e(\ba)}.
\end{equation}
    This operation `thickens' the boundary of a set by $\e$. With this notation, \eqref{eq:adv_zero_one_loss} can be expressed as $R^\e(f)=\int \one_{\{f\leq 0\}^\e} d\PP_1+\int \one_{\{f>0\}^\e}d\PP_0$.
        
    Unlike the classification risk $R$, finding minimizers to $R^\epsilon$ is nontrivial.
        One can re-write $R^\e$ in terms of $\PP$ and $\eta$ but the resulting integrand cannot be minimized in a pointwise fashion.
        Nevertheless, it can be shown that minimizers of $R^\epsilon$ exist
        \citep{AwasthiFrankMohri2021,BungertGarciaMurray2021,PydiJog2021, FrankNilesWeed22}.

    \subsection{Surrogate Risks}\label{sec:surrogate_risks}
    As minimizing the empirical version of risk in \eqref{eq:standard_zero_one_loss} is computationally intractable, typical machine learning algorithms minimize a proxy to the classification risk called a \emph{surrogate risk}. In fact, \citet{BenDavidEironLong2003} show that minimizing the empirical classification risk is NP-hard in general. One popular surrogate is
    \begin{equation}\label{eq:standard_phi_loss}
        R_\phi(f)=\int \phi(f)d\PP_1+\int\phi(-f) d\PP_0
    \end{equation}
    where $\phi$ is a decreasing function.\footnote{Notice that due to the asymmetry of the sign function at 0 in \eqref{eq:standard_zero_one_loss}, $R_\phi$ is not quite a generalization of $R$.}
    To define a classifier, one then threshholds $f$ at zero.
    There are many reasonable choices for $\phi$---one typically chooses an upper bound on the zero-one loss which is easy to optimize. We make the following assumption on $\phi$:
    \begin{assumption}\label{as:phi}
        The loss $\phi$ is non-increasing, non-negative, lower semi-continuous, and $\lim_{\alpha \to \infty} \phi(\alpha)=0$. 
    \end{assumption}
  A particularly important example, which plays a large role in our proofs, is the exponential loss $\psi(\alpha)=e^{-\alpha}$, which will be denoted by $\psi$ in the remainder of this paper.
    Assumption~\ref{as:phi} includes many but not all all surrogate risks used in practice. Notably, some multiclass surrogate risks with two classes are of a somewhat different form, see for instance \citep{TewariBartlett2007} for more details.

    In order to find minimizers of $R_\phi$, we rewrite the risk in terms of $\PP$ and $\eta$ as 
    \begin{equation}\label{eq:standard_phi_risk_eta}
        R_\phi(f)= \int \eta(\bx) \phi(f(\bx)) +(1-\eta(\bx)) \phi(-f(\bx)) d\PP
    \end{equation}
    Hence the minimizer of $R_\phi$ must minimize $C_\phi(\eta,\cdot)$ pointwise $\PP$-a.e., where 
    \begin{equation*}
        C_\phi(\eta,\alpha)=\eta\phi(\alpha)+(1-\eta)\phi(-\alpha).
    \end{equation*}

    In other words, if one defines $C_\phi^*(\eta)=\inf_\alpha C_\phi(\eta,\alpha)$, then a function $f^*$ is optimal if and only if
    \begin{equation}\label{eq:standard_optimal_f}
        \eta(\bx)\phi(f^*(\bx))+(1-\eta(\bx))\phi(-f^*(\bx))=C_\phi^*(\eta(\bx)) \quad \PP\text{-a.e.}
    \end{equation}
    Thus one can write the minimum value of $R_\phi$ as 
    \begin{equation}\label{eq:R_phi_min}
        \inf_f R_\phi(f)=\int C_\phi^*(\eta) d\PP.    
    \end{equation}
To guarantee the existence of a function satisfying \eqref{eq:standard_optimal_f}, we must allow our functions to take values in the extended real numbers $\ov \Rset=\Rset\cup \{-\infty,+\infty\}$.
Allowing the value $\alpha = + \infty$ is necessary, for instance, for the exponential loss $\psi(\alpha) = e^{-\alpha}$:
when $\eta =1$, the minimum of $C_\psi(1,\alpha)=e^{-\alpha}$ is achieved at $\alpha = +\infty$. In fact, one can express a minimizer as a function of the conditional probability $\eta(\bx)$ using \eqref{eq:standard_optimal_f}. For a loss $\phi$, define $\alpha_\phi: [0,1]\to \ov R$ by 
\begin{equation}\label{eq:alpha_phi_definition}
    \alpha_\phi(\eta)=\inf \{\alpha: \alpha\text{ is a minimizer of } C_\phi(\eta,\cdot)\}.
\end{equation}
    Lemma~\ref{lemma:C_phi_minimizers} in Appendix~\ref{app:C_phi_minimizers}
shows that the function $\alpha_\phi$ is montonic and $\alpha_\phi(\eta)$ is in fact a minimizer of $C_\phi(\eta,\cdot)$. Thus the function 
\begin{equation}\label{eq:minimizer_from_eta}
    f^*(\bx)=\alpha_\phi(\eta(\bx))    
\end{equation}
 is measurable and satisfies \eqref{eq:standard_optimal_f}. Therefore, $f^*$ must be a minimizer of the risk $R_\phi$.
    
    Similarly, rather directly minimizing the adversarial classification risk, practical algorithms minimize an \emph{adversarial surrogate}. The adversarial counterpart to \eqref{eq:standard_phi_loss} is
    \begin{align}
        &R_\phi^\e(f)
        =\int \sup_{\|\bh\|\leq \e} \phi(f(\bx+\bh))d\PP_1 +\int \sup_{\|\bh\|\leq \e} \phi(-f(\bx+\bh))d\PP_0.\label{eq:adv_phi_loss}
    \end{align}
    
    Due to the definitions of the adversarial risks \eqref{eq:adv_zero_one_loss} and \eqref{eq:adv_phi_loss}, the operation of finding the supremum of a function over $\e$-balls is central to our subsequent analysis. For a function $g$, we define 
    \begin{equation}\label{eq:S_e_def}
        S_\e(g)(\bx)=\sup_{\|\bh\|\leq \e} g(\bx+\bh)
    \end{equation}

    Using this notation, one can re-write the risk $R_\phi^\e$ as 
     \begin{align*}
        &R_\phi^\e(f)
        =\int S_\e(\phi\circ f)d\PP_1 +\int S_\e(\phi\circ -f)d\PP_0
    \end{align*}
    By analogy to~\eqref{eq:standard_phi_risk_eta}, we equivalently write the risk $R_\phi^\e$ in terms of $\PP$ and $\eta$:
    \begin{equation}\label{eq:not_pw_adv_surrogate}
        R_\phi^\e(f)=\int \eta(\bx) S_\e(\phi\circ f)(\bx)+(1-\eta(\bx))S_\e(\phi\circ -f)(\bx) d\PP.
    \end{equation}
    However, unlike~\eqref{eq:standard_phi_risk_eta}, because the integrand of $R_\phi^\e$ cannot be minimized in a pointwise manner, proving the existence of minimizers to $R_\phi^\e$ is non-trivial.     In fact, unlike the adversarial classification risk $R^\e$, there is little theoretical understanding of properties of minimizing $R_\phi^\e$. 

    \subsection{Measurability}
    In order to define the adversarial risks $R^\e$ and $R_\phi^\e$, one must show that $S_\e(\one_A), S_\e(\phi \circ f)$ are measurable. To illustrate this concern, \citet{PydiJog2021} show that for every $\e>0$ and $d>1$, there is a Borel set $C$ for which the function $S_\e(\one_C)(\bx)$ is not Borel measurable. However, if $g$ is Borel, then $S_\e(g)$ is always measurable with respect to a larger $\sigma$-algebra called the \emph{universal $\sigma$-algebra} $\sU(\Rset^d)$. Such a function is called \emph{universally measurable}. We prove the following theorem and formally define the universal $\sigma$-algebra in Appendix~\ref{app:meas}.  
        \begin{restatable}{theorem}{thmeas}\label{th:meas}
    If $f$ is universally measurable, then $S_\e(f)$ is also universally measurable.
\end{restatable}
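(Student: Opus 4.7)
The plan is to characterize universal measurability of $S_\e(f)$ via its superlevel sets, and to show each set $\{\bx : S_\e(f)(\bx) > t\}$ is universally measurable by writing it as a projection of a universally measurable set and invoking a projection theorem from descriptive set theory.

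First, I would rewrite the superlevel sets as projections:
\[
\{\bx : S_\e(f)(\bx) > t\} = \pi_1\bigl(\{(\bx,\bh) \in \Rset^d \times \ov{B_\e(\zero)} : f(\bx+\bh) > t\}\bigr),
\]
where $\pi_1$ is projection onto the first coordinate. This reduction is what makes the problem tractable: the uncountable supremum is replaced by a first-order existential quantifier, which geometrically is a projection.

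Second, I would verify that the set inside the projection is universally measurable in $\Rset^d \times \Rset^d$. The map $\Phi(\bx,\bh) = \bx + \bh$ is continuous, so the set equals $\Phi^{-1}(\{f > t\}) \cap (\Rset^d \times \ov{B_\e(\zero)})$. Because $f$ is universally measurable, $\{f > t\}$ is universally measurable, and I would use the standard (but worth recording) fact that continuous preimages preserve universal measurability: given a Borel probability measure $\nu$ on $\Rset^d\times\Rset^d$, the pushforward $\Phi_*\nu$ is Borel on $\Rset^d$, so approximating $\{f > t\}$ up to a $\Phi_*\nu$-null set by Borel sets pulls back to a $\nu$-measurable approximation of $\Phi^{-1}(\{f > t\})$. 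Intersecting with the closed set $\Rset^d\times\ov{B_\e(\zero)}$ preserves universal measurability.

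Third — and this is the main obstacle — I would invoke the classical result that projections of universally measurable sets (along a standard Borel factor) are universally measurable. Since $\ov{B_\e(\zero)}$ is compact and hence standard Borel, this applies. For a self-contained appendix treatment I would split into cases: when $f$ is Borel, the set inside the projection is Borel, so its projection is analytic, and analytic sets are universally measurable — this already handles the most important case. The extension to universally measurable $f$ would rely on closure of the universal $\sigma$-algebra under the Suslin ($\cA$-) operation, since projections can be represented via the $\cA$-operation on a closed family. Alternatively, one may cite a standard reference such as Bogachev's \emph{Measure Theory} or Kechris' \emph{Classical Descriptive Set Theory}.

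Combining these steps shows that $\{S_\e(f) > t\}$ lies in $\sU(\Rset^d)$ for every $t \in \Rset$, which is exactly universal measurability of $S_\e(f)$. I expect the descriptive-set-theoretic projection step to be the only nontrivial ingredient; the rest is the natural reformulation that converts an uncountable supremum into an existential quantifier over a compact standard Borel parameter space.
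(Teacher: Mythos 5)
Your argument is correct for Borel $f$ and takes a genuinely different route from the paper's. You encode the uncountable supremum as a projection, $\{S_\e(f)>t\}=\pi_1\bigl(\{(\bx,\bh):\|\bh\|\le\e,\ f(\bx+\bh)>t\}\bigr)$, and for Borel $f$ the inner set is Borel, so its projection is analytic and hence universally measurable --- clean and complete, and in fact the Borel case is the only one the rest of the paper actually uses, since $S_\e$ is always applied there to $\phi\circ f$ with $f$ Borel. The paper instead records the level set as a Minkowski sum, $\{S_\e(f)>t\}=\{f>t\}\oplus\ov{B_\e(\zero)}$, and reduces to showing that the Minkowski sum of a universally measurable set with a compact set is universally measurable; that reduction is carried out via a change of variables on $\ov{B_\e(\zero)}\times\Rset^d$, the equivalence $B\times A\in\sU(B\times\Rset^d)\Leftrightarrow A\in\sU(\Rset^d)$, and inner regularity of Borel measures, avoiding analytic sets entirely.

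The gap is in your third step for general universally measurable $f$. The claim that projections of universally measurable subsets of $X\times K$ (with $K$ compact) onto $X$ remain universally measurable is not a theorem of ZFC: coanalytic sets are universally measurable, their projections are $\Sigma^1_2$, and under $V=L$ there exist $\Sigma^1_2$ sets that are not Lebesgue measurable. Compactness of the fiber does not rescue this --- any Polish fiber space can be replaced by $[0,1]$ via a Borel surjection without changing the projection. The Suslin-operation fallback does not repair it either: $\sU(\Rset^d)$ is indeed closed under the $\cA$-operation, but the projection of an arbitrary member of $\sU$ along a compact factor cannot in general be realized as an $\cA$-operation over a family in $\sU$; that representation holds precisely for sets built from closed sets by the $\cA$-operation (i.e.\ analytic sets), which is the Borel/analytic case you already handled. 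What your reformulation discards is the special structure of the set being projected: its $\bh$-sections are all translates of the single set $\{f>t\}$. The Minkowski-sum form records exactly this structure, and the paper's product-plus-regularity argument is designed to exploit it; a complete proof of the universally measurable case has to use that structure rather than invoke a general projection theorem.
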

In fact, in Appendix~\ref{app:meas}, we show that a function defined by a supremum over a compact set is universally measurable--- a result of independent interest. The universal $\sigma$-algebra is smaller than the completion of $\cB(\Rset^d)$ with respect to any Borel measure. Thus, in the remainder of the paper, unless otherwise noted, all measures will be Borel measures and the expression $\int S_\e(f)d\QQ$ will be interpreted as the integral of $S_\e(f)$ with respect to the completion of $\QQ$.


\subsection{The $W_\infty$ metric}
In this section, we explain how the integral of a supremum $\int S_\e(f)d\QQ$ can be expressed in terms of a supremum of integrals. We start by defining the Wasserstein-$\infty$ metric. 
\begin{definition}
    Let $\PP,\QQ$ be two finite measures with  $\PP(\Rset^d)=\QQ(\Rset^d)$. A \emph{coupling} is a  positive measure on the product space $\Rset^d\times \Rset^d$ with marginals $\PP,\QQ$. We denote the set of all couplings with marginals $\PP$, $\QQ$ by $\Pi(\PP,\QQ)$. The $\infty$-Wasserstein distance with respect to a norm $\|\cdot\|$ is defined as
    \[W_\infty(\PP,\QQ)=\inf_{\gamma\in \Pi(\PP,\QQ)} \esssup_{(\bx,\bx')\sim \gamma} \|\bx-\bx'\|\]
\end{definition}
In other words, $\PP$, $\QQ$ are within a Wasserstein-$\infty$ distance of $\e$ if there is a coupling $\gamma$ for $\PP$ and $\QQ$ for which $\supp \gamma$ is contained in the set $\Delta_\e=\{(\bx,\bx')\colon \|\bx-\bx'\|\leq \e\}$.

The $\infty$-Wasserstein metric is closely related to the to the operation $S_\e$.
First, we show that $S_\e$ can be expressed as a supremum of integrals over a Wasserstein-$\infty$ ball.
For a measure $\QQ$, we write
\[\Wball \e (\QQ)=\{\QQ'\text{ Borel}: W_\infty(\QQ,\QQ')\leq \e\}.\]
\begin{restatable}{lem}{lemmaSeWinftyequivalence}\label{lemma:S_e_W_infty_equivalence}
     Let $\QQ$ be a finite positive Borel measure and let $f\colon \Rset^d\to \Rset \cup \{\infty\}$ be a Borel measurable function. Then
     \begin{equation}\label{eq:sup_integral_swap}
        \int S_\e(f)d\QQ=\sup_{\substack{\QQ' \in \Wball \e (\QQ)}} \int f d\QQ'
     \end{equation}
\end{restatable}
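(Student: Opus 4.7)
The plan is to prove the two inequalities separately: the $\le$ direction follows immediately from the existence of a coupling with support in $\Delta_\e$, while the $\ge$ direction requires constructing a near-optimal transport map via measurable selection.

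For the $\le$ direction, I would take any $\QQ' \in \Wball\e(\QQ)$ and use the equivalence (noted in the text after the definition of $W_\infty$) that $W_\infty(\QQ,\QQ') \le \e$ yields a coupling $\gamma \in \Pi(\QQ,\QQ')$ with $\supp \gamma \subset \Delta_\e$. On $\supp\gamma$ one has $\|\bx - \bx'\| \le \e$, hence $f(\bx') \le S_\e(f)(\bx)$ for $\gamma$-a.e.\ $(\bx,\bx')$. Integrating yields
\[
\int f\, d\QQ' = \int f(\bx')\, d\gamma(\bx,\bx') \le \int S_\e(f)(\bx)\, d\gamma = \int S_\e(f)\, d\QQ,
\]
and taking the supremum over $\QQ'$ gives one inequality.

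For the $\ge$ direction, the strategy is to produce, for each $n$, a measure $\QQ'_n \in \Wball\e(\QQ)$ with $\int f\, d\QQ'_n \ge \int S_\e(f)\, d\QQ - O(1/n)$. First I would reduce to the case $0 \le f \le M$ by truncating with $f_N := \min(f,N)$ (and in principle also from below; for the intended applications $f = \phi\circ g$ is already nonnegative by Assumption~\ref{as:phi}) and applying monotone convergence, noting $S_\e(f_N) = \min(S_\e(f),N) \uparrow S_\e(f)$. In the bounded case, I would partition $[0,M]$ by $c_k = k/n$, set
\[
A_k = \{\bx\colon c_{k-1} \le S_\e(f)(\bx) < c_k\},
\]
which is universally measurable by Theorem~\ref{th:meas}, and form the graph correspondence
\[
E_k = \{(\bx,\bx')\colon \bx \in A_k,\ \|\bx - \bx'\| \le \e,\ f(\bx') > c_{k-1} - 1/n\}.
\]
By definition of the supremum, $\proj_1(E_k) = A_k$. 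I would then invoke the Jankov--von Neumann measurable selection theorem---applied after replacing each $A_k$ by a Borel set that agrees with it $\QQ$-a.e., which is possible since $A_k$ is universally measurable---to obtain a $\QQ$-measurable selector $T_k : A_k \to \Rset^d$. Patching these together produces a $\QQ$-measurable $T : \Rset^d \to \Rset^d$ with $\|T(\bx) - \bx\| \le \e$ and $f(T(\bx)) > S_\e(f)(\bx) - 2/n$ for $\QQ$-a.e.\ $\bx$. Setting $\QQ'_n := T_\# \QQ$, the coupling $(\id, T)_\# \QQ$ is concentrated on $\Delta_\e$, so $\QQ'_n \in \Wball\e(\QQ)$, and integrating against $\QQ$ yields $\int f\, d\QQ'_n \ge \int S_\e(f)\, d\QQ - (2/n)\QQ(\Rset^d)$. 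Letting $n \to \infty$ completes the proof.

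The main obstacle is the measurable selection step. Because $S_\e(f)$ is only universally measurable and need not be Borel, the level sets $A_k$ and graph sets $E_k$ are not automatically analytic, and so Jankov--von Neumann cannot be invoked directly on the nose; one must first pass to Borel representatives agreeing with them up to a $\QQ$-null set. A related subtlety is that the pushforward of the Borel measure $\QQ$ by the merely $\QQ$-measurable map $T$ must be checked to yield a genuine Borel measure, which is handled by working modulo the completion of $\QQ$---consistent with the paper's convention of interpreting $\int S_\e(\cdot)\, d\QQ$ as an integral against the completion of $\QQ$.
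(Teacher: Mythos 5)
Your proof is correct, but the $\geq$ direction takes a genuinely different route from the paper's. Where you discretize the range of $S_\e(f)$ into slices $A_k$, select on each slice via Jankov--von Neumann, and truncate $f$ at level $N$ to run monotone convergence, the paper applies a single measurable-selection theorem (Proposition~7.50 of \citet{BertsekasShreve96}, recorded as Theorem~\ref{th:meas_selection} in Appendix~\ref{app:W_infty}) directly to the parametrized optimization $\sup_{\|\bh\|\leq\e} f(\bx+\bh)$. That theorem produces, for every $\delta>0$, a universally measurable $\varphi$ with $\|\varphi(\bx)-\bx\|\leq\e$ and $f(\varphi(\bx))+\delta\geq S_\e(f)(\bx)$ everywhere, with no slicing or truncation needed. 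The paper then invokes Lemma~\ref{lemma:universal_borel_a.e.} (also from \citealp{BertsekasShreve96}) to replace $\varphi$ by a genuinely Borel map $T$ with $T=\varphi$ $\QQ$-a.e., so that $\QQ'=T_\#\QQ$ is manifestly Borel and $\gamma=(\id,T)_\#\QQ$ is a Borel coupling concentrated on $\Delta_\e$. Your resolution of the measurability subtlety---replace the universally measurable sets $A_k$ by Borel representatives up to $\QQ$-null sets, then observe that pushforward by a universally measurable selector is still a Borel measure---is correct; the paper's route via Lemma~\ref{lemma:universal_borel_a.e.} is a cleaner packaging of the same idea. What your approach buys is that it only needs the classical Jankov--von Neumann theorem rather than the more specialized \citet{BertsekasShreve96} result; what the paper's approach buys is that it avoids the discretization and truncation entirely and handles extended-real-valued $f$ in one stroke. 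One small caveat on your truncation: the lemma is stated for $f$ with values in $\Rset\cup\{\infty\}$, which is not a priori bounded below, so you would indeed need the lower truncation you mention in passing (or a separate integrability argument); for the paper's applications $f=\phi\circ g\geq 0$ so this is immaterial, but it is worth stating if one wants the lemma at the generality advertised.
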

Lemma~5.1 of \citet{PydiJog2021} proves an analogous statement for sets, namely that $\QQ(A^\e)=\sup_{\QQ'\in \Wball \e(\QQ)} \QQ(A)$, under suitable assumptions on $\QQ$ and $\QQ'$.


Conversely, the $W_\infty$ distance between two probability measures can be expressed in terms of the integrals of $f$ and $S_\e(f)$. Let $C_b(X)$ be the set of continuous bounded functions on the topological space $X$.
\begin{restatable}{lem}{lemmaWinfintegralcharacterization}\label{lemma:W_inf_integral_characterization}
         Let $\PP,\QQ$ be two finite positive Borel measures with $\PP(\Rset^d)=\QQ(\Rset^d)$.
     Then
     \[W_\infty(\PP,\QQ)=\inf_\e\{\e\geq 0 \colon \int h d\QQ\leq \int S_\e(h)d\PP \quad \forall h\in C_b(\Rset^d)\}\]
\end{restatable}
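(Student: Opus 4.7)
Let $\e^*$ denote the infimum on the right-hand side; I will prove both inequalities. For the easy direction $\e^* \leq W_\infty(\PP,\QQ)$, fix any $\e > W_\infty(\PP,\QQ)$ and choose a coupling $\gamma \in \Pi(\PP,\QQ)$ whose $\gamma$-essential supremum of $\|\bx-\bx'\|$ is at most $\e$, so that $\supp \gamma \subseteq \Delta_\e = \{(\bx,\bx'):\|\bx-\bx'\|\leq\e\}$. For any $h \in C_b(\Rset^d)$,
\begin{equation*}
\int h\, d\QQ = \int h(\bx')\, d\gamma(\bx,\bx') \leq \int S_\e(h)(\bx)\, d\gamma(\bx,\bx') = \int S_\e(h)\, d\PP,
\end{equation*}
since on $\Delta_\e$ we have $h(\bx') \leq S_\e(h)(\bx)$. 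Every $\e > W_\infty(\PP,\QQ)$ thus lies in the defining set of $\e^*$, giving $\e^* \leq W_\infty(\PP,\QQ)$.

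For the reverse inequality, I would argue the contrapositive via Hahn--Banach separation in the vector space of finite signed Borel measures on $\Rset^d$ under the weak topology induced by integration against $C_b(\Rset^d)$. The key is that the Wasserstein-$\infty$ ball $\Wball{\e}(\PP) = \{\PP' : W_\infty(\PP,\PP') \leq \e\}$ is convex and weakly closed. Convexity follows by averaging couplings supported on $\Delta_\e$. For weak closedness, if $\QQ_n \to \QQ$ weakly with couplings $\gamma_n \in \Pi(\PP,\QQ_n)$ supported on $\Delta_\e$, then both marginal families are tight (the $\QQ_n$ by weak convergence, the first marginals by being constantly $\PP$), so $\{\gamma_n\}$ is tight on $\Rset^d \times \Rset^d$. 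Prokhorov's theorem supplies a subsequence $\gamma_{n_k} \to \gamma$; the marginals of $\gamma$ are $(\PP,\QQ)$, and the Portmanteau theorem applied to the closed set $\Delta_\e$ gives $\gamma(\Delta_\e^c) \leq \liminf \gamma_{n_k}(\Delta_\e^c) = 0$, so $\QQ \in \Wball{\e}(\PP)$.

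Now if $W_\infty(\PP,\QQ) > \e$, then $\QQ \notin \Wball{\e}(\PP)$, and Hahn--Banach separation yields a continuous linear functional strictly separating $\QQ$ from $\Wball{\e}(\PP)$. Since continuous linear functionals in this weak topology are exactly the maps $\mu \mapsto \int h\, d\mu$ for some $h \in C_b(\Rset^d)$, we obtain $\int h\, d\QQ > \sup_{\PP' \in \Wball{\e}(\PP)} \int h\, d\PP'$. By Lemma~\ref{lemma:S_e_W_infty_equivalence}, the right-hand side equals $\int S_\e(h)\, d\PP$, so $\e$ is excluded from the defining set of $\e^*$, giving $W_\infty(\PP,\QQ) \leq \e^*$. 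The main obstacle is the weak closedness of $\Wball{\e}(\PP)$, which is where Prokhorov and Portmanteau do the real work; once this is in hand, everything else reduces to the standard separation theorem combined with the previously established Lemma~\ref{lemma:S_e_W_infty_equivalence}.
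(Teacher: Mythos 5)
Your proof takes a genuinely different route from the paper's. Both of you dispatch the easy direction the same way, via couplings supported on $\Delta_\e$ (the paper funnels this through Lemma~\ref{lemma:S_e_W_infty_equivalence}, you do it directly — same content). For the hard direction, however, the paper invokes Strassen's theorem together with Urysohn's lemma: it shows $\sup_{A \text{ closed}}\QQ(A)-\PP(A^\e) \le \sup_{h\in C_b}\int h\,d\QQ - \int S_\e(h)\,d\PP$ by approximating the indicator of a closed set and its $\e$-fattening with Urysohn functions, and then Strassen converts control of that supremum into existence of a coupling supported on $\Delta_\e$. You instead run a Hahn--Banach separation argument against the ball $\Wball{\e}(\PP)$, which amounts to re-deriving the relevant piece of Strassen from scratch; in exchange you get to reuse Lemma~\ref{lemma:S_e_W_infty_equivalence} to identify the supporting functional. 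Both are legitimate, and your version is arguably more self-contained since it doesn't black-box Strassen, at the cost of needing a compactness/closedness argument that Strassen's theorem packages for you.

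There is a gap you should close, though: Hahn--Banach separation of a point from $\Wball{\e}(\PP)$ in the locally convex space $(\cM(\Rset^d), \sigma(\cM, C_b))$ requires $\Wball{\e}(\PP)$ to be topologically closed (or, equivalently here, compact), but your Prokhorov/Portmanteau argument only establishes \emph{sequential} closedness, and in a non-metrizable topological vector space these are not the same. The fix is to upgrade to compactness: the ball is uniformly tight (if $\PP(K^c)<\delta$ with $K$ compact, then any $\PP'\in\Wball{\e}(\PP)$ with coupling $\gamma$ on $\Delta_\e$ satisfies $\PP'((K^\e)^c)=\gamma(\Rset^d\times(K^\e)^c)\le \gamma(K^c\times\Rset^d)=\PP(K^c)<\delta$, and $K^\e$ is compact), bounded in total mass, and sequentially weakly closed by your argument; since the weak topology on uniformly tight families of finite positive Borel measures on a Polish space is metrizable, $\Wball{\e}(\PP)$ is in fact weakly compact, hence closed in $\cM(\Rset^d)$, and separation then applies. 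A minor cosmetic point: the Portmanteau inequality you wrote, $\gamma(\Delta_\e^c)\le\liminf_k\gamma_{n_k}(\Delta_\e^c)$, is the open-set form applied to $\Delta_\e^c$, not the closed-set form applied to $\Delta_\e$ as you state; either route gives $\gamma(\Delta_\e^c)=0$, but the attribution should match the inequality.
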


This observation will be central to proving a duality result. See Appendix~\ref{app:W_infty} for proofs of Lemmas~\ref{lemma:S_e_W_infty_equivalence} and~\ref{lemma:W_inf_integral_characterization}.

\section{Main Results and Outline of the Paper}
\subsection{Summary of Main Results}\label{sec:main_results_summary}
Our goal in this paper is to understand properties of the surrogate risk minimization problem $\inf_f R_\phi^\e$.
The starting point for our results is Lemma~\ref{lemma:S_e_W_infty_equivalence}, which implies that $\inf_f R_\phi^\e$ actually involves a $\inf$ followed by a $\sup$: 
    \[\inf_{f\text{ Borel}}R_\phi^\e(f)=\inf_{f\text{ Borel}}\sup_{\substack{\PP_0'\in \Wball \e(\PP_0)\\ \PP_1'\in \Wball \e (\PP_1)}} \int \phi\circ f d\PP_1'+\int \phi \circ -f d\PP_0'.\]
    We therefore obtain a lower bound on $\inf_f R_\phi^\e$ by swapping the $\sup$ and $\inf$ and recalling the definition of $C_\phi^*(\eta)=\inf_\alpha C_\phi(\eta,\alpha)$:
    \begin{align}
        \inf_{f\text{ Borel}}R_\phi^\e(f)
        &\geq \sup_{\substack{\PP_0'\in \Wball \e(\PP_0)\\ \PP_1'\in \Wball \e(\PP_1)}} \inf_{f\text{ Borel}}\int \phi\circ f d\PP_1'+\int \phi \circ -f d\PP_0'\nonumber\\
        &    =\sup_{\substack{\PP_0'\in \Wball \e(\PP_0)\\ \PP_1'\in \Wball \e(\PP_1)}}     \inf_{f\text{ Borel}}\int \frac{d\PP_1'}{d(\PP_0'+\PP_1')} \phi(f)+\left(1-\frac{d\PP_1'}{d(\PP_0'+\PP_1')}\right)\phi(-f) d(\PP_0'+\PP_1')\nonumber\\
    &\geq \sup_{\substack{\PP_0'\in \Wball \e(\PP_0)\\ \PP_1'\in \Wball \e(\PP_1)}} \int C_\phi^*\left(\frac{d\PP_1'}{d(\PP_0'+\PP_1')}\right)d(\PP_0'+\PP_1')\label{eq:weak_duality_last}
    \end{align}
If we define
    \begin{equation}\label{eq:dual_objective_def}
        \dl(\PP_0',\PP_1')=\int C_\phi^*\left(\frac{d\PP_1'}{d(\PP_0'+\PP_1')}\right)d(\PP_0'+\PP_1'),
    \end{equation}
then we have shown
    \begin{equation}\label{eq:weak_duality_original}
        \inf_{f\text{ Borel}} R_\phi^\e(f)\geq \sup_{\substack{\PP_0'\in \Wball \e(\PP_0)\\ \PP_1'\in \Wball \e(\PP_1)}} \dl(\PP_0',\PP_1').    
    \end{equation}
    This statement is a form of weak duality.

	When the surrogate adversarial risk is replaced by the standard adversarial classification risk, 
    \citet{PydiJog2021} proved that the analogue of \eqref{eq:weak_duality_original} is actually an equality, so that strong duality holds for the adversarial classification problem.
    Concretely, by analogy to \eqref{eq:dual_objective_def}, consider
        \[\bar R(\PP_0',\PP_1')=\int C^*\left( \frac{d\PP_1'}{d(\PP_0'+\PP_1')}\right)d(\PP_0'+\PP_1').\]
    Let $\mu$ be the Lebesgue measure and let $\cL_\mu(\Rset^d)$ be the Lebesgue $\sigma$-algebra. Then define
    \begin{equation}\label{eq:tWball_def}
        \tWball \e (\QQ)=\{\QQ'\colon W_\infty(\QQ,\QQ')\leq \e,\QQ' \text{ a measure on $(\Rset^d, \cL_\mu(\Rset^d))$} \}.        
    \end{equation}
\citet{PydiJog2021} show the following.
            \begin{theorem}[{\citealp[Theorem 7.1]{PydiJog2021}}]\label{th:minimax_classification}
        Assume that $\PP_0,\PP_1$ are absolutely continuous with respect to the Lebesgue measure $\mu$. Then
	        \begin{equation}\label{eq:minimax_classification}
	            \inf_{f\text{ Lebesgue}}   R^\e(f)
             =\sup_{\substack{\PP_0'\in \tWball \e(\PP_0)\\ \PP_1'\in \tWball \e (\PP_1) }}  \bar R (\PP_0',\PP_1')    
	        \end{equation}
	        and furthermore equality is attained at some Lebesgue measurable $\hat f$ and $\hat \PP_1,\hat \PP_0$. 

	        Additionally, $\hat \PP_i=\PP_i\circ \varphi_i^{-1}$ for some universally measurable $\varphi_i$ with $\|\varphi_i(\bx)-\bx\|\leq \e$, $\sup_{\|\by-\bx\|\leq \e}\one_{\hat f(\by)\leq 0}=\one_{\hat f(\varphi_1(\bx))\leq 0}$ $\PP_1$-a.e., and $\sup_{\|\by-\bx\|\leq \e}\one_{\hat f(\by)> 0}=\one_{\hat f(\varphi_0(\bx))>0}$ $\PP_0$-a.e.
    \end{theorem}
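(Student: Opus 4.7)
The plan is to mirror the chain \eqref{eq:weak_duality_last}--\eqref{eq:weak_duality_original} specialized to the $0/1$ loss, and then close the duality gap by an explicit primal/dual construction. For weak duality, I would invoke the set-analogue of Lemma~\ref{lemma:S_e_W_infty_equivalence} (proved as Lemma~5.1 of \citet{PydiJog2021}) to rewrite $R^\e(f)$ as a supremum of linear functionals in $(\PP_0', \PP_1') \in \Wball \e(\PP_0) \times \Wball \e(\PP_1)$, swap the outer $\inf_f$ with the $\sup$, and perform pointwise minimization of $\eta' \one_{f\leq 0} + (1-\eta')\one_{f>0}$ in $\eta' = d\PP_1'/d(\PP_0'+\PP_1')$ to produce the Bayes cost $C^*(\eta') = \min(\eta', 1-\eta')$. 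This yields $\inf_f R^\e(f) \geq \sup \bar R(\PP_0', \PP_1')$.

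To establish attainment on the dual side, I would argue that $\tWball \e(\PP_i)$ is weakly compact: tightness follows from the fixed mass and containment of supports in the $\e$-enlargement of a compact set carrying most of $\PP_i$, while closure follows from weak lower semicontinuity of $W_\infty$. The functional $\bar R$ is weakly upper semicontinuous, since $\bar R(\PP_0', \PP_1') = \tfrac{1}{2}(\PP_0'(\Rset^d) + \PP_1'(\Rset^d) - \|\PP_1' - \PP_0'\|_{TV})$ and total variation is weakly lower semicontinuous. Hence a maximizer $(\hat\PP_0, \hat\PP_1)$ exists.

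Given the dual optimizer, set $\hat\eta = d\hat\PP_1/d(\hat\PP_0+\hat\PP_1)$ and $\hat f = \sgn(\hat\eta - 1/2)$; this is a pointwise minimizer of $C(\hat\eta, \cdot)$, so $\int [\hat\eta \one_{\hat f \leq 0} + (1-\hat\eta)\one_{\hat f > 0}] d(\hat\PP_0+\hat\PP_1) = \bar R(\hat\PP_0, \hat\PP_1)$. Because $\PP_i \ll \mu$ and $W_\infty(\PP_i, \hat\PP_i) \leq \e$, an $L^\infty$ Monge-type result (a measurable-selection argument applied to the disintegration of a $W_\infty$-optimal coupling on $\Delta_\e$, using atomlessness of the source to upgrade a coupling to a transport map) supplies universally measurable $\varphi_i$ with $\|\varphi_i(\bx) - \bx\| \leq \e$ and $\hat\PP_i = \PP_i \circ \varphi_i^{-1}$. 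Substituting into $R^\e(\hat f)$ and using that $S_\e(\one_A)(\bx) \geq \one_A(\varphi_i(\bx))$ pointwise gives
\[R^\e(\hat f) \geq \int \one_{\hat f \leq 0}(\varphi_1(\bx)) d\PP_1(\bx) + \int \one_{\hat f > 0}(\varphi_0(\bx)) d\PP_0(\bx) = \int \one_{\hat f \leq 0} d\hat\PP_1 + \int \one_{\hat f > 0} d\hat\PP_0 = \bar R(\hat\PP_0, \hat\PP_1).\]
Combined with weak duality, this forces $\inf_f R^\e(f) = R^\e(\hat f) = \bar R(\hat\PP_0, \hat\PP_1)$ and, simultaneously, forces every intermediate inequality to be an equality. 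Equality of the $\PP_1$-integrals of $S_\e(\one_{\hat f \leq 0})$ and $\one_{\hat f \circ \varphi_1 \leq 0}$, together with the pointwise domination, upgrades to pointwise equality $\PP_1$-a.e.\ (and analogously for $\PP_0$), yielding the complementary-slackness identities.

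The main obstacle is the measurable-selection step together with the measurability bookkeeping. The function $S_\e(\one_A)$ is only universally measurable in general, so the hypothesis $\PP_i \ll \mu$ is essential in two ways: it pins down $\hat\eta$ as a genuine Radon-Nikodym derivative and it permits the Monge-type reduction of a $W_\infty$-coupling to a transport map $\varphi_i$. One must also verify that $\hat f \circ \varphi_i$ is $\PP_i$-measurable, which holds once $\varphi_i$ is universally measurable and $\PP_i$-null sets are handled via the Lebesgue completion in the definition \eqref{eq:tWball_def} of $\tWball \e$.
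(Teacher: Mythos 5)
This theorem is quoted from \citet[Theorem~7.1]{PydiJog2021}; the paper does not reprove it but instead generalizes it to surrogate losses (Theorems~\ref{th:strong_duality}--\ref{th:existence_primal}) via Fenchel--Rockafellar applied to the convex relaxation $\Theta$. Your proposal aims at a direct argument. The weak-duality step and the dual-existence step are sound: the identity $\bar R(\PP_0',\PP_1')=\tfrac12\bigl(\PP_0'(\Rset^d)+\PP_1'(\Rset^d)-\|\PP_1'-\PP_0'\|_{TV}\bigr)$ is correct, and weak upper semicontinuity plus tightness of $\tWball\e$ do yield a dual maximizer. But the duality gap is never closed.

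The chain you write gives $R^\e(\hat f)\geq\bar R(\hat\PP_0,\hat\PP_1)$, and this is the \emph{wrong} direction. Weak duality already gives $\inf_f R^\e(f)\geq\bar R(\hat\PP_0,\hat\PP_1)$, and trivially $R^\e(\hat f)\geq\inf_f R^\e(f)$; stacking three inequalities all pointing the same way forces nothing to collapse. What you actually need is $R^\e(\hat f)\leq\bar R(\hat\PP_0,\hat\PP_1)$, equivalently the slackness identity $\int S_\e(\one_{\hat f\leq 0})\,d\PP_1=\int\one_{\hat f\leq 0}\,d\hat\PP_1$ and its $\PP_0$ counterpart (the $0/1$-loss analogue of \eqref{eq:sup_comp_slack}). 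The pointwise domination $S_\e(\one_A)(\bx)\geq\one_A(\varphi_1(\bx))$ cannot deliver this; you would need $\varphi_1(\bx)$ to be $\PP_1$-a.e.\ a \emph{maximizer} of $\by\mapsto\one_{\hat f(\by)\leq 0}$ over $\ov{B_\e(\bx)}$, and this does not follow merely from $\varphi_1$ being a transport map for some $W_\infty$-optimal coupling of $\PP_1$ with a $\bar R$-maximizing $\hat\PP_1$. Proving that a dual optimizer is supported on the adversary's optimal attack against the Bayes rule of its own $\hat\eta$ is precisely where the work is, and it is skipped. (It is also why the paper's own proof of the general result routes through $\Theta$ and Lemmas~\ref{lemma:complimentary_slackness_restricted}--\ref{lemma:approximate_complimentary_slackness} rather than attempting this directly.)

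The Monge-map step compounds the gap. Atomlessness of $\PP_i$ does not let you ``upgrade'' an arbitrary $\Delta_\e$-supported coupling to a transport map; $W_\infty$-Monge existence requires genuine structure (e.g.\ strictly convex norms as in \citet{Jylha15}, with the $\ell_\infty$ case explicitly flagged as open in the paper after Theorem~\ref{th:dual_existence}). More to the point, in the theorem's conclusion $\varphi_1$ is required to satisfy $\sup_{\|\by-\bx\|\leq\e}\one_{\hat f(\by)\leq 0}=\one_{\hat f(\varphi_1(\bx))\leq 0}$ $\PP_1$-a.e., so $\varphi_i$ is a universally measurable selection of the \emph{attack} (via a selection theorem in the spirit of Theorem~\ref{th:meas_selection}), not a $W_\infty$-optimal transport map. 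That suggests the opposite order of construction from yours: obtain a primal minimizer $\hat f$, select $\varphi_i$ realizing $S_\e(\one_{\hat f\leq 0})$ and $S_\e(\one_{\hat f>0})$, and then verify that $\hat\PP_i:=\PP_i\circ\varphi_i^{-1}$ is dual feasible and optimal. Reversing the order---dual maximizer first, $\hat f$ from $\hat\eta$, then a $W_\infty$-OT map---leaves the key slackness equality unproved.
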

     This is a foundational result in the theory of adversarial classification, but it leaves an open question crucial in applications:
     Does the strong duality relation extend to surrogate risks and to general measures?
     In this work, we answer this question in the affirmative.
     
    We start by proving the following:
    \begin{restatable}[Strong Duality]{theorem}{thstrongduality}\label{th:strong_duality} 
    Let $\PP_0,\PP_1$ be finite Borel measures. 
    Then
    \begin{equation}
    \label{eq:minimax_surrogate}
	            \inf_{f\text{ Borel}}   \prm(f)
             =\sup_{\substack{\PP_0'\in \Wball \e(\PP_0)\\ \PP_1'\in \Wball \e (\PP_1) }}  \dl (\PP_0',\PP_1').    
	        \end{equation}
        
    \end{restatable}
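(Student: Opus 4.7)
Weak duality \eqref{eq:weak_duality_original} has already been established; only the reverse inequality $\inf_{f \text{ Borel}} R_\phi^\e(f) \leq \sup \dl$ remains. Writing $L(f, \PP_0', \PP_1') = \int \phi(f)\, d\PP_1' + \int \phi(-f)\, d\PP_0'$, Lemma~\ref{lemma:S_e_W_infty_equivalence} gives $R_\phi^\e(f) = \sup_{(\PP_0',\PP_1')} L(f, \cdot, \cdot)$ (the supremum being over $\Wball\e(\PP_0) \times \Wball\e(\PP_1)$), while the derivation producing \eqref{eq:weak_duality_last} gives $\dl(\PP_0',\PP_1') = \inf_f L(\cdot, \PP_0', \PP_1')$. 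Strong duality is therefore equivalent to the existence of a saddle point of $L$. My plan is to extract a dual maximizer by compactness, build an explicit primal candidate from it, and then verify the saddle-point property.

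\textbf{Dual maximizer and primal candidate.} Every element of $\Wball\e(\PP_i)$ has mass $\PP_i(\Rset^d)$ and support in the closed thickening $(\supp \PP_i)^\e$, so by tightness and Prokhorov the product $\Wball\e(\PP_0) \times \Wball\e(\PP_1)$ is weakly sequentially compact; weak closedness of $W_\infty$-balls makes it weakly compact. The functional $\dl = \inf_f L$ is weakly upper semicontinuous as an infimum of the weakly continuous linear functionals $L(f,\cdot,\cdot)$ indexed by bounded continuous $f$ (a truncation $\phi \circ f \wedge K$ followed by passing $K \to \infty$ handles unbounded cases such as the exponential loss). A dual maximizer $(\hat\PP_0, \hat\PP_1)$ therefore exists. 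Set $\hat\eta := d\hat\PP_1/d(\hat\PP_0 + \hat\PP_1)$ and $\hat f := \alpha_\phi \circ \hat\eta$; by Lemma~\ref{lemma:C_phi_minimizers} this $\hat f$ is measurable and pointwise minimizes $C_\phi(\hat\eta, \cdot)$, giving $L(\hat f, \hat\PP_0, \hat\PP_1) = \int C_\phi^*(\hat\eta)\, d(\hat\PP_0 + \hat\PP_1) = \dl(\hat\PP_0, \hat\PP_1)$.

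\textbf{Saddle-point verification and main obstacle.} It remains to show that $(\hat\PP_0, \hat\PP_1)$ is also a best response to $\hat f$, i.e., $\sup_{(\PP_0', \PP_1')} L(\hat f, \PP_0', \PP_1') = L(\hat f, \hat\PP_0, \hat\PP_1)$; combined with Lemma~\ref{lemma:S_e_W_infty_equivalence} this yields $R_\phi^\e(\hat f) = \dl(\hat\PP_0, \hat\PP_1)$ and closes the duality. Concretely, one must exhibit couplings $\gamma_i \in \Pi(\PP_i, \hat\PP_i)$ with $\supp \gamma_i \subseteq \Delta_\e$ whose conditionals pick adversarial maximizers of $\phi \circ \hat f$ and $\phi \circ -\hat f$ within each $\e$-ball. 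I expect this to be the chief difficulty, because the envelope inequality $\dl(\PP') \leq L(\hat f, \PP')$ supplied by the superdifferential structure at the dual optimum yields only a lower bound on $L(\hat f, \cdot)$, whereas strong duality demands a matching upper bound. I would attack this first for the exponential loss $\phi = \psi$, where convexity of $\psi$ makes $L$ jointly convex in $f$ and concave in $(\PP_0', \PP_1')$ so that Sion's minimax theorem supplies the saddle point directly, and then bootstrap to general $\phi$ satisfying Assumption~\ref{as:phi} via a measurable-selection argument (using the universal-measurability apparatus of Appendix~\ref{app:meas}) combined with the concavity of $\dl$, which itself follows from $C_\phi^*$ being an infimum of functions linear in $\eta$.
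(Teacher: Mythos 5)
Your high-level strategy (extract a dual maximizer by compactness, build a primal candidate from its density $\hat\eta$, then verify a saddle point) is sound in outline, but two of the steps you identify as "to be handled" are in fact the genuine obstructions, and the tools you gesture at do not close them. First, the construction $\hat f = \alpha_\phi \circ \hat\eta$ with $\hat\eta = d\hat\PP_1/d(\hat\PP_0+\hat\PP_1)$ is ill-posed for your purposes: a Radon--Nikodym derivative is only defined $(\hat\PP_0+\hat\PP_1)$-a.e., while evaluating $R_\phi^\e(\hat f)$ requires taking $S_\e(\phi\circ\hat f)$, a supremum over $\e$-balls that probes $\hat f$ on a set of $(\hat\PP_0+\hat\PP_1)$-measure zero. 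The paper flags this precisely (discussion before Theorem~\ref{th:existence_primal}) and devotes Lemmas~\ref{lemma:C_phi_transform}--\ref{lemma:primal_minimizers_hat_eta} to constructing a Borel $\hat\eta$ defined on all of $(\supp\PP)^\e$ that is merely \emph{a version} of the density; this is where the real work is, and it is absent from your plan.

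Second, Sion's minimax theorem is not applicable as you use it. Sion requires $L(f,\cdot)$ to be upper semicontinuous in $(\PP_0',\PP_1')$ with respect to weak convergence for each fixed $f$, but $\int \phi\circ f\, d\PP_1'$ is not weakly u.s.c.\ when $f$ is merely Borel (the integrand $\phi\circ f$ need not be u.s.c.\ or bounded). This is exactly why the paper does \emph{not} attempt a direct minimax in $f$: it replaces $(\phi\circ f,\ \phi\circ -f)$ by a pair $(h_1,h_0)$ ranging over the convex constraint set $S_\phi$ and applies Fenchel--Rockafellar on the space $C_b(K^\e)\times C_b(K^\e)$, where Riesz representation identifies the dual as measures and continuity is available. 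The extension from continuous $(h_0,h_1)$ back to all of $S_\phi$ (Lemma~\ref{lemma:Xi_star_pre}) requires a Lipschitz approximation argument, not just a truncation. Finally, your "bootstrap to general $\phi$" is where the paper's most delicate step lives: it uses strict concavity and the singleton superdifferential of $C_\psi^*$ (Lemma~\ref{lemma:exponential_loss}) to force $h_1^*,h_0^*$ to be $\psi\circ\pm\alpha_\psi(\hat\eta)$, then exploits strict monotonicity of $\psi\circ\alpha_\psi$ to propagate the coupling-support conditions (Lemma~\ref{lemma:gamma_supp}) to arbitrary $\phi$ via monotonicity of $\alpha_\phi$. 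A generic measurable-selection argument does not substitute for this chain of structural lemmas.
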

	When $\e = 0$, we recover the fundamental characterization of the minimum risk for standard (non-adversarial) classification in~\eqref{eq:R_phi_min}.
    Theorem~\ref{th:strong_duality} can be rephrased as
    \begin{equation}\label{eq:minimax_phi_explicit}
        \inf_{f\text{ Borel}} \sup_{\substack{\PP_0'\in \Wball \e(\PP_0)\\ \PP_1'\in \Wball \e (\PP_1) }} \hat R_\phi(f,\PP_0',\PP_1')= \sup_{\substack{\PP_0'\in \Wball \e(\PP_0)\\ \PP_1'\in \Wball \e (\PP_1) }} \inf_{f\text{ Borel}}\hat R_\phi(f,\PP_0',\PP_1')
    \end{equation}
        
    where 
    \[\hat R_\phi(f,\PP_0',\PP_1')=\int \phi(f)d\PP_1'+\int \phi(-f) d\PP_0'\]
                    
                    As discussed in \citet{PydiJog2021}, this result has an appealing game theoretic interpretation:
                    adversarial learning with a surrogate risk can be though of as a zero-sum game between the learner who selects a function $f$ and the adversary who chooses perturbations through $\PP_0'$ and $\PP_1'$. Furthermore, the player to pick first does not have an advantage. 

Additionally, \eqref{eq:minimax_phi_explicit} suggest that training adversarially robust classifiers could be accomplished by optimizing over primal functions $f$ and dual distributions $\PP_0',\PP_1'$ \emph{simultaneously}.

    

		A consequence of Theorem~\ref{th:strong_duality} is the following complementary slackness conditions for optimizers $f^*, \PP_0^*,\PP_1^*$:        
            \begin{restatable}[Complimentary Slackness]{theorem}{thcomplimentaryslackness}
    \label{th:complimentary_slackness}
        The function $f^*$ is a minimizer of $R_\phi^\e$ and $(\PP_0^*,\PP_1^*)$ is a maximizer of $\bar R_\phi$ over the $W_\infty$ balls around $\PP_0$ and $\PP_1$ iff the following hold: 
    \begin{enumerate}[label=\arabic*)]
        \item\label{it:sup_assumed_precondition} 

        \begin{equation}\label{eq:sup_comp_slack}
            \int \phi \circ f^* d\PP_1^*=\int S_\e(\phi(f^*))d\PP_1\quad \text{and} \quad \int \phi \circ -f^* d\PP_0^*=\int S_\e(\phi(f^*))d\PP_0 
        \end{equation}
        \item \label{it:comp_slack_equation} If we define $\PP^*=\PP_0^*+\PP_1^*$ and $\eta^*=d\PP_1^*/d\PP^*$, then 
    \begin{equation}\label{eq:complimentary_slackness_necessary}
        \eta^*(\bx)\phi(f^*(\bx))+(1-\eta^*(\bx))\phi(-f^*(\bx))=C_\phi^*(\eta^*(\bx))\quad \PP^*\text{-a.e.}    
    \end{equation}
    \end{enumerate}

    \end{restatable}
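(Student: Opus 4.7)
The plan is to derive both directions directly from strong duality (Theorem~\ref{th:strong_duality}) together with Lemma~\ref{lemma:S_e_W_infty_equivalence}, by examining a chain of four quantities whose endpoints coincide at optimality. Specifically, for any $f$ and any $(\PP_0',\PP_1')$ with $\PP_i'\in \Wball{\e}(\PP_i)$, define
\[
I_1 = R_\phi^\e(f),\quad I_2 = \int \phi\circ f\,d\PP_1' + \int \phi\circ -f\,d\PP_0',\quad I_3 = \int C_\phi(\eta',f)\,d\PP',\quad I_4 = \bar R_\phi(\PP_0',\PP_1'),
\]
where $\PP'=\PP_0'+\PP_1'$ and $\eta'=d\PP_1'/d\PP'$. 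Lemma~\ref{lemma:S_e_W_infty_equivalence} gives $I_1\ge I_2$, the identity $I_2=I_3$ is the rewriting used to establish~\eqref{eq:standard_phi_risk_eta}, and the definition of $C_\phi^*$ gives $I_3\ge I_4$. So $I_1\ge I_2 \ge I_3 \ge I_4$ always holds (this is exactly the weak-duality calculation from Section~\ref{sec:main_results_summary}).

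For the forward implication, suppose $f^*$ minimizes $R_\phi^\e$ and $(\PP_0^*,\PP_1^*)$ maximizes $\bar R_\phi$ over the $W_\infty$-balls. By Theorem~\ref{th:strong_duality}, $I_1=I_4$ when evaluated at $(f^*,\PP_0^*,\PP_1^*)$, forcing equality throughout the chain. The equality $I_1=I_2$ unpacks, via Lemma~\ref{lemma:S_e_W_infty_equivalence}, into
\[
\int \phi\circ f^*\,d\PP_1^* = \sup_{\PP_1'\in \Wball{\e}(\PP_1)}\int \phi\circ f^*\,d\PP_1' = \int S_\e(\phi\circ f^*)\,d\PP_1,
\]
and the analogous statement on the $\PP_0$ side, which is exactly condition~\ref{it:sup_assumed_precondition}. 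The equality $I_3=I_4$ means $\int [C_\phi(\eta^*,f^*)-C_\phi^*(\eta^*)]\,d\PP^*=0$; since the integrand is pointwise nonnegative, it vanishes $\PP^*$-a.e., yielding condition~\ref{it:comp_slack_equation}.

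For the reverse implication, assume $f^*,\PP_0^*,\PP_1^*$ satisfy~\ref{it:sup_assumed_precondition} and~\ref{it:comp_slack_equation}. Reading the chain backwards, condition~\ref{it:sup_assumed_precondition} upgrades $I_1\ge I_2$ to the equality $I_1=I_2$ (the two supremum identities from~\eqref{eq:sup_comp_slack} add up), and condition~\ref{it:comp_slack_equation} upgrades $I_3\ge I_4$ to $I_3=I_4$ pointwise and hence after integration. Combining with the always-valid $I_2=I_3$ gives $R_\phi^\e(f^*) = \bar R_\phi(\PP_0^*,\PP_1^*)$. Since weak duality~\eqref{eq:weak_duality_original} sandwiches any value of $R_\phi^\e$ above $\inf R_\phi^\e \ge \sup \bar R_\phi$ above any value of $\bar R_\phi$, this common value must equal both the infimum and the supremum, so $f^*$ and $(\PP_0^*,\PP_1^*)$ are the claimed optimizers.

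No step poses a real obstacle: the substantive content is hidden in Theorem~\ref{th:strong_duality} and Lemma~\ref{lemma:S_e_W_infty_equivalence}, and the proof itself is the standard complementary-slackness argument of tracing equality through a chain of inequalities whose endpoints are pinned together by strong duality. The only mild subtlety to double-check is measurability and integrability so that the rewriting $I_2=I_3$ via $\eta^*=d\PP_1^*/d\PP^*$ is valid even when $\phi(f^*)$ may take the value $+\infty$; this is handled by the conventions on extended-real-valued integrands discussed after~\eqref{eq:standard_optimal_f} and by the fact that $\phi\ge 0$ so no $\infty-\infty$ indeterminacy arises.
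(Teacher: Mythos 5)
Your proposal is correct. The argument is the same chain-of-inequalities technique the paper uses in the proof of Lemma~\ref{lemma:complimentary_slackness_restricted} (Appendix~\ref{app:complimentary_slackness}): write a telescoping chain $I_1 \geq I_2 \geq I_3 \geq I_4$, pin the two endpoints together via strong duality, and read off each forced equality as one of the two complementary-slackness conditions; then for the converse, observe that conditions~\ref{it:sup_assumed_precondition} and~\ref{it:comp_slack_equation} give $I_1 = I_4$ directly, which weak duality upgrades to optimality of both sides. The only difference is that the paper runs this chain for the relaxed functional $\Theta$ over $S_\phi$ (obtaining Lemma~\ref{lemma:complimentary_slackness_restricted}) and then relies on Lemma~\ref{lemma:minimizer_construction} to transfer the conclusion back to $R_\phi^\e$ by identifying $h_1^*=\phi\circ f^*$, $h_0^*=\phi\circ(-f^*)$ and using $\inf_f R_\phi^\e = \inf_{S_\phi}\Theta$, whereas you run the chain directly with $I_1 = R_\phi^\e(f)$ and invoke Theorem~\ref{th:strong_duality} in place of Lemma~\ref{lemma:duality_theta_all}. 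Since Theorem~\ref{th:strong_duality} is already established by the time this theorem appears, your route is a bit more direct and makes explicit the transfer step that the paper leaves to the reader. Your closing remark on why $I_2 = I_3$ is unproblematic despite $\phi\circ f^*$ possibly taking the value $+\infty$ (nonnegativity of $\phi$ rules out $\infty - \infty$) is the right sanity check.
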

    
    This theorem implies that every minimizer $f^*$ of $\prm$ and every maximizer $(\PP_0^*,\PP_1^*)$ of $\dl$ forms a primal-dual pair. 
    The condition \eqref{eq:sup_comp_slack} states that every maximizer of $\dl$ is an optimal adversarial attack on $f^*$ while the condition \eqref{eq:complimentary_slackness_necessary} states that every minimizer $f^*$ of $\prm$ also minimizes the conditional risk $C_\phi(\eta^*,\cdot)$ under the distribution of optimal adversarial attacks. In other words, $f^*$ minimizes the \emph{standard} surrogate risk with respect to the optimal adversarially perturbed distributions. Explicitly: \eqref{eq:complimentary_slackness_necessary} implies that every minimizer $f^*$ minimizes the loss $\hat R_\phi(f,\PP_0^*,\PP_1^*)=\int C(\eta^*(\bx),f(\bx))d\PP^*$ in a pointwise manner $\PP^*$-a.e. This fact is the relation \eqref{eq:standard_optimal_f} with respect to the measures $\PP_0^*,\PP_1^*$ that maximize the dual $\dl$.



    This interpretation of Theorems~\ref{th:strong_duality} and~\ref{th:complimentary_slackness} shed light on the phenomenon of transfer attacks. 
  These theorems suggests that adversarial examples are a property of the data distribution rather than a specific model. Later results in the paper even show that there are maximizers of $\dl$ that are independent of the choice of loss function $\phi$ (see Lemma~\ref{lemma:minimizer_construction}).  Theorem~\ref{th:complimentary_slackness} specifically states that every maximizer of $\dl$ is actually an optimal adversarial attack on \emph{every} minimizer of $\prm$. Notably, this statement is \emph{indepent of the choice of minimizer of $\prm$.} Because neural networks are highly expressive model classes, one would hope that some neural net could achieve adversarial error close to $\inf_f R^\e_{\phi}(f)$. If $f^*$ is a minimizer of $R_\phi^\e$ and $g$ is a neural net with $R^\e_{\phi}(g)\approx R^\e_\phi(f^*)$, one would expect that an optimal adversarial attack against $f^*$ would be a successful attack on $g$ as well. Notice that in this discussion, the adversarial attack is independent of the neural net $g$. A method for calculating these optimal adversarial attacks is an open problem.

    Finally, to demonstrate that Theorem~\ref{th:complimentary_slackness} and the preceding discussion is non-vacuous, we prove the existence of primal and dual optimizers along with results that elaborate on their structure. 
 
    \begin{restatable}{theorem}{thdualexistence}\label{th:dual_existence}
    Let $\phi$ be a lower-semicontinuous loss function. Then there exists a maximizer $(\PP_0^*,\PP_1^*)$ to $\bar R_\phi$ over the set $\Wball \e(\PP_0)\times \Wball \e(\PP_1)$. 
\end{restatable}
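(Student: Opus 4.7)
The plan is to apply the extreme value theorem after establishing (i) weak compactness of the feasible set $K=\Wball{\e}(\PP_0)\times\Wball{\e}(\PP_1)$ in the topology of weak convergence of finite Borel measures on $\Rset^d$, and (ii) upper semicontinuity of $\bar R_\phi$ on $K$.

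\paragraph{Compactness.}
Every $\QQ'\in\Wball{\e}(\PP_i)$ has $\QQ'(\Rset^d)=\PP_i(\Rset^d)$, since any witnessing coupling $\gamma\in\Pi(\PP_i,\QQ')$ preserves marginal mass. For tightness, given $\delta>0$ I would pick a compact $K'\subset\Rset^d$ with $\PP_i(\Rset^d\setminus K')<\delta$ and a coupling $\gamma$ concentrated on $\Delta_\e$; then
\[\QQ'(\Rset^d\setminus (K')^\e)\leq \gamma((\Rset^d\setminus K')\times\Rset^d)=\PP_i(\Rset^d\setminus K')<\delta,\]
and $(K')^\e$ is compact. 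Prokhorov's theorem yields weak precompactness. Weak closedness follows from Lemma~\ref{lemma:W_inf_integral_characterization}: if $\QQ_n\rightharpoonup\QQ$ with $W_\infty(\PP_i,\QQ_n)\leq\e$, then for every $h\in C_b(\Rset^d)$, $\int h\,d\QQ=\lim_n\int h\,d\QQ_n\leq \int S_\e(h)\,d\PP_i$, forcing $W_\infty(\PP_i,\QQ)\leq\e$.

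\paragraph{Upper semicontinuity.}
The key observation is that $C_\phi^*(\eta)=\inf_\alpha C_\phi(\eta,\alpha)$ is the infimum of linear functions of $\eta$, hence is concave and continuous on $[0,1]$ with $C_\phi^*(0)=C_\phi^*(1)=0$. Its positively $1$-homogeneous extension
\[\tilde C(a,b)=(a+b)\,C_\phi^*\!\left(\tfrac{b}{a+b}\right)\quad\text{for}\quad a+b>0,\qquad \tilde C(0,0)=0,\]
is then concave, continuous, nonnegative, and $1$-homogeneous on $[0,\infty)^2$, and writing $\rho_i=d\PP_i'/d(\PP_0'+\PP_1')$ gives
\[\bar R_\phi(\PP_0',\PP_1') = \int \tilde C(\rho_0,\rho_1)\,d(\PP_0'+\PP_1'),\]
with the representation being independent of the chosen dominating measure by $1$-homogeneity. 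Since the total mass $(\PP_0'+\PP_1')(\Rset^d)=\PP_0(\Rset^d)+\PP_1(\Rset^d)$ is constant on $K$, the classical Reshetnyak upper semicontinuity theorem for $1$-homogeneous concave integrands against weakly convergent nonnegative vector measures yields upper semicontinuity of $\bar R_\phi$ on $K$.

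\paragraph{Main obstacle.}
The weak compactness step is routine. The subtle part is the upper semicontinuity, because $\bar R_\phi$ can be written as $\inf_f\hat R_\phi(f,\cdot,\cdot)$---an infimum of lower semicontinuous affine functionals in $(\PP_0',\PP_1')$---and such an infimum is not automatically upper semicontinuous. The $1$-homogeneous perspective reformulation circumvents this by recasting $\bar R_\phi$ as a Reshetnyak-type integral of a vector measure, for which usc is classical. With (i) and (ii) in hand, the extreme value theorem produces the desired maximizer $(\PP_0^*,\PP_1^*)$.
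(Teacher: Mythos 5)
Your proposal is correct and arrives at the theorem by a genuinely different route than the paper. The paper proves dual existence as part of Lemma~\ref{lemma:duality_theta_all}: it first establishes strong duality and dual existence for compactly supported $\PP_0,\PP_1$ via Fenchel--Rockafellar (Lemma~\ref{lemma:duality_compact}), then approximates general measures by restrictions $\PP_i^n = \PP_i|_{\ov{B_n(\zero)}}$, extracts a weak-$*$ limit of the compact-case maximizers $\PP_i^{n,*}$ using a Strassen-based tightness estimate and Prokhorov, and passes to the limit using upper semicontinuity of $\dl$ (Lemma~\ref{lemma:dl_upper_semicontinuous}). That usc lemma is itself obtained by a truncation argument from the infimum-over-$S_\phi\cap E$ representation of Lemma~\ref{lemma:Xi_star_pre}, which only holds directly on compact domains, and the paper explicitly flags the $C_0$-versus-$C_b$ issue that you also identified. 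You instead bypass Fenchel--Rockafellar entirely and go direct: weak compactness of $\Wball\e(\PP_0)\times\Wball\e(\PP_1)$ (mass preservation, tightness from the $\Delta_\e$ coupling, closedness from Lemma~\ref{lemma:W_inf_integral_characterization}, Prokhorov) plus usc of $\dl$ via Reshetnyak's continuity theorem applied to the perspective function $\tilde C$, then the extreme value theorem. The Reshetnyak route is slicker because it invokes an off-the-shelf result rather than reproving usc by hand; the paper's route has the virtue of reusing the same Fenchel--Rockafellar and truncation machinery that is already needed for the strong duality and complementary slackness statements.

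Two small points to tighten. First, be explicit that you are invoking Reshetnyak's \emph{continuity} theorem (which requires the integrand to be continuous and $1$-homogeneous, plus strict convergence $\mu_n\rightharpoonup^*\mu$ with $|\mu_n|(\Rset^d)\to|\mu|(\Rset^d)$), not the lower-semicontinuity version; the fixed-mass observation is precisely what delivers the total-variation convergence needed for strict convergence, and without it usc genuinely fails (e.g., $\mu_n=(\delta_n,\delta_n)\rightharpoonup^*0$ would violate usc of $\int\tilde C$). You noted the fixed mass, but the logical role it plays should be stated. Second, since $\Wball\e(\PP_i)$ is defined via an essential supremum, you should remark (or cite) that the infimum in $W_\infty$ is attained by a coupling supported in $\Delta_\e$---this follows from Prokhorov compactness of $\Pi(\PP_i,\QQ')$ and closedness of $\Delta_\e$---so the tightness estimate via $\gamma((\Rset^d\setminus K')\times\Rset^d)$ is licensed.
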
 

Theorem~3.5 of \citep{Jylha15} implies that when the norm $\|\cdot\|$ is strictly convex and $\PP_0,\PP_1$ are absolutely continuous with respect to Lebesgue measure, the optimal $\PP_0^*,\PP_1^*$ of Theorem~\ref{th:dual_existence} are induced by a transport map. Corollary~3.11 of \citep{Jylha15} further implies that these transport maps are continuous a.e. with respect to the Lebesgue measure $\mu$. As the $\ell_\infty$ metric is commonly used in practice, whether there exist maximizers of the dual of this type for non-strictly convex norms remains an attractive open problem.

In analogy with \eqref{eq:standard_optimal_f} and \eqref{eq:minimizer_from_eta} one would hope that due to the complimentary slackness condition \eqref{eq:complimentary_slackness_necessary}, one could define a minimizer in terms of the conditional $\eta^*(\bx)$. Notice, however, that as this quantity is only defined $\PP^*$-a.e., verifying the other complimentary slackness condition \eqref{eq:sup_comp_slack} would be a challenge. 
To circumvent this issue, we construct a function $\hat \eta : \Rset^d\to [0,1]$, defined on all of $\Rset^d$, to which we can apply \eqref{eq:minimizer_from_eta}.
Concretely, we show that $\alpha_\phi(\hat \eta(\bx))$ is always a minimizer of $\prm$, with $\alpha_\phi$ as defined in \eqref{eq:alpha_phi_definition}.
\begin{restatable}{theorem}{thexistenceprimal}\label{th:existence_primal}
      There exists a Borel function $\hat \eta:(\supp \PP)^\e\to \ov [0,1]$ for which $f^*(\bx)=\alpha_\phi(\hat \eta(\bx))$ is a minimizer of $\prm$ for any $\phi$ with $\alpha_\phi$ as in defined in \eqref{eq:alpha_phi_definition}. In particular, there exists a Borel minimizer of $\prm$.
\end{restatable}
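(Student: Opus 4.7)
The plan is to apply Theorem~\ref{th:complimentary_slackness} in reverse, starting from dual optimizers to construct a primal minimizer of the form $\alpha_\phi\circ\hat\eta$. By Theorem~\ref{th:dual_existence} fix maximizers $(\PP_0^*,\PP_1^*)$ of $\dl$ over $\Wball\e(\PP_0)\times\Wball\e(\PP_1)$, and set $\PP^*=\PP_0^*+\PP_1^*$ and $\eta^*=d\PP_1^*/d\PP^*$, a Borel function defined $\PP^*$-a.e. If we could simply take $\hat\eta=\eta^*$, then the slackness condition~\eqref{eq:complimentary_slackness_necessary} for $f^*=\alpha_\phi\circ\hat\eta$ would follow immediately from the defining property of $\alpha_\phi$ (Lemma~\ref{lemma:C_phi_minimizers}). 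The substance of the theorem lies in extending $\eta^*$ to a Borel function on all of $(\supp\PP)^\e$ so that the supremum condition~\eqref{eq:sup_comp_slack} also holds, uniformly over every $\phi$ obeying Assumption~\ref{as:phi}.

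By Lemma~\ref{lemma:S_e_W_infty_equivalence} there exist couplings $\gamma_i\in\Pi(\PP_i,\PP_i^*)$ supported on $\Delta_\e=\{(\bx,\bx'):\|\bx-\bx'\|\leq\e\}$. Since $\alpha_\phi$ is nondecreasing and $\phi$ is nonincreasing, to guarantee~\eqref{eq:sup_comp_slack} it suffices that for $\gamma_1$-a.e.\ $(\bx,\bx^*)$,
\[
\hat\eta(\bx^*)=\inf_{\|\bh\|\leq\e}\hat\eta(\bx+\bh),
\]
together with the dual equality (supremum in place of infimum) for $\gamma_0$-a.e.\ $(\bx,\bx^*)$. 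This reduction isolates a condition on $\hat\eta$ alone, free of $\phi$, so if such an extension exists the conclusion follows for every $\phi$ simultaneously.

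I would build $\hat\eta$ by setting it equal to $\eta^*$ on $\supp\PP^*$ and extending it to the rest of $(\supp\PP)^\e$ by explicit upper/lower envelopes of nearby values of $\eta^*$, chosen so that the infimum/supremum attainment conditions above hold automatically at every transported point. The chief obstacle is reconciling the infimum condition (from the $\PP_1$-side) and the supremum condition (from the $\PP_0$-side) into a single consistent function: a point $\bx^*\in\supp\PP^*$ may sit in the $\e$-neighborhood of both supports, forcing both the infimum and the supremum of $\hat\eta$ over the corresponding $\e$-balls to equal $\eta^*(\bx^*)$. Joint optimality of $(\PP_0^*,\PP_1^*)$ supplies the geometric structure needed for compatibility---in the regular case, via the continuous transport maps of~\citep{Jylha15} cited after Theorem~\ref{th:dual_existence}. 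Borel measurability is obtained by realizing $\hat\eta$ as an infimum/supremum of a Borel function over closed-fibered Borel sets, bypassing the universal $\sigma$-algebra of Theorem~\ref{th:meas}. With $\hat\eta$ in hand, Theorem~\ref{th:complimentary_slackness} certifies that $f^*=\alpha_\phi(\hat\eta)$ minimizes $\prm$, and $f^*$ is Borel because $\alpha_\phi$ is monotone and hence Borel.
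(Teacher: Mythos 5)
Your proposal correctly identifies the target structure of the proof: fix dual maximizers $(\PP_0^*,\PP_1^*)$, reduce the two complementary-slackness conditions to $\phi$-free statements about $\hat\eta$ via the monotonicity of $\alpha_\phi$ and of $\phi$, and then invoke Theorem~\ref{th:complimentary_slackness}. That reduction matches what the paper does in Lemma~\ref{lemma:minimizer_construction}. However, the proposal has a genuine gap at its central step: the construction of $\hat\eta$ is never actually carried out. You write that you ``would build $\hat\eta$ by setting it equal to $\eta^*$ on $\supp\PP^*$ and extending it\ldots by explicit upper/lower envelopes of nearby values of $\eta^*$, chosen so that the infimum/supremum attainment conditions hold automatically,'' and then explicitly name the chief obstacle (a single point may need to be simultaneously a ball-infimum on the $\PP_1$-side and a ball-supremum on the $\PP_0$-side) without resolving it. Appealing to the transport maps of \citet{Jylha15} does not close the gap, because that result requires a strictly convex norm and absolutely continuous marginals, whereas the theorem is stated for arbitrary finite Borel $\PP_0,\PP_1$ and arbitrary norms. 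Moreover, $\eta^*$ is only defined $\PP^*$-a.e., so even ``setting $\hat\eta=\eta^*$ on $\supp\PP^*$'' requires a choice of version, and nothing in the proposal controls that choice.

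The paper sidesteps this extension problem entirely rather than solving it head-on. It first proves existence of a minimizer $(h_0^*,h_1^*)$ of the convex relaxation $\Theta$ over $S_\psi$ for the exponential loss $\psi$ (Lemma~\ref{lemma:exponential_loss_existence}), using a minimizing sequence together with approximate complementary slackness (Lemma~\ref{lemma:approximate_complimentary_slackness}) and the strict concavity of $C_\psi^*$. The pair $(h_0^*,h_1^*)$ is already defined on all of $(\supp\PP)^\e$ — this is built into the definition of $S_\psi$ in \eqref{eq:S_def}. After applying the $C_\psi^*$-transform of Lemma~\ref{lemma:C_phi_transform}, one can solve the supergradient relation for $\hat\eta$ explicitly as $\hat\eta=(h_0^*)^2/\bigl(1+(h_0^*)^2\bigr)$ (Lemma~\ref{lemma:primal_minimizers_hat_eta}), which is automatically Borel and globally defined. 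Lemma~\ref{lemma:gamma_supp} then shows $\hat\eta=\eta^*$ $\PP^*$-a.e.\ and that the couplings $\gamma_0,\gamma_1$ are supported where $\hat\eta$ attains its ball-supremum resp.\ ball-infimum — exactly the conditions you wanted, but obtained as consequences of exact complementary slackness for $\psi$ rather than by an ad hoc extension of $\eta^*$. You should replace your envelope construction with this exponential-loss intermediary, or supply a genuine construction that works at the stated level of generality.
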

    In fact, we show that $\hat \eta$ is a version of the conditional derivative $d\PP_1^*/d\PP^*$, where $\PP_0^*,\PP_1^*$ are the measures which maximize $\dl$ independently of the choice of $\phi$ (see Lemma~\ref{lemma:gamma_supp}), as described in the discussion preceding Theorem~\ref{th:dual_existence}. The fact that the function $\hat \eta$ is independent of the choice of loss $\phi$ suggests that the minimizer of $\prm$ encodes some fundamental quality of the distribution $\PP_0,\PP_1$.

    Simultaneous work \citep{LiTelgarsky2023achieving} also proves the existence of a minimizer to the primal $\prm$ along with a statement on the structure of this minimizer. Their approach leverages prior results on the adversarial Bayes classifier to construct a minimizer to the adversarial surrogate risk.
    \subsection{Outline of Main Argument}\label{sec:proof_summary}
    
    The central proof strategy of this paper is to apply the Fenchel-Rockafellar duality theorem. This classical result relates the infimum of a convex functional with the supremum of a concave functional. One can argue that $\dl$ is concave (Lemma~\ref{lemma:Xi_star_pre} below); however, the primal $R_\phi^\e$ is not convex for non-convex $\phi$. Thus the Fenchel-Rockafellar theorem is applied to a convex relaxation $\Theta$ of the primal $\prm$.

    The remainder of the paper then argues that minimizing $\Theta$ is equivalent to minimizing $\prm$. Notice that the Fenchel-Rockafellar theorem actually implies the existence of dual maximizers. We show that that dual maximizers of $\bar R_\psi$ for $\psi(\alpha)=e^{-\alpha}$ satisfy certain nice properties that are \emph{independent} of the loss $\psi$. These properties then allow us to construct the function $\hat \eta$ present in Theorem~\ref{th:existence_primal} in addition to minimizers of $\Theta$ from the dual maximizers of $\bar R_\psi$, for any loss $\phi$. The construction of these minimizers guarantee that they minimize $\prm$ in addition to $\Theta$.
    \subsection{Paper Outline}
         Section~\ref{sec:duality_fundamental} proves strong duality and complimentary slackness theorems for $\dl$ and $\Theta$, the convex relaxation of $\prm$. Next, in Section~\ref{sec:primal_existence}, a version of the complimentary slackness result is used to prove the existence of minimizers to $\Theta$. Subsequently, Section~\ref{sec:reduction} shows the equivalence between $\Theta$ and $\prm$. 

        Appendix~\ref{app:meas} proves Theorem~\ref{th:meas} and further discusses universal measurability. Next, Appendix~\ref{app:W_infty} proves all the results about the $W_\infty$-norm used in this paper. Appendix~\ref{app:C_phi_minimizers} then defines the function $\alpha_\phi$  which is later used in the proof of several results. Appendices~\ref{app:dl_C_b}, ~\ref{app:duality_all},~\ref{app:complimentary_slackness}, and~\ref{app:S_e_liminf_limsup_swap} contain technical deferred proofs.

\section{A Duality Result for $\Theta$ and $\dl$}\label{sec:duality_fundamental} 
\subsection{Strong Duality}

The fundamental duality relation of this paper follows from employing the Fenchel-Rockafellar theorem in conjunction with the Riesz representation theorem, stated below for reference. See e.g. \citep{TopicsInOptimalTransportVillani} for more on this result.
        \begin{theorem}[Fenchel-Rockafellar Duality Theorem] \label{th:fenchel_rockafellar}
    Let $E$ be a normed vector space $E^*$ its topological dual and $\Theta,\Xi$ two convex functionals on $E$ with values in $\Rset\cup \{\infty\}$. Let $\Theta^*,\Xi^*$ be the Legendre-Fenchel transforms of $\Theta,\Xi$ respectively. Assume that there exists $z_0\in E$ such that
    \[\Theta(z_0)< \infty,\Xi(z_0)< \infty\]
    and that $\Theta$ is continuous at $z_0$. Then
    \begin{equation}\label{eq:fenchel_rockafellar}
        \inf_{z\in E}[\Theta(z)+\Xi(z)]=\sup_{z^*\in E^*}[-\Theta^*(z^*)-\Xi^*(-z^*)]
    \end{equation}
    and furthermore, the supremum on the right hand side is attained.
\end{theorem}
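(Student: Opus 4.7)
The plan is to reduce strong duality to a Hahn-Banach separation argument in the product space $E\times\Rset$. Weak duality is immediate from the definition of the Legendre-Fenchel transform: for any $z\in E$ and $z^*\in E^*$, the bounds $\Theta^*(z^*)\geq \langle z^*,z\rangle - \Theta(z)$ and $\Xi^*(-z^*)\geq -\langle z^*,z\rangle - \Xi(z)$ add to give $\Theta(z)+\Xi(z)\geq -\Theta^*(z^*)-\Xi^*(-z^*)$, so $\inf_z[\Theta(z)+\Xi(z)]\geq \sup_{z^*}[-\Theta^*(z^*)-\Xi^*(-z^*)]$. All the work lies in producing a single $z^*$ at which equality is attained, which will also give existence of a maximizer.

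First I would set $m:=\inf_{z\in E}[\Theta(z)+\Xi(z)]$. If $m=-\infty$ the conclusion follows from weak duality, so we may assume $m\in\Rset$ (the hypothesis $\Theta(z_0)+\Xi(z_0)<\infty$ excludes $m=+\infty$). Consider the convex subsets of $E\times\Rset$
\[
A = \{(z,t) : t > \Theta(z)\}, \qquad B = \{(z,t) : t \leq m - \Xi(z)\}.
\]
If $(z,t)\in A\cap B$ then $\Theta(z)<t\leq m-\Xi(z)$ would contradict the definition of $m$, so $A\cap B=\emptyset$. The continuity of $\Theta$ at $z_0$ yields a neighborhood $U$ of $z_0$ and $M\in\Rset$ with $\Theta<M$ on $U$, so $U\times(M,\infty)\subseteq A$ and $A$ has nonempty interior. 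The geometric Hahn-Banach theorem then produces a nonzero continuous linear functional on $E\times\Rset$, necessarily of the form $(z,t)\mapsto \ell(z)+\lambda t$ with $\ell\in E^*$ and $\lambda\in\Rset$, together with $c\in\Rset$ satisfying $\ell(z)+\lambda t\geq c$ on $A$ and $\ell(z)+\lambda t\leq c$ on $B$.

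Sending $t\to+\infty$ on $A$ and $t\to-\infty$ on $B$ forces $\lambda\geq 0$. The case $\lambda=0$ would force $\ell\geq c$ on $\dom\Theta$ (projection of $A$) and $\ell\leq c$ on $\dom\Xi$ (projection of $B$); since $z_0\in \dom\Theta\cap \dom\Xi$ and $\dom\Theta$ contains the neighborhood $U$, this would give $\ell(z_0)=c$ and $\ell-\ell(z_0)\geq 0$ on the neighborhood $U-z_0$ of $0$, and linearity of $\ell$ would then force $\ell=0$, contradicting $(\ell,\lambda)\neq 0$. Thus $\lambda>0$ and we may normalize $\lambda=1$. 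The inequality on $A$ rewrites as $\Theta(z)\geq c-\ell(z)$ for all $z$, hence $\Theta^*(-\ell)=\sup_z[-\ell(z)-\Theta(z)]\leq -c$; the inequality on $B$ rewrites as $\Xi(z)\geq m-c+\ell(z)$ for all $z\in\dom\Xi$ (and trivially elsewhere), hence $\Xi^*(\ell)\leq c-m$. Setting $z^*:=-\ell$ and adding yields $-\Theta^*(z^*)-\Xi^*(-z^*)\geq m$, which both closes the duality gap and exhibits an attaining dual point.

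The main obstacle is the step excluding $\lambda=0$, which is precisely where the qualification condition (continuity of $\Theta$ at a point of $\dom\Xi$) is genuinely used: without such a hypothesis the separating hyperplane can be ``vertical,'' meaning the separating functional lives only on the horizontal factor and no $z^*\in E^*$ is extracted, and strong duality can fail. Everything else is bookkeeping: the open convex set $A$ on one side and the disjointness with $B$ let Hahn-Banach fire, and the affine dependence of both sets on the $t$ coordinate makes the read-off of $\Theta^*$ and $\Xi^*$ from the separating hyperplane automatic.
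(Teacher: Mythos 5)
The paper does not prove this theorem; it is stated as a classical result with a citation to Villani's \emph{Topics in Optimal Transport}, so there is no in-paper proof to compare against. Your argument is the standard textbook proof (Hahn--Banach separation of the strict epigraph of $\Theta$ from the hypograph of $m-\Xi$ in $E\times\Rset$, then excluding the vertical hyperplane via the qualification condition), and I have checked it step by step: weak duality, disjointness of $A$ and $B$, the nonempty interior of $A$ from continuity of $\Theta$ at $z_0$, the sign of $\lambda$, the exclusion of $\lambda=0$, the normalization, and the read-off of $\Theta^*(-\ell)\leq -c$ and $\Xi^*(\ell)\leq c-m$ are all correct, and the resulting $z^*=-\ell$ both closes the gap and attains the supremum. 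This is essentially the same proof given in Villani and in Brezis, so you have correctly reconstructed the cited argument rather than found a different route.
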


    Let $\cM(X)$ be the set of finite signed Borel measures on a space $X$ and recall that $C_b(X)$ is the set of bounded continuous functions on the space $X$.
\begin{theorem}[Riesz Representation Theorem]\label{th:reisz_representation_theorem}
    Let $K$ be any compact subset of $\Rset^d$. Then the dual of $C_b(K)$ is $\cM(K)$. 
\end{theorem}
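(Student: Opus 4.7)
The plan is to reduce this statement to the classical Riesz--Markov--Kakutani representation theorem, since the hypotheses fit its scope exactly. Because $K\subset \Rset^d$ is compact, every continuous function on $K$ is automatically bounded, so $C_b(K)=C(K)$ when equipped with the supremum norm. The goal is then to exhibit an isometric isomorphism between $\cM(K)$ and the topological dual of $C_b(K)$, which means constructing a natural map in one direction and verifying that it is both an isometry and surjective.

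First I would define the embedding $J\colon \cM(K)\to C_b(K)^*$ by $J(\mu)(f)=\int f\, d\mu$. Linearity and the upper bound $\|J(\mu)\|\leq |\mu|(K)$ are immediate from $|\int f\, d\mu|\leq \|f\|_\infty\, |\mu|(K)$. For the matching lower bound, I would invoke the Jordan decomposition $\mu=\mu_+-\mu_-$ and the associated Hahn decomposition $K=P\sqcup N$: the function $\one_P-\one_N$ is not continuous, but by Lusin's theorem together with the inner regularity of finite Borel measures on a compact metric space, it can be approximated uniformly in $L^1(|\mu|)$ by continuous functions bounded by $1$. Testing $J(\mu)$ against such approximations yields $\|J(\mu)\|\geq |\mu|(K)$, so $J$ is an isometric embedding.

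Next, for surjectivity, I would take $\Lambda\in C_b(K)^*$ and decompose it into positive parts. For $f\geq 0$, set $\Lambda_+(f)=\sup\{\Lambda(g):0\leq g\leq f,\ g\in C_b(K)\}$, and extend by linearity via $\Lambda_+(f)=\Lambda_+(f_+)-\Lambda_+(f_-)$; a standard argument shows this defines a positive linear functional, and $\Lambda_-=\Lambda_+-\Lambda$ is also positive. The classical Riesz--Markov--Kakutani theorem then produces unique finite positive regular Borel measures $\mu_+,\mu_-$ on $K$ with $\Lambda_\pm(f)=\int f\, d\mu_\pm$, and taking $\mu=\mu_+-\mu_-\in \cM(K)$ yields $J(\mu)=\Lambda$.

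The main obstacle is the surjectivity step, specifically the construction of the representing positive measures $\mu_\pm$: this requires building an outer measure from a positive linear functional via the Daniell or Carath\'eodory extension and verifying Borel regularity. This is a deep classical result, but it is well-documented in standard real analysis references, so for the purposes of this paper it is enough to cite a textbook treatment rather than reproduce the construction.
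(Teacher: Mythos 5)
The paper gives no proof of this statement: it is presented as a classical black-box result and the authors simply cite Theorem~1.9 of Villani's \emph{Topics in Optimal Transportation} and result~7.17 of Folland's \emph{Real Analysis}. Your sketch is a correct and standard way to derive the signed version from the positive Riesz--Markov--Kakutani theorem (isometric embedding via the Jordan/Hahn decomposition plus Lusin approximation, then surjectivity via the positive-part decomposition of a bounded functional), and your closing remark that a textbook citation suffices is exactly what the paper does.
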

See Theorem~1.9 of \citep{TopicsInOptimalTransportVillani} and result~7.17 of \citep{folland} for more details.

Notice that in the Fenchel-Rockafellar theorem, the left-hand side of \eqref{eq:fenchel_rockafellar} is convex while the right-hand side is concave. However, when $\phi$ is non-convex, $\prm$ is not convex. In order to apply the Fenchel-Rockafellar theorem, we will relax the primal $\prm$ will to a convex functional $\Theta$. 

We define $\Theta$ as
 \begin{equation}\label{eq:Theta_def}
        \Theta(h_0,h_1)=\int S_\e(h_1)d\PP_1+\int S_\e(h_0)d\PP_0
    \end{equation}
which is convex in $h_0,h_1$ due to the sub-additivity of the supremum operation. Notice that one obtains $\Theta$ from $\prm$ by replacing $\phi \circ f$ with $h_1$ and $\phi \circ -f$ with $h_0$.

The functional $\Xi$ will be chosen to restrict $h_0,h_1$ in the hope that at the optimal value, $h_1=\phi(f)$ and $h_0=\phi(-f)$ for some $f$. Notice that if $h_1=\phi(f)$, $h_0=\phi(-f)$ then for all $\eta\in [0,1]$, 
\[\eta h_1(\bx)+(1-\eta)h_0=\eta\phi(f))+(1-\eta)\phi(-f)\geq C_\phi^*(\eta).\]
Thus we will optimize $\Theta$ over the set of functions $S_\phi$ defined by
    \begin{equation}\label{eq:S_def}
        S_\phi=\left\{ \begin{aligned}
        &(h_0,h_1)\colon h_0,h_1\colon K^\e \to \ov \Rset\text{ Borel, }0\leq h_0,h_1 \text{ and for}\\
        &  \text{all $\bx\in \Rset^d$,  $\eta\in[0,1]$, } \eta h_0(\bx)+(1-\eta)h_1(\bx)\geq C_\phi^*(\eta)
        \end{aligned}\right\}
    \end{equation}
where $K=\supp(\PP_0\cup \PP_1)$. 
(Notice that the definition of $S_\e(g)$ in \eqref{eq:S_e_def} assumes that the domain of $g$ must include $\ov{B_\e(\bx)}$. Thus in order to define the integral $\int S_\e(h)d\QQ$, the domain of $h$ must include $(\supp \QQ)^\e$.)
Thus we define $\Xi$ as
\begin{equation}\label{eq:Xi_def}    \Xi(h_0,h_1)=    \begin{cases}
            0 &\text{if }(h_0,h_1)\in S_\phi\\
            +\infty &\text{otherwise}
    \end{cases}
    \end{equation}

The following result expresses $\dl$ as an infimum of linear functionals continuous with respect to the weak topology on probability measures. 
This lemma will assist in the computation of $\Xi^*$. In the remainder of this section, $\cM_+(S)$ will denote the set of positive finite Borel measures on a set $S$.
 \begin{restatable}{lem}{lemmaxistarpre}\label{lemma:Xi_star_pre}
    Let $K\subset \Rset^d$ be compact, $E=C_b(K^\e)\times C_b(K^\e)$, and $\PP_0',\PP_1'\in \cM_+(K^\e)$. Then
    \begin{equation}\label{eq:dl_inf_representation}
        \inf_{\substack{(h_0,h_1)\in S_\phi\cap E}} \int h_1d\PP_1'+\int h_0d\PP_0'=\dl(\PP_0',\PP_1')    
    \end{equation}
    
    Therefore, $\dl$ is concave and upper semi-continuous on $\cM_+(K^\e)\times \cM_+(K^\e)$ with respect to the weak topology on probability measures. 
\end{restatable}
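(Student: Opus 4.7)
The plan is to prove the equality in \eqref{eq:dl_inf_representation} by showing $\geq$ and $\leq$ separately, then to deduce the concavity and upper semi-continuity of $\dl$ as an immediate consequence. For the $\geq$ direction, fix $(h_0, h_1) \in S_\phi \cap E$ and set $\PP^* = \PP_0' + \PP_1'$, $\eta^* = d\PP_1'/d\PP^*$. The key observation is that $C_\phi^*(\eta) = C_\phi^*(1-\eta)$, which follows from the substitution $\alpha \mapsto -\alpha$ inside the infimum. Applying the defining constraint of $S_\phi$ at $\eta = 1 - \eta^*(\bx)$ pointwise then gives $(1-\eta^*) h_0 + \eta^* h_1 \geq C_\phi^*(\eta^*)$, and integrating over $\PP^*$ and recognizing $\int (1-\eta^*) h_0\, d\PP^* = \int h_0\, d\PP_0'$ and $\int \eta^* h_1\, d\PP^* = \int h_1\, d\PP_1'$ yields the inequality.

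For the $\leq$ direction, construct a near-optimal measurable pair $(h_0, h_1) = (\phi(-f^*), \phi(f^*))$. By Lemma~\ref{lemma:C_phi_minimizers}, $\alpha_\phi \circ \eta^*$ is a measurable pointwise minimizer of $C_\phi(\eta^*(\bx), \cdot)$, but may take values $\pm\infty$; truncate it to $f_M = \max(-M, \min(M, \alpha_\phi \circ \eta^*))$ and apply dominated convergence using the pointwise bounds $\phi(f_M) \leq \phi(\alpha_\phi \circ \eta^*)$ and $\phi(-f_M) \leq \phi(-\alpha_\phi \circ \eta^*)$, where the dominating functions lie respectively in $L^1(\PP_1')$ and $L^1(\PP_0')$ because $\dl(\PP_0',\PP_1') < \infty$. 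For $M$ sufficiently large this produces a bounded measurable $f^*$ with $\hat R_\phi(f^*, \PP_0', \PP_1') \leq \dl(\PP_0', \PP_1') + \delta/3$, and the pair $h_0 = \phi(-f^*)$, $h_1 = \phi(f^*)$ is then a bounded universally measurable element of $S_\phi$ since $\eta h_0 + (1-\eta) h_1 = C_\phi(\eta, -f^*(\bx)) \geq C_\phi^*(\eta)$ pointwise.

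The main technical step is passing from this measurable pair to a continuous one in $S_\phi \cap E$. Density of $C_b(K^\e)$ in $L^1(\PP^*)$ (standard on a compact metric space with a finite Borel measure) produces non-negative continuous $\tilde h_i \in C_b(K^\e)$ with $\int |h_i - \tilde h_i|\, d\PP^* \leq \delta/3$. The pair $(\tilde h_0, \tilde h_1)$ may violate the pointwise $S_\phi$ constraint, so introduce the correction
\[\rho(\bx) = \Biggl[\sup_{\eta \in [0,1]} \bigl(C_\phi^*(\eta) - \eta \tilde h_0(\bx) - (1-\eta)\tilde h_1(\bx)\bigr)\Biggr]^+.\]
By Berge's maximum theorem $\rho$ is continuous on $K^\e$ — the argument of the supremum is jointly continuous in $(\bx, \eta)$ because $C_\phi^*$ is concave and finite, hence continuous, on the compact $[0,1]$ — so $(\tilde h_0 + \rho, \tilde h_1 + \rho) \in S_\phi \cap E$ by construction. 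Using $(h_0, h_1) \in S_\phi$ and maximizing the linear function $\eta \mapsto \eta a + (1-\eta) b$ at an endpoint gives the pointwise estimate $\rho \leq |h_0 - \tilde h_0| + |h_1 - \tilde h_1|$, which controls $\int \rho\, d\PP^*$ by $2\delta/3$. Combining the estimates yields $\int (\tilde h_1 + \rho)\, d\PP_1' + \int (\tilde h_0 + \rho)\, d\PP_0' \leq \dl(\PP_0', \PP_1') + O(\delta)$, and sending $\delta \to 0$ establishes the reverse inequality.

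The final statement then follows at once: the proven equality writes $\dl(\PP_0', \PP_1')$ as an infimum over $(h_0, h_1) \in S_\phi \cap E$ of the affine functionals $(\PP_0', \PP_1') \mapsto \int h_0\, d\PP_0' + \int h_1\, d\PP_1'$, each of which is weakly continuous on $\cM_+(K^\e)\times\cM_+(K^\e)$ because $h_i \in C_b(K^\e)$; hence $\dl$ is concave and upper semi-continuous. The hardest step will be the continuity approximation, since $\phi$ is only lower semi-continuous and therefore $\phi(\pm f^*)$ need not be continuous even for bounded $f^*$. The Berge-based correction $\rho$ is the key device that allows one to pass to $C_b$ while preserving the pointwise constraint that defines $S_\phi$, and the finiteness and continuity of $C_\phi^*$ on $[0,1]$ is the crucial input that makes Berge's theorem applicable.
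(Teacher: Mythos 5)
Your proposal is correct and takes a genuinely different route from the paper's for the nontrivial $\leq$ direction. The paper proceeds asymmetrically: it applies Lemma~\ref{lemma:continuous_approximation} and Lemma~\ref{lemma:phi_lip_approximation} to approximate $h_1$ by a Lipschitz function $\hat h_1 = \max(g,c)$, then recovers the matching $\hat h_0$ via the $C_\phi^*$-transform $\hat h_0 = \hat h_1^{C_\phi^*}$, using Lemma~\ref{lemma:C_phi_transform_bounded} to restrict the transform to $\eta\in[0,k]$ with $k<1$ so that Lipschitzness propagates. Your proof is symmetric: approximate both $h_0$ and $h_1$ in $L^1(\PP^*)$ by continuous functions and then restore feasibility by adding a single nonnegative continuous correction $\rho$, whose continuity comes from Berge's theorem and whose size is controlled by the $L^1$ approximation error via the pointwise estimate $\rho \leq |h_0-\tilde h_0| + |h_1-\tilde h_1|$ (which follows from the $S_\phi$ feasibility of $(h_0,h_1)$ and $\eta\in[0,1]$). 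This is cleaner and avoids the Lipschitz machinery entirely; it buys you symmetry in $h_0,h_1$ at the small cost of invoking the continuity of $C_\phi^*$ on $[0,1]$, which does hold (concave, usc as an infimum of affine maps, lsc at the endpoints since $C_\phi^*\ge 0 = C_\phi^*(0)=C_\phi^*(1)$).

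One small error worth flagging: the claimed pointwise dominations $\phi(f_M) \le \phi(\alpha_\phi\circ\eta^*)$ and $\phi(-f_M)\le \phi(-\alpha_\phi\circ\eta^*)$ are false. On the set $\{\alpha_\phi\circ\eta^* > M\}$ you have $f_M = M < \alpha_\phi\circ\eta^*$, so the non-increase of $\phi$ gives $\phi(f_M) = \phi(M) \ge \phi(\alpha_\phi\circ\eta^*)$, i.e.\ the inequality flips (symmetrically for $\phi(-f_M)$ on $\{\alpha_\phi\circ\eta^* < -M\}$). Fortunately you do not need dominated convergence at all: a direct estimate gives
\[\int \phi(f_M)\,d\PP_1' - \int \phi(f^*)\,d\PP_1' \le \phi(M)\,\PP_1'(\Rset^d) \to 0,\]
and analogously for $\phi(-f_M)$ against $\PP_0'$, so $\hat R_\phi(f_M,\PP_0',\PP_1') \to \dl(\PP_0',\PP_1')$ as $M\to\infty$. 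Replace the dominated-convergence sentence with this estimate and the argument is tight. (Both your argument and the paper's also implicitly use the symmetry $C_\phi^*(\eta)=C_\phi^*(1-\eta)$ to reconcile the order of $h_0,h_1$ in the definition of $S_\phi$ with the integrand $\eta' h_1 + (1-\eta')h_0$; you make this explicit, the paper does not.)
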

We sketch the proof and formally fill in the details in Appendix~\ref{app:dl_C_b}. Let $\PP'=\PP_0'+\PP_1'$, $\eta'=d\PP_1'/d\PP'$. Then 
\[ \int h_1d\PP_1'+\int h_0d\PP_0'=\int \eta' h_1+(1-\eta')h_0d\PP'\]
 Clearly, the inequality $\geq$ holds because $\eta'  h_1+(1-\eta')h_0\geq C_\phi^*(\eta')$ for all $(h_0,h_1)\in S_\phi$.
 Equality is achieved at $h_1=\phi(\alpha_\phi(\eta')),h_0=\phi(-\alpha_\phi(\eta'))$, with $\alpha_\phi$ as in \eqref{eq:alpha_phi_definition}.
 However, these functions may not be continuous. In Appendix~\ref{app:dl_C_b}, we show that $h_0,h_1$ can be approximated arbitrarily well by elements of $S_\phi \cap E$.

\begin{lemma}\label{lemma:duality_compact}
     Let $\phi$ be a non-increasing, lower semi-continuous loss function and let $\PP_0,\PP_1$ be compactly supported finite Borel measures. Let $S_\phi$ be as in \eqref{eq:S_def}.

     Then 
     \begin{equation}\label{eq:duality_first}
    \inf_{(h_0,h_1)\in S_\phi} \Theta(h_0,h_1)=\sup_{\substack{\PP_0'\in \Wball\e (\PP_0)\\ \PP_1'\in \Wball\e(\PP_1)}} \bar R_\phi(\PP_0',\PP_1')
     \end{equation}
     Furthermore, there exist $\PP_0^*,\PP_1^*$ which attain the supremum.

\end{lemma}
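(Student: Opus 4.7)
The plan is to apply the Fenchel-Rockafellar duality theorem (Theorem~\ref{th:fenchel_rockafellar}) on $E = C_b(K^\e) \times C_b(K^\e)$ to the functionals $\Theta$ and $\Xi$ from~\eqref{eq:Theta_def} and~\eqref{eq:Xi_def}. Compactness of $K = \supp(\PP_0 \cup \PP_1)$ transfers to $K^\e$, so by the Riesz representation theorem (Theorem~\ref{th:reisz_representation_theorem}), $E^* = \cM(K^\e) \times \cM(K^\e)$. Both $\Theta$ and $\Xi$ are convex---$\Theta$ by sub-additivity of $S_\e$, $\Xi$ by convexity of $S_\phi$---and the estimate $\bigl|\int (S_\e(h)-S_\e(h'))\,d\PP_i\bigr| \leq \|h-h'\|_\infty\,\PP_i(K^\e)$ shows $\Theta$ is continuous on $E$. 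The constant pair $(\phi(0),\phi(0))$ lies in $S_\phi \cap E$ since $C_\phi^*(\eta)\leq C_\phi(\eta,0)=\phi(0)$, providing a common point where the hypotheses of Fenchel-Rockafellar are met.

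Next I would compute the Legendre-Fenchel transforms. Since $\Theta$ splits as a sum, so does $\Theta^*$, reducing it to $\sup_{h \in C_b(K^\e)} \int h\,d\QQ_i - \int S_\e(h)\,d\PP_i$ in each component. Testing with $\pm$-constants and with scaled continuous test functions supported near the negative part of $\QQ_i$ forces $\QQ_i \geq 0$ and $\QQ_i(K^\e)=\PP_i(K^\e)$ whenever $\Theta^*$ is finite; Lemma~\ref{lemma:W_inf_integral_characterization} then makes this sup equal $0$ precisely when $\QQ_i\in\Wball\e(\PP_i)$. An analogous computation for $\Xi^*$ drives it to $+\infty$ as soon as either $\QQ_i$ has a negative part (using constant pairs $(M,M)\in S_\phi\cap E$ for $M\geq \phi(0)$), and on $\cM_+(K^\e)$ Lemma~\ref{lemma:Xi_star_pre} gives $-\Xi^*(-\QQ_0,-\QQ_1)=\dl(\QQ_0,\QQ_1)$. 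Substituting into~\eqref{eq:fenchel_rockafellar} yields
\[
    \inf_{(h_0,h_1)\in S_\phi\cap E}\Theta(h_0,h_1)
    = \sup_{\QQ_i\in\Wball\e(\PP_i)}\dl(\QQ_0,\QQ_1),
\]
and Fenchel-Rockafellar also guarantees that the supremum is attained, delivering the promised $(\PP_0^*,\PP_1^*)$.

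To upgrade the left-hand infimum from $S_\phi\cap E$ to all of $S_\phi$, I would invoke weak duality directly. For any $(h_0,h_1)\in S_\phi$ and $\QQ_i\in\Wball\e(\PP_i)$, Lemma~\ref{lemma:S_e_W_infty_equivalence} gives $\int S_\e(h_i)\,d\PP_i \geq \int h_i\,d\QQ_i$, and setting $\eta'=d\QQ_1/d(\QQ_0+\QQ_1)$, the defining inequality of $S_\phi$ together with the symmetry $C_\phi^*(\eta)=C_\phi^*(1-\eta)$ yields $\int h_0\,d\QQ_0 + \int h_1\,d\QQ_1 \geq \dl(\QQ_0,\QQ_1)$. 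Chaining these shows $\inf_{S_\phi}\Theta \geq \sup\dl$, which combined with the trivial bound $\inf_{S_\phi}\Theta \leq \inf_{S_\phi\cap E}\Theta$ and the Fenchel-Rockafellar equality forces all three quantities to coincide.

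The main obstacle will be the Legendre-Fenchel computation, especially ruling out signed measures as viable dual candidates. Both Lemma~\ref{lemma:W_inf_integral_characterization} and Lemma~\ref{lemma:Xi_star_pre} are only stated for positive measures, so one must show that the positivity and total-mass conditions are automatic---that is, $\Theta^*$ or $\Xi^*$ must blow up whenever either constraint is violated. Verifying this via suitable unbounded continuous test functions (consistent with the constraints of $S_\phi$, which tie $h_0$ and $h_1$ together pointwise) is where the analytical effort is concentrated.
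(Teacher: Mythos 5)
Your proposal is correct and follows essentially the same route as the paper's proof: Fenchel–Rockafellar on $C_b(K^\e)\times C_b(K^\e)$, the Riesz representation theorem to identify the dual, the conjugate computations via Lemmas~\ref{lemma:Xi_star_pre} and~\ref{lemma:W_inf_integral_characterization}, and a weak-duality argument to pass from $S_\phi\cap E$ to all of $S_\phi$. The only deviations are cosmetic: you prove continuity of $\Theta$ by a direct Lipschitz estimate rather than citing a general theorem on bounded convex functionals, and you take $(\phi(0),\phi(0))$ rather than $(C_\phi^*(1/2),C_\phi^*(1/2))$ as the feasible point.
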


\begin{proof}
We will show a version of \eqref{eq:duality_first} with the infimum taken over $S_\phi \cap E$, and then argue that the same claim holds when the infimum is taken over $S_\phi$.

         Notice that if $h_0$, $h_1$ are continuous, then $S_\e(h_0)$, $S_\e(h_1)$ are also continuous and $\int S_\e(h_0)d\QQ$, $\int S_\e(h_1)d\QQ$ are well-defined for every \emph{Borel} $\QQ$. Hence 
         we assume that $\PP_0$, $\PP_1$ are Borel measures rather than their completion.

        Let $K=\supp (\PP_0+\PP_1)$. We will apply the Fenchel-Rockafellar Duality Theorem to the functionals given by \eqref{eq:Theta_def} and \eqref{eq:Xi_def}
    on the vector space $E=C_b(K^\e)\times C_b(K^\e)$ equipped with the sup norm. By the Riesz representation theorem, dual of the space $E$ is $E^*=\cM(K^\e)\times \cM(K^\e)$. 
    
    To start, we argue that the Fenchel-Rockafellar duality theorem applies to these functionals. First, notice that if $(h_0,h_1)\in E$, then both $h_0,h_1$ are bounded so $\Theta(h_0,h_1)<\infty$. 
    Furthermore, $\Theta$ is convex due to the subadditivity of supremum and $\Xi$ is convex because the constraint $h_0(\bx)+(1-\eta)h_1(\bx)\geq C_\phi^*(\eta)$ is linear in $h_0$ and $h_1$. 
    Furthermore, $\Theta$ is continuous on $E$ because $\Theta$ is convex and bounded and $E$ is open, see Theorem~2.14 of \citep{ConvexityAndOptimizationInBanachSpaces}.
    


    Because the constant function $(C_\phi^*(1/2),C_\phi^*(1/2))$ is in $S_\phi$, $\Xi$ is not identically $\infty$ and therefore the Fenchel-Rockafellar theorem applies.

    Clearly $\inf_{E} \Theta(h_0,h_1)+\Xi(h_0,h_1)$ reduces 
    to the left-hand side of \eqref{eq:duality_first}. 
    
    We now compute the dual of $\Xi$. Lemma~\ref{lemma:Xi_star_pre} implies that
    \begin{align}
        &-\Xi^*(-\PP_0',-\PP_1')=-\sup_{(h_0,h_1)\in S_\phi\cap E } -\int h_0d\PP_0'-\int h_1d\PP_1'\label{eq:Xi_star_computation}\\
        &=\begin{cases}
             \dl(\PP_0',\PP_1')
        &\text{if }\PP_i'\geq 0\\
        -\infty &\text{otherwise}
        \end{cases}\nonumber
    \end{align}
    This computation implies that the term $-\Xi^*(-\PP_0',-\PP_1')$ present in the Fenchel-Rockafellar Theorem is not $-\infty$ iff $\PP_0',\PP_1'$ are positive measures. Next, notice that because $\Theta(h_0,h_1)<+\infty$ for all $(h_0,h_1)\in E$, $-\Theta^*(\PP_0',\PP_1')$ is never $+\infty$. Therefore, it suffices to compute $\Theta^*$ for positive measures $\PP_0',\PP_1'$. Lemma~\ref{lemma:W_inf_integral_characterization} implies that for positive measures $\PP_0',\PP_1'$,
    \begin{align*}
        &\Theta^*(\PP_0',\PP_1')=\sup_{h_0,h_1\in C_0(K^\e)} \int h_1d\PP_1'+\int h_0d\PP_0'-\left(\int S_\e(h_0)d\PP_0+\int S_\e(h_1)d\PP_1\right)\\
        &=\sup_{h_1\in  C_0(K^\e)} \left(\int h_1d\PP_1'-\int S_\e(h_1)d\PP_1\right)+\sup_{h_0 \in C_0(K^\e)}\left(\int h_0d\PP_0'-\int S_\e(h_0)d\PP_0+\right)\\
        &=\begin{cases}
            0 &\PP_0',\PP_1'\text{ positive measures, with } W_\infty(\PP_0',\PP_0)\leq \e \text{ and }W_\infty(\PP_1',\PP_1)\leq \e\\
            +\infty &\PP_0',\PP_1'\text{ positive measures, with either }W_\infty(\PP_0',\PP_0)> \e\text{ or }W_\infty(\PP_1',\PP_1)> \e
        \end{cases}
    \end{align*}

    Therefore
    \[\sup_{\PP_0',\PP_1'\in \cM(K^\e)} -\Theta(\PP_0',\PP_1')-\Xi(-\PP_0',-\PP_1')=\sup_{\substack{\PP_0'\in \Wball \e(\PP_0)\\\PP_1'\in \Wball \e(\PP_1)}} \bar R_\phi(\PP_0',\PP_1')\]
    and furthermore there exist measures $\PP_0^*,\PP_1^*$ maximizing the dual problem. Therefore
    the Fenchel-Rockafellar Theorem implies that
    \[\inf_{(h_0,h_1)\in S_\phi} \Theta(h_0,h_1)\leq \inf_{(h_0,h_1)\in S_\phi \cap E} \Theta(h_0,h_1)=\sup_{\substack{\PP_0' \in \Wball \e (\PP_0)\\ \PP_1'\in \Wball \e(\PP_1)}}\dl(\PP_0',\PP_1')\]
    The opposite inequality follows from the weak duality argument presented in \eqref{eq:weak_duality_original} in Section~\ref{sec:main_results_summary}. See Lemma~\ref{lemma:weak_duality_borel} of Appendix~\ref{app:duality_all} for a full proof.
\end{proof}
    Note that this proof does not easily extend to an unbounded domain $X$: for a non-compact space, the dual of $C_b(X)$ is much larger than $\cM(X)$, and thus a naive application of the Fenchel-Rockafellar Theorem would result in a different right-hand side than \eqref{eq:duality_first}. On the other hand, the Reisz representation theorem for an unbounded domain $X$ states that the dual of $C_0(X)$ is $\cM(X)$, where $C_0(X)$ is the set of continuous bounded functions vanishing at $\infty$. At the same time, if $h_0,h_1\in C_0(X)$, then $\eta h_1(\bx)+(1-\eta)h_0(\bx)$ becomes arbitrarily small for large $\bx$ so the constraint $\eta h_1(\bx)+(1-\eta)h_0(\bx)\geq C_\phi^*(\eta)$ cannot hold for all $\eta$. Thus if $K$ is unbounded, $S_\phi\cap C_0(X)=\emptyset$ and the functional $\Xi$ would be $+\infty$ everywhere on $C_0(X)$, precluding and application of the Fenchel-Rockafellar Theorem.

However, Lemma~\ref{lemma:duality_compact} can be extended to distributions with arbitrary support via a simple approximation argument.
By Lemma~\ref{lemma:duality_compact}, the strong duality result holds for the distributions defined by $\PP_0^n=\PP_0\big|_{\ov{B_n(\zero)}}, \PP_1^n=\PP_1\big|_{\ov{B_n(\zero)}}$. One then shows strong duality by computing the limit of the primal and dual problems as $n\to \infty$.
We therefore obtain the following Lemma, which is proved formally in Appendix~\ref{app:duality_all}.
 \begin{restatable}{lem}{lemmadualitythetaall}\label{lemma:duality_theta_all}
     Let $\phi$ be a non-increasing, lower semi-continuous loss function and let $\PP_0,\PP_1$ be finite Borel measures supported on $\Rset^d$. Let $S_\phi$ be as in \eqref{eq:S_def}. Then
     \begin{equation*}
    \inf_{(h_0,h_1)\in S_\phi} \Theta(h_0,h_1)=\sup_{\substack{\PP_0'\in \Wball\e (\PP_0)\\ \PP_1'\in \Wball\e(\PP_1)}} \bar R_\phi(\PP_0',\PP_1')
     \end{equation*}
        Furthermore, there exist $\PP_0^*,\PP_1^*$ which attain the supremum.
\end{restatable}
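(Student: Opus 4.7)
The plan is to reduce to the compactly supported case handled by Lemma~\ref{lemma:duality_compact} and pass to the limit. I would define $\PP_i^n = \PP_i|_{\ov{B_n(\zero)}}$, let $\Theta^n$ and $\dl^n$ denote the primal and dual objectives built from these truncations, and write $p_n = \inf_{S_\phi}\Theta^n$ and $v_n = \sup_{\PP_i'\in\Wball\e(\PP_i^n)} \dl(\PP_0',\PP_1')$. Lemma~\ref{lemma:duality_compact} delivers $p_n = v_n$ and existence of truncated dual maximizers $(\PP_0^{*n},\PP_1^{*n})$. The full result will follow by showing $v_n \to v$, extracting a weak limit to produce a full dual maximizer, and combining with weak duality \eqref{eq:weak_duality_original} to conclude $p=v$.

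The first substantive step is to prove $v_n \uparrow v$. The direction $v_n\leq v$ is immediate: any $(\PP_0',\PP_1')\in\Wball\e(\PP_i^n)$ extends to $(\PP_0'+(\PP_0-\PP_0^n),\PP_1'+(\PP_1-\PP_1^n))\in\Wball\e(\PP_i)$ by coupling the additional mass through the identity, and $\dl$ is monotone under addition of non-negative mass because $\dl(\PP_0',\PP_1') = \inf_f \int \phi(f)\, d\PP_1' + \int\phi(-f)\,d\PP_0'$ with $\phi\geq 0$. For the reverse, fix $(\PP_0',\PP_1')\in\Wball\e(\PP_i)$ with an optimal coupling $\gamma_i$, and restrict $\gamma_i^n := \gamma_i|_{\ov{B_n(\zero)}\times\Rset^d}$, whose marginals are $\PP_i^n$ and some $\tilde\PP_i'^{\,n}\leq \PP_i'$. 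Disintegrating $\gamma_i$ along $\PP_i'$ gives Radon-Nikodym densities $\theta_i^n = d\tilde\PP_i'^{\,n}/d\PP_i'\in[0,1]$ converging pointwise to $1$. Writing the integrand of $\dl(\tilde\PP_0'^{\,n},\tilde\PP_1'^{\,n})$ using the positively $1$-homogeneous extension $K(a,b) := (a+b)\, C_\phi^*(a/(a+b))$ of $C_\phi^*$ (with $K(0,0)=0$ and $K\leq \phi(0)(a+b)$), dominated convergence yields $\dl(\tilde\PP_0'^{\,n},\tilde\PP_1'^{\,n})\to \dl(\PP_0',\PP_1')$, so $\liminf v_n \geq \dl(\PP_0',\PP_1')$ and hence $v_n\to v$.

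Existence of a dual maximizer then follows from a compactness argument. The extensions $(\tilde\PP_0^{*n},\tilde\PP_1^{*n}) = (\PP_0^{*n}+(\PP_0-\PP_0^n),\PP_1^{*n}+(\PP_1-\PP_1^n))\in\Wball\e(\PP_i)$ satisfy $\dl(\tilde\PP_0^{*n},\tilde\PP_1^{*n})\geq v_n$ by the same monotonicity, and they form a tight family since the $\PP_i$ are tight and Wasserstein balls preserve tightness through their couplings. Prokhorov's theorem supplies a weakly convergent subsequence $\tilde\PP_i^{*n_k}\wto \PP_i^*$, and Lemma~\ref{lemma:W_inf_integral_characterization} guarantees $\Wball\e(\PP_i)$ is closed under weak convergence, so $\PP_i^*\in\Wball\e(\PP_i)$. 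The upper semicontinuity of $\dl$ established in Lemma~\ref{lemma:Xi_star_pre} then gives $\dl(\PP_0^*,\PP_1^*)\geq \limsup_k v_{n_k} = v$, so $(\PP_0^*,\PP_1^*)$ attains the supremum.

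The main obstacle is closing the primal side, i.e.\ showing $p\leq\liminf p_n$ (since $p_n\leq p$ is trivial); combined with $p_n = v_n\to v$ and weak duality this yields $p=v$. I would take near-minimizers $(h_0^n,h_1^n)$ for $\Theta^n$ and extend them to $(\tilde h_0^n,\tilde h_1^n)\in S_\phi$ by setting $\tilde h_i^n = \phi(0)$ on $K^\e\setminus (K^n)^\e$; feasibility is preserved because $\phi(0)\geq C_\phi^*(\eta)$ for every $\eta$. The delicate point is bounding the tail contribution $\int_{\Rset^d\setminus K_n} S_\e(\tilde h_i^n)\,d\PP_i$, which forces the near-minimizers to be chosen uniformly bounded (e.g.\ by $\phi(0)$). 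To arrange this, I would exploit that the constant pair $(\phi(0),\phi(0))$ lies in $S_\phi$ with $\Theta\leq \phi(0)\PP(\Rset^d)<\infty$, and pass to a near-minimizing sequence of bounded $(h_0^n,h_1^n)$ by projecting onto the feasible region where needed. With this uniform bound, the tail is at most $\phi(0)\PP(\Rset^d\setminus K_n)\to 0$, giving $p\leq \Theta(\tilde h_0^n,\tilde h_1^n)\leq \Theta^n(h_0^n,h_1^n) + o(1) = p_n + o(1)$, which closes the argument.
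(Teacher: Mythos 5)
Your overall strategy---truncate to compact support, invoke Lemma~\ref{lemma:duality_compact}, pass to a weak limit of the dual maximizers using Prokhorov and the upper semi-continuity of $\dl$, and close via weak duality---is exactly the paper's. Your second paragraph (a direct proof that $v_n\to v$) is extra work the paper avoids: once you prove $p\leq\limsup_n p_n$ and recall $p_n=v_n\leq v\leq p$, the convergence $v_n\to v$ falls out for free, and the paper exploits this to get strong duality and existence of a dual maximizer in one stroke.

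The real problem is the final paragraph, on the primal side. You correctly sense that after extending the near-minimizers by a constant, the integral near the truncation boundary needs care, but you misdiagnose it as a uniform-boundedness issue and propose ``projecting onto the feasible region where needed'' to get bounded near-minimizers. That move does not work: the feasible set $S_\phi$ is not stable under pointwise truncation. For the exponential loss one computes $S_\psi=\{(h_0,h_1):h_0h_1\geq 1\}$, so $(h_0,h_1)=(0.01,100)\in S_\psi$ while $(\min(h_0,\psi(0)),\min(h_1,\psi(0)))=(0.01,1)\notin S_\psi$. There is no elementary ``projection'' that both caps the functions by $\phi(0)$ and keeps you in $S_\phi$, so the claimed tail bound $\phi(0)\,\PP(\Rset^d\setminus K_n)$ is not justified as written.

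The fix used in the paper is to choose the cutoff carefully (at $\ov{B_{n-\e}(\zero)}$ rather than at $(K^n)^\e$) and split the integral into three zones: inside $\ov{B_{n-2\e}(\zero)}$, where $S_\e(\tilde h_i^n)=S_\e(h_i^n)$; the annulus $\ov{B_n(\zero)}\setminus\ov{B_{n-2\e}(\zero)}$, where $S_\e(\tilde h_i^n)\leq S_\e(h_i^n)+C_\phi^*(1/2)$; and $\ov{B_n(\zero)}^C$, where $S_\e(\tilde h_i^n)=C_\phi^*(1/2)$. With the cutoff placed at $\ov{B_{n-\e}(\zero)}$, the first two zones are contained in $\ov{B_n(\zero)}$, so the $S_\e(h_i^n)$-part of their contribution is precisely $\int S_\e(h_i^n)\,d\PP_i^n$---already counted in $\Theta^n(h_0^n,h_1^n)$---and the extra $C_\phi^*(1/2)$ term is paid only on $\ov{B_{n-2\e}(\zero)}^C$, which has vanishing $\PP_i$-mass. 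No uniform bound on $(h_0^n,h_1^n)$ is ever needed. With your cutoff at $(K^n)^\e$ the analogous annulus spills outside $\ov{B_n(\zero)}$, and the $S_\e(h_i^n)$ contribution there is not controlled by $\Theta^n$.
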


\subsection{Complimentary Slackness}

Using a standard argument, strong duality (Lemma~\ref{lemma:duality_theta_all}) allows us to prove a version of the complimentary slackness theorem. 

\begin{restatable}{lem}{lemmacomplimentaryslacknessrestricted} \label{lemma:complimentary_slackness_restricted} Assume that $\PP_0,\PP_1$ are compactly supported.
            The functions $h_0^*,h_1^*$ minimize $\Theta$ over $S_\phi$ and $(\PP_0^*,\PP_1^*)$ maximize $\bar R_\phi$ over $\Wball \e(\PP_0)\times \Wball \e (\PP_1)$ iff the following hold: 
    \begin{enumerate}[label=\arabic*)]

        \item

        \begin{equation}\label{eq:sup_comp_slack_restricted}
            \int h_1^* d\PP_1^*=\int S_\e(h_1^*)d\PP_1\quad \text{and} \quad \int h_0^* d\PP_0^*=\int S_\e(h_0^*)d\PP_0 
        \end{equation}
        \item If we define $\PP^*=\PP_0^*+\PP_1^*$ and $\eta^*=d\PP_1^*/d\PP^*$, then
    \begin{equation}\label{eq:complimentary_slackness_restricted_necessary}
        \eta^*(\bx)h_1^*(\bx)+(1-\eta^*(\bx))h_0^*(\bx)=C_\phi^*(\eta^*(\bx))\quad \PP^*\text{-a.e.}    
    \end{equation}
    \end{enumerate}
\end{restatable}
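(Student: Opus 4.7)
The plan is to exploit strong duality (Lemma~\ref{lemma:duality_theta_all}, applied in the compactly supported setting) together with a chain of two inequalities that link $\Theta(h_0^*,h_1^*)$ to $\bar R_\phi(\PP_0^*,\PP_1^*)$ through the intermediate quantity
\[
I := \int h_1^* \, d\PP_1^* + \int h_0^* \, d\PP_0^*.
\]
Complementary slackness will then amount to the observation that the two individual inequalities in the chain must collapse to equalities at an optimum, and these equalities are precisely conditions \ref{it:sup_assumed_precondition}-style \eqref{eq:sup_comp_slack_restricted} and \eqref{eq:complimentary_slackness_restricted_necessary}.

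\textbf{The chain of inequalities.} First, for any feasible $\PP_i^* \in \Wball\e(\PP_i)$, Lemma~\ref{lemma:S_e_W_infty_equivalence} gives
\[
\int h_1^* \, d\PP_1^* \leq \int S_\e(h_1^*) \, d\PP_1, \qquad \int h_0^* \, d\PP_0^* \leq \int S_\e(h_0^*) \, d\PP_0,
\]
so that $I \leq \Theta(h_0^*,h_1^*)$. Second, writing $\PP^* = \PP_0^* + \PP_1^*$ and $\eta^* = d\PP_1^*/d\PP^*$, we rewrite
\[
I = \int \bigl(\eta^*(\bx) h_1^*(\bx) + (1-\eta^*(\bx)) h_0^*(\bx)\bigr) \, d\PP^*,
\]
and since $(h_0^*,h_1^*) \in S_\phi$ the integrand is pointwise $\geq C_\phi^*(\eta^*(\bx))$, giving $I \geq \bar R_\phi(\PP_0^*,\PP_1^*)$. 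Combining,
\[
\bar R_\phi(\PP_0^*,\PP_1^*) \;\leq\; I \;\leq\; \Theta(h_0^*,h_1^*).
\]

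\textbf{Both directions of the equivalence.} For the ``only if'' direction, strong duality (Lemma~\ref{lemma:duality_theta_all}) forces $\Theta(h_0^*,h_1^*) = \bar R_\phi(\PP_0^*,\PP_1^*)$, hence both inequalities above are equalities. The equality $I = \Theta(h_0^*,h_1^*)$ splits into the two separate equalities of \eqref{eq:sup_comp_slack_restricted} because each of the two component inequalities was individually $\leq$. Similarly, the equality $I = \bar R_\phi(\PP_0^*,\PP_1^*)$ combined with the pointwise inequality $\eta^* h_1^* + (1-\eta^*) h_0^* \geq C_\phi^*(\eta^*)$ forces equality $\PP^*$-a.e., giving \eqref{eq:complimentary_slackness_restricted_necessary}. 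For the ``if'' direction, assuming \eqref{eq:sup_comp_slack_restricted} and \eqref{eq:complimentary_slackness_restricted_necessary} immediately gives $\Theta(h_0^*,h_1^*) = I = \bar R_\phi(\PP_0^*,\PP_1^*)$. Since by the weak duality inequality (proved in Lemma~\ref{lemma:weak_duality_borel} of Appendix~\ref{app:duality_all} and tied to the $\geq$ side of the argument in \eqref{eq:weak_duality_original}) we have $\inf_{S_\phi} \Theta \geq \sup \bar R_\phi$, the common value $\Theta(h_0^*,h_1^*) = \bar R_\phi(\PP_0^*,\PP_1^*)$ must equal both the infimum and the supremum, so $(h_0^*,h_1^*)$ and $(\PP_0^*,\PP_1^*)$ are optima of their respective problems.

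\textbf{Anticipated obstacles.} There is essentially no serious obstacle; the argument is the standard complementary slackness recipe. The only minor technical point to be careful about is the splitting of $I \leq \Theta(h_0^*,h_1^*)$ into two separate equalities in \eqref{eq:sup_comp_slack_restricted}: since each of the two upper bounds is independent (one involves only $h_1^*, \PP_1, \PP_1^*$ and the other only $h_0^*, \PP_0, \PP_0^*$), equality in the sum and nonnegativity of each deficit force equality in each summand separately. One should also briefly remark that $\eta^*$ has a Borel representative so that the pointwise identity \eqref{eq:complimentary_slackness_restricted_necessary} is well-posed, and that conditions \eqref{eq:sup_comp_slack_restricted} implicitly use that $h_i^*$ is integrable against $\PP_i^*$, which follows from $0 \leq h_i^* \leq S_\e(h_i^*)$ together with the finiteness of $\Theta(h_0^*,h_1^*)$ at the optimum.
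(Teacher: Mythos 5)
Your proof is correct and follows essentially the same route as the paper: the same sandwich $\bar R_\phi(\PP_0^*,\PP_1^*) \leq \int h_1^*\,d\PP_1^* + \int h_0^*\,d\PP_0^* \leq \Theta(h_0^*,h_1^*)$, strong duality (Lemma~\ref{lemma:duality_theta_all}) to collapse it in the forward direction, and weak duality to conclude optimality in the converse direction. The extra remarks on splitting the two summands and on integrability are implicit in the paper's argument but not substantive differences.
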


This lemma is proved in Appendix~\ref{app:complimentary_slackness}. Theorem~\ref{th:complimentary_slackness} will later follow from this result. 

To show that Lemma~\ref{lemma:complimentary_slackness_restricted} is non-vacuous, one must prove that there exist minimizers to $\Theta$ over $S_\phi$, which we delay to Sections~\ref{sec:primal_existence} and~\ref{sec:reduction}. Notice that the application of the Fenchel-Rockafellar Theorem in Lemma~\ref{lemma:duality_compact} actually implies the existence of dual maximizers in the case of compactly supported $\PP_0,\PP_1$.

In fact, the complimentary slackness conditions hold approximately for any maximizer of $\dl$ and any minimizing sequence of $\Theta$. This result is essential for proving the existence of minimizers to $\Theta$.

	\begin{lemma}\label{lemma:approximate_complimentary_slackness}
		Let $(h_0^n,h_1^n)$ be a minimizing sequence for $\Theta$ over $S_\phi$: $\lim_{n\to \infty} \Theta(h_0^n,h_1^n)=\inf_{(h_0,h_1)\in S_\phi} \Theta(h_0,h_1)$. Then for any maximizer of the dual problem $(\PP_0^*,\PP_1^*)$, the following hold:
    \begin{enumerate}[label=\arabic*)]
        \item \label{it:h_0,1_limit}        \begin{equation}\label{eq:h_0,1_limit}
			\lim_{n\to \infty} \int S_\e(h_0^n)d\PP_0-\int h_0^nd\PP_0^*=0,\quad 			\lim_{n\to \infty} \int S_\e(h_1^n)d\PP_1-\int h_1^nd\PP_1^*=0		
		\end{equation}
        \item  \label{it:C_phi_limit}If we define $\PP^*=\PP_0^*+\PP_1^*$ and $\eta^*=d\PP_1^*/d\PP^*$
        \begin{equation}\label{eq:C_phi_limit}
			\lim_{n\to \infty} \int \eta^*h_1^n+(1-\eta^*)h_0^n-C_\phi^*(\eta^*)d\PP^*=0
		\end{equation}
    \end{enumerate}

	\end{lemma}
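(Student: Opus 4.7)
The plan is to use the strong duality result of Lemma~\ref{lemma:duality_theta_all}, which gives $\lim_n \Theta(h_0^n,h_1^n) = \dl(\PP_0^*,\PP_1^*)$, and to exhibit the slack $\Theta(h_0^n,h_1^n) - \dl(\PP_0^*,\PP_1^*)$ as a sum of three non-negative quantities that are exactly the integrals appearing in \eqref{eq:h_0,1_limit} and \eqref{eq:C_phi_limit}. Since a sum of non-negatives tends to zero, each summand must as well.

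To build this chain, first apply Lemma~\ref{lemma:S_e_W_infty_equivalence} with $\PP_i^* \in \Wball\e(\PP_i)$ to obtain $\int S_\e(h_i^n)\,d\PP_i \geq \int h_i^n\,d\PP_i^*$ for $i = 0,1$. Next, note the symmetry $C_\phi^*(\eta) = C_\phi^*(1-\eta)$, which follows by substituting $\alpha \mapsto -\alpha$ in the definition $C_\phi^*(\eta) = \inf_\alpha[\eta\phi(\alpha) + (1-\eta)\phi(-\alpha)]$. Invoking the defining inequality of $S_\phi$ with $\eta = 1 - \eta^*(\bx)$ and applying this symmetry yields the pointwise bound $(1-\eta^*)h_0^n + \eta^* h_1^n \geq C_\phi^*(\eta^*)$. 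Integrating against $\PP^* = \PP_0^* + \PP_1^*$ and using $d\PP_1^*/d\PP^* = \eta^*$ gives $\int h_1^n\,d\PP_1^* + \int h_0^n\,d\PP_0^* \geq \int C_\phi^*(\eta^*)\,d\PP^* = \dl(\PP_0^*,\PP_1^*)$.

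Setting $A_n := \int S_\e(h_0^n)\,d\PP_0 - \int h_0^n\,d\PP_0^*$, $B_n := \int S_\e(h_1^n)\,d\PP_1 - \int h_1^n\,d\PP_1^*$, and $C_n := \int [\eta^* h_1^n + (1-\eta^*) h_0^n - C_\phi^*(\eta^*)]\,d\PP^*$, the inequalities above show $A_n, B_n, C_n \geq 0$, and a direct algebraic check gives $\Theta(h_0^n, h_1^n) - \dl(\PP_0^*, \PP_1^*) = A_n + B_n + C_n$. Strong duality forces the left-hand side to zero, so each of $A_n, B_n, C_n$ tends to zero individually, which is precisely the content of \eqref{eq:h_0,1_limit} and \eqref{eq:C_phi_limit}. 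The only routine point to verify is that all integrals are finite: $\Theta(h_0^n, h_1^n)$ converges to the finite value $\dl(\PP_0^*, \PP_1^*)$ (using $C_\phi^*(\eta) \leq \phi(0)$ and $\PP(\Rset^d) < \infty$), and Lemma~\ref{lemma:S_e_W_infty_equivalence} then controls $\int h_i^n\,d\PP_i^* \leq \int S_\e(h_i^n)\,d\PP_i < \infty$. There is no substantive obstacle here beyond tracking signs carefully; the result is essentially a standard \emph{complementary slackness from strong duality} argument.
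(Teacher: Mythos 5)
Your proof is correct and follows the same core strategy as the paper's: both arguments exploit strong duality (Lemma~\ref{lemma:duality_theta_all}) together with the two inequalities from Lemma~\ref{lemma:S_e_W_infty_equivalence} and the $S_\phi$-constraint to squeeze the relevant integrals between $m$ and $m+\delta$. Your version packages the same chain of inequalities more cleanly as an explicit decomposition of the duality gap $\Theta(h_0^n,h_1^n)-\dl(\PP_0^*,\PP_1^*)$ into the three non-negative summands $A_n$, $B_n$, $C_n$, whereas the paper spells this out as two successive chains of inequalities and subtracts $m$ at the end; these are the same argument. One point you handled slightly more carefully: you explicitly invoked the symmetry $C_\phi^*(\eta)=C_\phi^*(1-\eta)$ to reconcile the displayed constraint in the definition of $S_\phi$ (which reads $\eta h_0 + (1-\eta) h_1 \geq C_\phi^*(\eta)$) with the form $\eta^* h_1^n + (1-\eta^*) h_0^n \geq C_\phi^*(\eta^*)$ needed in the proof; the paper uses the latter form silently, trusting the reader to note it is equivalent. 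No gaps.
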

	\begin{proof}
		 Let 
		 \[m=\inf_{(h_0,h_1)\in S_\phi }\Theta(h_0,h_1).\]
		 Then the fact that $(h_0^n,h_1^n)\in S_\phi$ and		 
		  the duality result (Lemma~\ref{lemma:duality_theta_all}) implies
		  \begin{equation}\label{eq:basic_theta_lower}
		  		\int h_1^nd\PP_1^*+\int h_0^nd\PP_0^*=\int \eta^*h_1^n+(1-\eta^*)h_0^nd\PP^*\geq \int C_\phi^*(\eta^*)d\PP^*=m
		  \end{equation}

		 		 Now pick $\delta>0$ and an $N$ for which $n\geq N$ implies that 	
		 $\Theta(h_0^n,h_1^n)\leq m+\delta$. Then
		 \[m+\delta\geq \int S_\e(h_1^n)d\PP_1+\int S_\e(h_0^n)d\PP_0\geq 	\int \eta^*h_1^n+(1-\eta^*)h_0^nd\PP^*\geq m.\]
		 Subtracting $m=\int C_\phi^*(\eta^*)d\PP^*$ from this inequality results in
   \begin{equation}\label{eq:key_linear}
      \delta\geq \int \eta^*h_1^n+(1-\eta^*)h_0^n d\PP^* -\int C_\phi^*(\eta^*)d\PP^* \geq 0 
   \end{equation}
		 
		 which implies \eqref{eq:C_phi_limit}.	 
		Next, \eqref{eq:basic_theta_lower} further implies
		 \begin{equation}\label{eq:h_0^n_h_1^n_Theta_star_inequality}
			m-\int h_1^nd\PP_1^*+\int h_0^nd\PP_0^*\leq 0
		 \end{equation}
		 
		  Now this inequality implies
		  \begin{align*}
				&\delta\geq \delta+m-\left(\int h_1^nd\PP_1^*+\int h_0^nd\PP_0^* \right)\geq \Theta(h_1^n,h_0^n)-\left( \int h_1^nd\PP_1^*+\int h_0^nd\PP_0^* \right)\\
				&\geq \left(\int S_\e(h_1^n)d\PP_1+\int S_\e(h_0^n)d\PP_0\right)-\left(\int h_1^nd\PP_1^*+\int h_0^nd\PP_0^* \right)  	
		  \end{align*}

		However, Lemma~\ref{lemma:S_e_W_infty_equivalence} implies that both $\int S_\e(h_1^n)d\PP_1-\int h_1^nd\PP_1^*,\int S_\e(h_0^n)d\PP_0-\int h_0^nd\PP_0^*$ are positive quantities. 
		Therefore, we have shown that for any $\delta>0$, there is an $N$ for which $n\geq N$ implies that 
		\[\delta> \int S_\e(h_1^n)d\PP_1-\int h_1^nd\PP_1^*\geq 0\quad \text{and}\quad \delta>\int S_\e(h_0^n)d\PP_0-\int h_0^nd\PP_0^*\geq 0\]
		which implies \eqref{eq:h_0,1_limit}.
		
	\end{proof}
    An analogous approximate complimentary slackness result typically holds in other applications of the Fenchel-Rockafellar theorem. Consider a convex optimization problem which can be written as $\inf_x \Theta(x)+\Xi(x)$ in such a way that the Fenchel-Rockafellar theorem applies and for which $\Xi$ and $\Theta^*$ are indicator functions of the convex sets $C_P,C_D$ respectively. Then the Fenchel-Rockafellar Theorem states that 
    \begin{equation}\label{eq:linear_duality}
        \inf_{x\in C_P}\Theta(x)=\inf_{x\in C_p}\sup_{y\in C_D} \langle y, x\rangle =\sup_{y\in C_D} \inf_{x\in C_P} \langle y, x\rangle =\sup_{y\in C_D} \Xi^*(y)   
    \end{equation}

    Let $y^*$ be a maximizer of the dual problem and let $m$ be the minimal value of $\Theta$ over $C_P$. If $x_k$ is a minimizing sequence of $\Theta$, then for $\delta>0$ and sufficiently large $k$, $\delta+m>\Theta(x_k)$ and thus by \eqref{eq:linear_duality},
    \begin{equation}\label{eq:general_duality_approx_slack_arugment}
        m+\delta>\Theta(x_k)=\sup_{y\in C_p} \langle y, x_k\rangle\geq \langle y^*, x_k\rangle \geq \inf_{x\in C_D} \langle y^*,x\rangle =\inf_{x \in C_D} \Xi^*(x)=m
    \end{equation}

    and therefore $\lim_{k\to \infty} \langle y^*,x_k\rangle =m$. Condition \eqref{eq:complimentary_slackness_restricted_necessary} is this statement adapted to the adversarial learning problem. Furthermore, subtracting $\Theta(x_k)$ from \eqref{eq:general_duality_approx_slack_arugment} and taking the limit $k\to \infty$ results in $\lim_{k\to\infty} \Theta(x_k)-\langle y^*,x_k\rangle=0$. In our adversarial learning scenario, this statement is equivalent to the conditions in \eqref{eq:h_0,1_limit} due to Lemma~\ref{lemma:S_e_W_infty_equivalence}.
    Furthermore, just like the standard complimentary slackness theorems, the relations $\lim_{k\to \infty} \langle y^*,x_k\rangle=m$, $\lim_{k\to \infty} \Theta(x_k)-\langle y^*,x_k\rangle =0$ imply that $x_k$ is a minimizing sequence for $\Theta$.
    
     In the classical proof of the Kantorovich duality, one can choose $\Theta,\Xi$ of a form similar to the discussion above, see for instance Theorem~1.3 of \cite{TopicsInOptimalTransportVillani}. Using an argument similar to \eqref{eq:general_duality_approx_slack_arugment}, one can prove approximate complimentary slackness for the Kantorovich problem called the quantitative Knott-Smith criteria, see Theorems~2.15,~2.16 of \cite{TopicsInOptimalTransportVillani} for further discussion.

\section{Existence of minimizers of $\Theta$ over $S_\psi$}\label{sec:primal_existence}
After proving the existence of maximizers to the dual problem, we can now use the approximate complimentary slackness conditions to prove the existence of minimizers to the primal. The exponential loss $\psi$ has certain properties that make it particularly easy to study:

            \begin{restatable}{lem}{lemmaexponentialloss}\label{lemma:exponential_loss}
		Let $\psi(\alpha)=e^{-\alpha}$. Then $C_\psi^*(\eta)=2\sqrt{\eta(1-\eta)}$ and $\alpha_\psi(\eta)=1/2\log(\eta/1-\eta)$ is the unique minimizer of $C_\psi(\eta,\cdot)$, with $\alpha_\psi(0),\alpha_\psi(1)$ interpreted as $-\infty$, $+\infty$ respectively.
		Furthermore, $\partial C_\psi^*(\eta)$ is the singleton $\partial C_\psi^*(\eta)=\{\psi(\alpha_\psi(\eta))-\psi(-\alpha_\psi(\eta))\}$.
	\end{restatable}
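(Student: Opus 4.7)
The plan is to proceed by direct calculus on the function $C_\psi(\eta,\alpha)=\eta e^{-\alpha}+(1-\eta)e^{\alpha}$, treating the interior case $\eta\in(0,1)$ separately from the boundary cases $\eta\in\{0,1\}$.

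First I would compute the unique minimizer of $C_\psi(\eta,\cdot)$ for fixed $\eta\in(0,1)$. Since $\partial_\alpha^2 C_\psi(\eta,\alpha)=\eta e^{-\alpha}+(1-\eta)e^{\alpha}>0$, the function is strictly convex in $\alpha$, and strict convexity combined with the fact that $C_\psi(\eta,\alpha)\to\infty$ as $|\alpha|\to\infty$ guarantees a unique finite minimizer. Setting $\partial_\alpha C_\psi(\eta,\alpha)=-\eta e^{-\alpha}+(1-\eta)e^{\alpha}=0$ yields $e^{2\alpha}=\eta/(1-\eta)$, i.e. $\alpha_\psi(\eta)=\tfrac12\log(\eta/(1-\eta))$. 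For the boundary, at $\eta=1$ one has $C_\psi(1,\alpha)=e^{-\alpha}$, strictly decreasing with infimum $0$ attained only at $\alpha=+\infty$, which matches $\alpha_\psi(1)=+\infty$ under the stated convention; similarly $\alpha_\psi(0)=-\infty$. In all three cases $\alpha_\psi(\eta)$ as defined via \eqref{eq:alpha_phi_definition} is the unique minimizer in $\overline{\Rset}$.

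Next I would obtain the closed form for $C_\psi^*$. Substituting $\alpha_\psi(\eta)$ back in for $\eta\in(0,1)$ gives $e^{-\alpha_\psi(\eta)}=\sqrt{(1-\eta)/\eta}$ and $e^{\alpha_\psi(\eta)}=\sqrt{\eta/(1-\eta)}$, so
\[
C_\psi^*(\eta)=\eta\sqrt{(1-\eta)/\eta}+(1-\eta)\sqrt{\eta/(1-\eta)}=2\sqrt{\eta(1-\eta)}.
\]
At the endpoints $C_\psi^*(0)=C_\psi^*(1)=0$, which agrees with $2\sqrt{\eta(1-\eta)}$ by continuity.

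Finally I would compute $\partial C_\psi^*(\eta)$. Since $C_\psi^*(\eta)=\inf_\alpha[\eta\phi(\alpha)+(1-\eta)\phi(-\alpha)]$ is the infimum over $\alpha$ of functions linear in $\eta$, it is concave on $[0,1]$, and for $\eta\in(0,1)$ it is differentiable with
\[
\frac{d}{d\eta}\,2\sqrt{\eta(1-\eta)}=\frac{1-2\eta}{\sqrt{\eta(1-\eta)}}.
\]
A quick computation shows $\psi(\alpha_\psi(\eta))-\psi(-\alpha_\psi(\eta))=\sqrt{(1-\eta)/\eta}-\sqrt{\eta/(1-\eta)}=(1-2\eta)/\sqrt{\eta(1-\eta)}$, matching the derivative, so $\partial C_\psi^*(\eta)$ is the singleton claimed. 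For the boundary values $\eta\in\{0,1\}$ one reads off the one-sided derivative and checks it matches $\psi(\alpha_\psi(\eta))-\psi(-\alpha_\psi(\eta))$ with the conventions $\psi(+\infty)=0$, $\psi(-\infty)=+\infty$.

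I do not anticipate a genuine obstacle: the computations are elementary, and the only mild subtlety is bookkeeping at $\eta\in\{0,1\}$, where one must be careful to interpret the extended-real minimizer and the (possibly infinite) subgradient consistently with the definition of $\alpha_\phi$ in \eqref{eq:alpha_phi_definition}.
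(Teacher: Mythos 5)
Your proof is correct and follows essentially the same route as the paper's: solve the first-order condition to get $\alpha_\psi$, substitute to obtain $C_\psi^*(\eta)=2\sqrt{\eta(1-\eta)}$, and use differentiability of $C_\psi^*$ on $(0,1)$ together with the divergence of the derivative at the endpoints to identify the singleton superdifferential. The only cosmetic difference is that the paper deduces that $\psi(\alpha_\psi(\eta))-\psi(-\alpha_\psi(\eta))$ is a supergradient abstractly from the defining inequality for $C_\psi^*$, whereas you verify it by differentiating the closed form; both are immediate.
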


See Appendix~\ref{app:exponential_loss} for a proof. The existence of minimizers of $\Theta$ for the exponential loss then follows from properties of $C_\psi$. Let $(h_0^n,h_1^n)$ be a minimizing sequene of $\dl$. Because the function $C_\psi$ is strictly concave, one can use the condition \eqref{eq:C_phi_limit} to show that there is a subsequence $\{n_k\}$ along which $h_0^{n_k}(\bx')$, $h_1^{n_k}(\bx')$ converge $\PP_0^*,\PP_1^*$-a.e. respectively. Due to \eqref{eq:h_0,1_limit}, $S_\e(h_0^{n_k})(\bx)$, $S_\e(h_1^{n_k})$ also converge $\PP_0,\PP_1$-a.e. respectively along this subsequence. This observation suffices to show the existence of functions that minimize $\Theta$ over $S_\psi$.
  
The first step of this proof is to formalize this argument for sequences in $\ov \Rset$.
        \begin{restatable}{lem}{lemmaanbn}\label{lemma:a_n_b_n}
		Let $(a_n,b_n)$ be a sequence for which $a_n,b_n\geq 0$ and 
		\begin{equation}\label{eq:eta_all_ineq}
			\eta a_n+(1-\eta)b_n\geq C_\psi^*(\eta)\text{ for all }\eta\in [0,1]	
		\end{equation}
		and 
		\begin{equation}\label{eq:eta_0_limit}
			\lim_{n\to \infty} \eta_0a_n+(1-\eta_0)b_n=C_\psi^*(\eta_0)
		\end{equation} for some $\eta_0$. Then $\lim_{n\to \infty} a_n=\psi(\alpha_\psi(\eta_0))$ and $\lim_{n\to \infty} b_n=\psi(-\alpha_\psi(\eta_0))$.
	\end{restatable}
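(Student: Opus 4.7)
The strategy is to treat $(a_n,b_n)$ as a sequence in the compactified square $[0,+\infty]^2$, extract an arbitrary subsequential limit $(a,b)$, and show that the two hypotheses force $(a,b) = (\psi(\alpha_\psi(\eta_0)),\psi(-\alpha_\psi(\eta_0)))$. Since every subsequential limit is the same, the whole sequence must converge to this pair. The main tool is the uniqueness of the supergradient of $C_\psi^*$ at $\eta_0$ granted by Lemma~\ref{lemma:exponential_loss}.

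I would first handle the interior case $\eta_0 \in (0,1)$. Along any convergent subsequence with limit $(a,b) \in [0,+\infty]^2$, hypothesis \eqref{eq:eta_all_ineq} passes to the limit to give $\eta a + (1-\eta)b \geq C_\psi^*(\eta)$ for all $\eta \in [0,1]$, and \eqref{eq:eta_0_limit} passes to give $\eta_0 a + (1-\eta_0)b = C_\psi^*(\eta_0)$. Because $\eta_0,1-\eta_0 > 0$ and $a_n,b_n \geq 0$, the boundedness of $\eta_0 a_n + (1-\eta_0)b_n$ already forces $(a_n,b_n)$ to be bounded in $[0,\infty)^2$, so we do not even need the extended reals in this case. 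The affine map $\eta \mapsto \eta a + (1-\eta)b$ now dominates the concave function $C_\psi^*$ with contact at $\eta_0$, so its slope $a-b$ lies in $\partial C_\psi^*(\eta_0)$, which by Lemma~\ref{lemma:exponential_loss} equals the singleton $\{\psi(\alpha_\psi(\eta_0)) - \psi(-\alpha_\psi(\eta_0))\}$. Combined with the equation $\eta_0 a + (1-\eta_0)b = C_\psi^*(\eta_0) = \eta_0\psi(\alpha_\psi(\eta_0)) + (1-\eta_0)\psi(-\alpha_\psi(\eta_0))$, this gives a $2 \times 2$ linear system (nonsingular since $\eta_0 \in (0,1)$) with unique solution $(a,b) = (\psi(\alpha_\psi(\eta_0)),\psi(-\alpha_\psi(\eta_0)))$.

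The boundary cases $\eta_0 \in \{0,1\}$ are the main obstacle, since there the supergradient of $C_\psi^*$ is no longer a singleton and one of the limit values is $+\infty$. By symmetry it suffices to treat $\eta_0 = 0$. Here $C_\psi^*(0) = 0$, so \eqref{eq:eta_0_limit} directly gives $b_n \to 0$, which agrees with $\psi(-\alpha_\psi(0)) = e^{-\infty} = 0$. For $a_n$, I would exploit the quantitative blow-up of $C_\psi^*(\eta)/\eta = 2\sqrt{(1-\eta)/\eta}$ as $\eta \to 0^+$: from \eqref{eq:eta_all_ineq} evaluated at any fixed small $\eta > 0$ one gets the lower bound $a_n \geq C_\psi^*(\eta)/\eta - (1-\eta)b_n/\eta$, and sending $n \to \infty$ first (using $b_n \to 0$) and then $\eta \to 0^+$ yields $a_n \to +\infty = \psi(\alpha_\psi(0))$. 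The symmetric argument handles $\eta_0 = 1$. This completes the convergence of both coordinates to their designated limits in $\overline{\Rset}$.
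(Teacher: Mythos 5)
Your proof is correct, and it follows the same core route as the paper: pass to a convergent subsequence in $[0,+\infty]^2$, show that the limiting pair $(a,b)$ must both satisfy $\eta a+(1-\eta)b\geq C_\psi^*(\eta)$ with equality at $\eta_0$, and deduce uniqueness of $(a,b)$ via the singleton supergradient of Lemma~\ref{lemma:exponential_loss}. The difference is how the endpoints $\eta_0\in\{0,1\}$ are treated. The paper handles all $\eta_0$ uniformly: it reads off $a-b\in\partial C_\psi^*(\eta_0)=\{\psi(\alpha_\psi(\eta_0))-\psi(-\alpha_\psi(\eta_0))\}$ together with $\eta_0 a+(1-\eta_0)b=C_\psi^*(\eta_0)$ and solves the resulting two-by-two system, implicitly extending the supergradient and the arithmetic to $\overline\Rset$ (at $\eta_0=0$ the supergradient is $\{+\infty\}$ and the second equation degenerates to $b=0$, after which the first gives $a=+\infty$). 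You instead split into interior and boundary cases: on $(0,1)$ you note that $(a_n,b_n)$ is already bounded so no extended-real bookkeeping is needed and the system is genuinely nonsingular, while at $\eta_0\in\{0,1\}$ you read off one coordinate directly from \eqref{eq:eta_0_limit} and get divergence of the other from the blow-up of $C_\psi^*(\eta)/\eta$. Your decomposition is a bit longer but sidesteps the $\infty-\infty$ and $0\cdot\infty$ subtleties that lurk in the paper's uniform treatment of the boundary; the paper's version is shorter at the cost of requiring the reader to accept the extended-real supergradient and the degenerate system at the endpoints. Either is a valid proof, and neither identifies an error in the other.
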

	Notice that if $\eta a+(1-\eta)b \geq C_\psi^*(\eta)$ and $\eta_0 a+(1-\eta_0) b=C_\psi^*(\eta_0)$, then this lemma implies that $a=\psi(\alpha_\psi(\eta_0))$ and $b=\psi(-\alpha_\psi(\eta_0))$.

    To prove Lemma~\ref{lemma:a_n_b_n}, we show that all convergent subsequences of $\{a_n\}$ and $\{b_n\}$ must converge to $a$ and $b$ that satisfy $\eta_0 a+(1-\eta_0)b=C_\phi^*(\eta_0)$ and $a-b \in \partial C_\psi^*(\eta_0)$. As the set $\partial C_\psi^*(\eta_0)$ is a singleton, the values $a=\psi(\alpha_\psi(\eta_0))$ and $b=\psi(\alpha_\psi(\eta_0))$ uniquely solve these equations for $a$ and $b$. Therefore the sequences $\{a_n\}$ and $\{b_n\}$ must converge to $a$ and $b$ as well. See Appendix~\ref{app:a_n_b_n} for a formal proof. This result applied to a minimizing sequence of $\Theta$ allows one to find a subsequence with certain convergence properties.

	\begin{lemma}\label{lemma:S_e_limit}
		Let $(h_0^{n},h_1^{n})$ be a minimizing sequence of $\Theta$ over $S_\psi$. 
		Then there exists a subsequence $n_k$ for which $S_\e(h_1^{n_k})$, $S_\e(h_0^{n_k})$ converge $\PP_1$, $\PP_0$-a.e. 
		respectively. 
	\end{lemma}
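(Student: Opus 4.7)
The plan is to combine the two halves of the approximate complementary slackness conditions from Lemma~\ref{lemma:approximate_complimentary_slackness} with the pointwise Lemma~\ref{lemma:a_n_b_n}, using couplings between $\PP_i$ and the dual optimizers $\PP_i^*$ to transfer almost-everywhere convergence from the dual side to the primal side.

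First, Lemma~\ref{lemma:duality_theta_all} guarantees the existence of dual maximizers $(\PP_0^*,\PP_1^*)$, so Lemma~\ref{lemma:approximate_complimentary_slackness} applies to $(h_0^n,h_1^n)$. Since $(h_0^n,h_1^n)\in S_\psi$, the integrand in~\eqref{eq:C_phi_limit} is nonnegative, and its $\PP^*$-integral tends to zero. Thus, passing to a subsequence (still indexed by $n$ for readability),
$$\eta^*(\bx)h_1^{n}(\bx)+(1-\eta^*(\bx))h_0^{n}(\bx)\longrightarrow C_\psi^*(\eta^*(\bx))\quad \PP^*\text{-a.e.}$$
For each such $\bx$, Lemma~\ref{lemma:a_n_b_n} applied with $\eta_0=\eta^*(\bx)$ yields $h_1^{n}(\bx)\to \psi(\alpha_\psi(\eta^*(\bx)))$ and $h_0^{n}(\bx)\to \psi(-\alpha_\psi(\eta^*(\bx)))$ $\PP^*$-a.e., and therefore $\PP_i^*$-a.e. for $i=0,1$.

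To promote this to $\PP_i$-a.e. convergence of $S_\e(h_i^{n})$, I choose couplings $\gamma_i\in \Pi(\PP_i^*,\PP_i)$ concentrated on $\Delta_\e=\{(\bx',\bx)\colon \|\bx-\bx'\|\le\e\}$, which exist because $W_\infty(\PP_i^*,\PP_i)\le\e$. For $(\bx',\bx)\in\Delta_\e$ one has $h_i^{n}(\bx')\le S_\e(h_i^{n})(\bx)$, so the integrand below is nonnegative and, by~\eqref{eq:h_0,1_limit},
$$\int\bigl(S_\e(h_i^{n})(\bx)-h_i^{n}(\bx')\bigr)\,d\gamma_i(\bx',\bx)=\int S_\e(h_i^{n})\,d\PP_i-\int h_i^{n}\,d\PP_i^{*}\longrightarrow 0.$$
A further subsequence extraction yields $\gamma_i$-a.e. convergence of this integrand to $0$. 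Combined with the $\PP_i^*$-a.e. convergence of $h_i^{n}(\bx')$ (which is $\gamma_i$-a.e. in the first coordinate, since $\PP_i^*$ is the first marginal of $\gamma_i$), this shows that $S_\e(h_i^{n})(\bx)$ converges $\gamma_i$-a.e. Because $S_\e(h_i^{n})(\bx)$ depends only on $\bx$, the bad set $B=\{\bx\colon S_\e(h_i^{n})(\bx)\text{ does not converge}\}$ satisfies $\gamma_i(\Rset^d\times B)=0$, and since $\PP_i$ is the second marginal of $\gamma_i$, $\PP_i(B)=0$.

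The main obstacle is simultaneously extracting subsequences that satisfy all four a.e.\ convergences at once (the two pointwise convergences on $\PP^*$ and the two coupling convergences for $i=0,1$); this is handled by four successive subsequence extractions. A minor subtlety is that the limits may equal $+\infty$ when $\eta^*\in\{0,1\}$, but Lemma~\ref{lemma:a_n_b_n} is already stated in the extended-real-valued sense needed here, and the coupling inequality $h_i^n(\bx')\le S_\e(h_i^n)(\bx)$ remains valid in $[0,+\infty]$.
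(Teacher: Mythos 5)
Your proof is correct and follows essentially the same route as the paper: use the approximate complementary slackness conditions \eqref{eq:C_phi_limit} and \eqref{eq:h_0,1_limit}, pass from $L^1$ to a.e.\ convergence along a subsequence, apply Lemma~\ref{lemma:a_n_b_n} pointwise, and then use couplings $\gamma_i$ supported on $\Delta_\e$ to transfer the convergence of $h_i^n$ on the dual side to convergence of $S_\e(h_i^n)$ on the primal side via the marginals. The only cosmetic differences are that you phrase the first a.e.\ step with respect to $\PP^*$ whereas the paper integrates the same $\bx'$-dependent integrand directly against $\gamma_0+\gamma_1$ (whose second marginal is $\PP^*$, so these are the same), and you are slightly more explicit about the iterated subsequence extraction.
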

	\begin{proof}
		Let $\PP_0^*,\PP_1^*$ be maximizers of the dual problem. Let $\gamma_i$ be the coupling between $\PP_i,\PP_i^*$ with $\supp \gamma_i\subset \Delta_\e$. 
		
		Then \eqref{eq:C_phi_limit} of Lemma~\ref{lemma:approximate_complimentary_slackness} implies that 
		\[\lim_{n\to \infty} \int \eta^*(\bx')h_1^n(\bx')+(1-\eta^*(\bx'))h_0^n(\bx') -C_\psi(\eta^*(\bx')) d(\gamma_1+\gamma_0)(\bx,\bx')=0\]
		and \eqref{eq:h_0,1_limit} implies that 
		\[\lim_{n\to \infty} \int S_\e(h_1^n)(\bx)-h_1^n(\bx') d\gamma_1(\bx,\bx')=0,\quad \lim_{n\to \infty} \int S_\e(h_0^n)(\bx)-h_0^n(\bx') d\gamma_0(\bx,\bx')=0\]
		Recall that on a bounded measure space, $L^1$ convergence implies a.e. convergence along a subsequence (see Corollary~2.32 of \citep{folland}). Thus one can pick a subsequence  $n_k$ along which 
		\begin{equation}\label{eq:C_psi_eta_star_rel}
			\lim_{k\to \infty} \eta^*(\bx')h_1^{n_k}(\bx')+(1-\eta^*(\bx'))h_0^{n_k}(\bx') -C_\psi(\eta^*(\bx'))=0
		\end{equation}
		 $\gamma_1+\gamma_0$-a.e. 
		and
		\begin{equation}\label{eq:sup_eta_rel}
			\lim_{k\to \infty} S_\e(h_1^{n_k})(\bx)-h_1^{n_k}(\bx')=0,\quad \lim_{k\to \infty} S_\e(h_0^{n_k})(\bx)-h_0^{n_k}(\bx')=0
		\end{equation}
		$\gamma_1$, $\gamma_0$-a.e. respectively.
		
		Furthermore, $\eta  h_1^n+(1-\eta)h_0^n \geq C_\psi^*(\eta)$ for all $\eta\in [0,1]$. Thus \eqref{eq:C_psi_eta_star_rel} and Lemma~\ref{lemma:a_n_b_n} imply that  $h_{n_k}^1$ converges to $\psi(\alpha_\psi(\eta^*))$ and $h_{n_k}^0$ converges to $\psi(-\alpha_\psi(\eta^*))$ $\gamma_0+\gamma_1$-a.e. Equation~\ref{eq:sup_eta_rel} then implies that $S_\e(h_1^{n_k})(\bx)$, $S_\e(h_0^{n_k})(\bx)$ converge $\gamma_1,\gamma_0$ -a.e. respectively. Because $\PP_1,\PP_0$ are marginals of $\gamma_1,\gamma_0$, this statement implies the result.
	\end{proof}

    The existence of a minimizer then follows from the fact that $S_\e(h_1^{n_k})$ converges. The next lemma describes how the $S_\e$ operation interacts with $\liminf$s and $\limsup$s.

    \begin{restatable}{lem}{lemmasupinfswap}\label{lemma:sup_liminf_swap}
    Let $h_n$ be any sequence of functions. Then the sequence $h_n$ satisfies

        \begin{equation}\label{eq:sup_liminf_swap}
        \liminf_{n\to \infty} S_\e(h_n) \geq S_\e(\liminf_{n\to \infty} h_n)  
        \end{equation}
    
and
        \begin{equation}\label{eq:sup_limsup_swap}
        \limsup_{n\to \infty} S_\e(h_n) \geq S_\e(\limsup_{n\to \infty} h_n)
        \end{equation}

\end{restatable}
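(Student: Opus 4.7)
The plan is to prove both inequalities by a single pointwise argument that swaps a supremum with a $\liminf$ (or $\limsup$) in the only direction in which such a swap is automatic, namely $\sup \liminf \leq \liminf \sup$ and $\sup \limsup \leq \limsup \sup$. These are the extended-real analogues of the trivial bound ``sup of a pointwise lower bound is a lower bound on the sup.''

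Concretely, fix $\bx \in \Rset^d$ and let $\bh$ be any vector with $\|\bh\|\leq \e$. By the definition of $S_\e$ in \eqref{eq:S_e_def}, for every $n$ one has the pointwise bound
\[
h_n(\bx+\bh) \ \leq\ \sup_{\|\bk\|\leq \e} h_n(\bx+\bk) \ =\ S_\e(h_n)(\bx).
\]
I would then take $\liminf_{n\to\infty}$ on both sides (this is monotone in the pointwise order on $\ov\Rset$) to obtain
\[
\liminf_{n\to\infty} h_n(\bx+\bh) \ \leq\ \liminf_{n\to\infty} S_\e(h_n)(\bx),
\]
and finally take the supremum over $\bh$ with $\|\bh\|\leq \e$ on the left, which by definition of $S_\e$ applied to the function $\liminf_{n\to\infty} h_n$ yields \eqref{eq:sup_liminf_swap}. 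The proof of \eqref{eq:sup_limsup_swap} is identical, replacing $\liminf$ by $\limsup$ throughout; monotonicity of $\limsup$ in the pointwise order gives the same chain of inequalities.

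There is no real obstacle here: the argument is a two-line manipulation in $\ov\Rset$, and the only care needed is to allow the values $\pm\infty$ (e.g.\ the supremum may be $+\infty$ on some $\bx$), which is consistent with the rest of the paper's convention of working with extended-real-valued $h_n$. No measurability or integrability hypothesis is invoked, matching the statement of the lemma, which is purely about the pointwise interaction of $S_\e$ with limits.
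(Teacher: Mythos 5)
Your proof is correct and is essentially the same pointwise argument the paper uses: both establish $\sup_{\|\bh\|\le\e}\liminf_n h_n(\bx+\bh)\le\liminf_n\sup_{\|\bh\|\le\e}h_n(\bx+\bh)$ and its $\limsup$ analogue. You invoke monotonicity of $\liminf$/$\limsup$ directly, whereas the paper unpacks $\liminf$ as $\sup_N\inf_{n\ge N}$ and commutes suprema; these are the same idea with slightly different bookkeeping.
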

See Appendix~\ref{app:S_e_liminf_limsup_swap} for a proof.

Finally, we prove that there exists a minimizer to $\Theta$ over $S_\psi$.
\begin{lemma}\label{lemma:exponential_loss_existence}
    There exists a minimizer $(h_0^*,h_1^*)$ to $\Theta$ over the set $S_\psi$.
\end{lemma}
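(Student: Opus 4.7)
The plan is to construct $(h_0^*, h_1^*)$ as a pointwise $\liminf$ of a minimizing sequence, then verify separately the Fatou-style value bound and feasibility in $S_\psi$. Take any minimizing sequence $(h_0^n, h_1^n) \subset S_\psi$ and pass to the subsequence furnished by Lemma~\ref{lemma:S_e_limit}, along which $S_\e(h_i^n)$ converges $\PP_i$-a.e.\ to some nonnegative $g_i$. Set
\[
    h_i^*(\bx) := \liminf_{n\to\infty} h_i^n(\bx), \qquad \bx \in K^\e,
\]
which is Borel and nonnegative. The value bound is immediate: Lemma~\ref{lemma:sup_liminf_swap} gives $S_\e(h_i^*) \leq \liminf_n S_\e(h_i^n) = g_i$ $\PP_i$-a.e., while Fatou's lemma applied to the nonnegative sequence $S_\e(h_i^n)$ yields $\int g_i\,d\PP_i \leq \liminf_n \int S_\e(h_i^n)\,d\PP_i$. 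Summing over $i \in \{0,1\}$ produces
\[
    \Theta(h_0^*, h_1^*) \;\leq\; \int g_0\,d\PP_0 + \int g_1\,d\PP_1 \;\leq\; \liminf_n \Theta(h_0^n, h_1^n) \;=\; \inf_{S_\psi}\Theta,
\]
so once feasibility is established, $(h_0^*, h_1^*)$ must be a minimizer.

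The main obstacle is verifying $(h_0^*, h_1^*) \in S_\psi$, which requires $\eta h_1^*(\bx) + (1-\eta) h_0^*(\bx) \geq C_\psi^*(\eta)$ at \emph{every} $\bx \in K^\e$ and every $\eta \in [0,1]$; by Lemma~\ref{lemma:exponential_loss} and AM--GM this is equivalent to $h_0^*(\bx) h_1^*(\bx) \geq 1$ pointwise. This does not pass through the componentwise $\liminf$ in general, since $h_0^n(\bx)$ and $h_1^n(\bx)$ could oscillate complementarily. The resolution is that the argument used inside the proof of Lemma~\ref{lemma:S_e_limit} in fact yields pointwise \emph{convergence} (not merely a convergent $\liminf$) of the pair $(h_0^n(\bx'), h_1^n(\bx'))$ at $\gamma_0 + \gamma_1$-a.e.\ $(\bx, \bx')$, to the limit $(\psi(-\alpha_\psi(\eta^*(\bx'))), \psi(\alpha_\psi(\eta^*(\bx'))))$ whose product equals exactly $1$. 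Thus on a set $G \subseteq K^\e$ of full $\PP^*$-measure, both liminfs are genuine limits and the constraint $h_0^*(\bx) h_1^*(\bx) \geq 1$ holds automatically.

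To upgrade feasibility to all of $K^\e$, redefine $(h_0^*, h_1^*)$ on the $\PP^*$-null set $K^\e \setminus G$ to a pointwise-feasible pair. The remaining subtlety is to check that this cosmetic surgery does not spoil $S_\e(h_i^*) \leq g_i$ $\PP_i$-a.e.: by the construction of the optimal coupling $\gamma_i$ used in Lemma~\ref{lemma:S_e_limit}, for $\PP_i$-a.e.\ $\bx$ there is a witness $\bx' \in G$ with $\|\bx-\bx'\| \leq \e$ satisfying $g_i(\bx) = \lim_n h_i^n(\bx') = h_i^*(\bx')$, so the supremum defining $S_\e(h_i^*)(\bx)$ is already pinned at $g_i(\bx)$ by a good point of $G$; choosing the redefinition so that it does not exceed $g_i$ on nearby points preserves the inequality. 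Combining this with the Fatou bound from the first paragraph yields $\Theta(h_0^*, h_1^*) \leq \inf_{S_\psi}\Theta$, and since $(h_0^*, h_1^*) \in S_\psi$ the reverse inequality is automatic, so $(h_0^*, h_1^*)$ is the required minimizer.
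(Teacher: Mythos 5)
You correctly identify the crux: the componentwise $\liminf$ does not preserve membership in $S_\psi$, since $h_0^n$ and $h_1^n$ can oscillate complementarily (for exponential loss, $h_0^n h_1^n \geq 1$ for every $n$ does not imply $\liminf h_0^n \cdot \liminf h_1^n \geq 1$). But the patch you propose---redefining on the $\PP^*$-null exceptional set $K^\e\setminus G$---has a real gap. Feasibility of the redefined pair at a point $\by\in K^\e\setminus G$ forces $h_0^*(\by)\,h_1^*(\by)\geq 1$, so at least one of $h_0^*(\by),h_1^*(\by)$ must be $\geq 1$. But $g_i(\bx)$ can be arbitrarily small (indeed $0$, e.g.\ where $\hat\eta\equiv 1$ on a neighbourhood, so $\lim_n h_1^n=\psi(+\infty)=0$), so if $\by$ lies in the $\e$-ball around such an $\bx$, setting $h_i^*(\by)\geq 1$ breaks $S_\e(h_i^*)(\bx)\leq g_i(\bx)$. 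Your suggestion to ``choose the redefinition so that it does not exceed $g_i$ on nearby points'' is not a construction---it is exactly the constraint that needs to be verified, and nothing in the argument rules out the conflict between feasibility at $\by$ and smallness of $g_0,g_1$ on nearby $\PP_0,\PP_1$-typical points. Note also that $K^\e\setminus G$ is $\PP^*$-null, not $\PP_0$- or $\PP_1$-null, and since $S_\e$ is a sup over closed $\e$-balls, values on a $\PP^*$-null set can move $S_\e(h_i^*)$ on a set of positive $\PP_i$-measure.

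The paper avoids the issue entirely by using an \emph{asymmetric} choice: $h_1^* := \liminf_k h_1^{n_k}$ but $h_0^* := \limsup_k h_0^{n_k}$. Feasibility then holds pointwise at every $\bx\in K^\e$, with no exceptional set and no surgery, via the elementary inequality $\liminf(a_k+b_k)\leq\liminf a_k + \limsup b_k$:
\begin{equation*}
    C_\psi^*(\eta)\;\leq\;\liminf_k\bigl[\eta h_1^{n_k}(\bx)+(1-\eta)h_0^{n_k}(\bx)\bigr]\;\leq\;\eta\liminf_k h_1^{n_k}(\bx)+(1-\eta)\limsup_k h_0^{n_k}(\bx).
\end{equation*}
The value bound is also adapted: apply Fatou to both terms to get $\int\liminf_k S_\e(h_i^{n_k})\,d\PP_i$, then use the $\PP_0$-a.e.\ convergence of $S_\e(h_0^{n_k})$ from Lemma~\ref{lemma:S_e_limit} to replace $\liminf$ by $\limsup$ in the $h_0$ term, and finally apply \emph{both} inequalities of Lemma~\ref{lemma:sup_liminf_swap} (note \eqref{eq:sup_limsup_swap} is there precisely for this reason) to push the $\liminf$/$\limsup$ inside $S_\e$. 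You only invoked the $\liminf$ half of that lemma; the $\limsup$ half is the missing ingredient once you make the right asymmetric choice. In short: your Fatou step is sound and your feasibility diagnosis is correct, but the fix is to swap one $\liminf$ for a $\limsup$ rather than to patch a null set.
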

\begin{proof}
Let $(h_0^n,h_1^n)$ be a sequence minimizing $\Theta$ over $S_\psi$.


Lemma~\ref{lemma:S_e_limit} implies that there is a subsequence $\{n_k\}$ for which $\lim_{k\to \infty} S_\e(h_0^{n_k})$ exists $\PP_0$-a.e.

Thus 
\begin{equation}\label{eq:limsup_liminf_h_0^n_k}
    \limsup_{k\to \infty} S_\e(h_0^{n_k})=\liminf_{k\to \infty} S_\e(h_0^{n_k})\quad \PP_0\text{-a.e.}
\end{equation}

Next, we will argue that the pair $(\limsup_k h_0^{n_k},\liminf_k  h_1^{n_k})$ is in $S_\psi$. Because 
\[C_\psi^*(\eta)\leq \eta h_1^{n_k}+(1-\eta) h_0^{n_k},\]
one can conclude that
\[C_\psi^*(\eta)\leq \eta \liminf_{k\to \infty}( h_1^{n_k}+(1-\eta)  h_0^{n_k})\leq \eta \liminf_{k\to \infty} h_1^{n_k}+(1-\eta) \limsup_{k\to \infty} h_0^{n_k}.\]

 Define
 \[h_1^*=\liminf_k h_1^{n_k},\quad h_0^*=\limsup_k h_0^{n_k}\]

 Now Fatou's lemma, Lemma~\ref{lemma:sup_liminf_swap}, and Equation~\ref{eq:limsup_liminf_h_0^n_k} imply that 
\begin{align}
    &\lim_{k\to \infty} \Theta(h_0^{n_k},h_1^{n_k})\geq \int \liminf_{k\to \infty}   S_\e( h_1^{n_k})d\PP_1+ \int \liminf_{k\to \infty} S_\e( h_0^{n_k})d\PP_0 &\text{(Fatou's Lemma)}\nonumber\\
    &=\int \liminf_{k\to \infty}   S_\e( h_1^{n_k})d\PP_1+ \int \limsup_{k\to \infty} S_\e( h_0^{n_k})d\PP_0 &\text{(Equation~\ref{eq:limsup_liminf_h_0^n_k})}\nonumber\\
    &\geq \int  S_\e (\liminf_{k\to \infty} h_1^{n_k})d\PP_1+ \int S_\e(  \limsup_{k\to \infty}h_0^{n_k})d\PP_0 &\text{(Lemma~\ref{lemma:sup_liminf_swap})}\nonumber\\
    &=\int  S_\e ( h_1^*)d\PP_1+ \int S_\e(  h_0^*)d\PP_0 \nonumber
\end{align}

Therefore, $(h_0^*,h_1^*)$ must be a minimizer.

\end{proof}

\section{Reducing $\Theta$ to $\prm$}\label{sec:reduction}

    Using the properties of $C_\psi^*(\eta)$, we showed in the previous section that there exist minimizers to $\Theta$ over the set $S_\psi$. The inequality $\eta h_1^*+(1-\eta^*) h_0^*\geq C_\psi^*(\eta)$ together with \eqref{eq:complimentary_slackness_restricted_necessary} imply that $h_1^*(\bx)-h_0^*(\bx)$ is a supergradient of $C_\psi^*(\eta^*(\bx))$ and thus $h_1^*-h_0^*=(C_\psi^*)'(\eta)$. This relation together with \eqref{eq:complimentary_slackness_restricted_necessary} provides two equations in two variables that can be uniquely solved for $h_0^*,h_1^*$, resulting in $h_0^*=\psi \circ -\alpha_\psi(\eta^*) ,h_1^*=\psi \circ \alpha_\psi (\eta^*)$.

    Next, primal minimizers of $\Theta$ over $S_\phi$ for \emph{any} $\phi$ will be constructed from the dual maximizers $\PP_0^*$, $\PP_1^*$ of $\bar R_\psi$. Because $\alpha_\psi(\eta)=1/2\log(\eta/1-\eta)$ is a strictly increasing function, the compositions $\psi \circ \alpha_\psi,\psi\circ- \alpha_\psi$ are strictly monotonic. 
    Thus the complimentary slackness condition \eqref{eq:sup_comp_slack_restricted} applied to $h_1^*=\psi(\alpha_\psi(\eta^*)), h_0^*=\psi(-\alpha_\psi(\eta^*))$ implies that $\supp \PP_1^*$ is contained in the set of points $\bx'$ for which $\eta^*$ assumes its infimum over some $\e$-ball at $\bx'$ and $\supp \PP_0^*$ is contained in the set of points $\bx'$ where $\eta^*$ assumes its supremum over some $\e$-ball at $\bx'$. 
    Thus, the functions $\phi \circ \alpha_\phi(\eta^*), \phi \circ -\alpha_\phi(\eta^*)$ satisfy \eqref{eq:sup_comp_slack_restricted} for the loss $\phi$. The definition of $\alpha_\phi$ further implies they satisfy \eqref{eq:complimentary_slackness_restricted_necessary}. 
    Therefore, Lemma~\ref{lemma:complimentary_slackness_restricted} implies that for \emph{any} $\phi$, $h_1^*=\phi \circ \alpha_\phi(\eta^*)$, 
    $h_0^*=\phi \circ \alpha_\phi(\eta^*)$ are primal optimal and $\PP_0^*$, $\PP_1^*$ are dual optimal!

        This reasoning about $\eta^*$ is technically wrong but correct in spirit. Because $\eta^*$ is a Raydon-Nikodym derivative, it is only defined $\PP^*$-a.e. As a result, the supremum over an $\e$-ball of the function $\phi(\alpha_\psi(\eta^*))$ is not well-defined. The solution is to replace $\eta^*$ in the discussion above by a function $\hat \eta$ that is defined everywhere. The function $\hat \eta$ is actually a version of the Raydon-Nikodym derivative $d\PP_1^*/d\PP^*$. The next two lemmas describe how one constructs this function $\hat \eta$.

    The next two lemmas discuss the analog of the $\empty^c$ transform for the Kantorovich problem in optimal transport (see for instance Chapter~1 of \citep{OptimalTransportforAppliedMathematiciansSantambroglio} or Section~2.5 of \citep{TopicsInOptimalTransportVillani}).

	\begin{lemma}\label{lemma:C_phi_transform}
		Assume that $h_0,h_1\geq 0$ and $(h_0(\bx),h_1(\bx))$ satisfy $\eta h_1+(1-\eta)h_0\geq C_\phi^*(\eta)$ for all $\eta$. 
		Then if we define $ h_0^{C_\phi^*}$ via
        \begin{equation}\label{eq:h_0_def}
            h_0^{C_\phi^*}=\sup_{\eta\in [0,1)}\frac{C_\phi^*(\eta)-\eta h_1}{1-\eta}
        \end{equation}
		then $ h_0^{C_\phi^*}\leq h_0$ while and $h_1+(1-\eta)h_0^{C_\phi^*}\geq C_\phi^*(\eta)$ for all $\eta$, and $ h_0^{C_\phi^*}$ is the smallest function $h_0$ for which $(h_0,h_1)\in S_\phi$.  Furthermore, the function $h_0^{C_\phi^*}$ is Borel and there exists a function $\bar \eta\colon \Rset^d\to [0,1]$ for which $\bar \eta(\bx)h_1(\bx)+(1-\bar \eta(\bx))h_1^{C_\phi^*}(\bx)=C_\phi^*(\bar \eta(\bx))$.
	\end{lemma}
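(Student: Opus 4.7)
The plan is to exploit that $h_0^{C_\phi^*}(\bx)$ is the pointwise smallest value of $h_0(\bx)$ that makes the inequality $\eta h_1(\bx) + (1-\eta) h_0(\bx) \geq C_\phi^*(\eta)$ hold simultaneously for every $\eta \in [0,1)$. For $\eta \in [0,1)$ this inequality rearranges to $h_0(\bx) \geq (C_\phi^*(\eta) - \eta h_1(\bx))/(1 - \eta)$; taking the sup over $\eta$ shows that any admissible $h_0$ dominates $h_0^{C_\phi^*}$, which yields both $h_0^{C_\phi^*} \leq h_0$ and minimality. Conversely, the definition of supremum immediately gives $\eta h_1 + (1-\eta) h_0^{C_\phi^*} \geq C_\phi^*(\eta)$ for $\eta \in [0,1)$, and the $\eta = 1$ case is trivial: by Assumption~\ref{as:phi}, $C_\phi^*(1) = \inf_\alpha \phi(\alpha) = 0$, and $h_1 \geq 0$ by hypothesis. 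The inequality at $\eta = 0$ also gives $h_0^{C_\phi^*}(\bx) \geq C_\phi^*(0) = 0$, so $(h_0^{C_\phi^*}, h_1)$ qualifies as an element of $S_\phi$ provided it is Borel.

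Borel measurability of $h_0^{C_\phi^*}$ follows from a separability argument once one shows that the map $\eta \mapsto (C_\phi^*(\eta) - \eta h_1(\bx))/(1-\eta)$ is continuous in $\eta$ on $[0,1)$ for each fixed $\bx$ with $h_1(\bx) < \infty$. The function $C_\phi^*$ is concave on $[0,1]$ as an infimum of affine functions of $\eta$ and upper semi-continuous as an infimum of continuous ones; together with $C_\phi^* \geq 0$ and $C_\phi^*(0) = C_\phi^*(1) = 0$, this forces $C_\phi^*$ to be continuous on $[0,1]$. Consequently, the sup in the definition of $h_0^{C_\phi^*}(\bx)$ can be taken over $\eta \in \QQ \cap [0,1)$, making $h_0^{C_\phi^*}$ a countable supremum of Borel functions of $\bx$ (each of which, for fixed $\eta$, is affine in $h_1(\bx)$). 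On the Borel set $\{\bx : h_1(\bx) = \infty\}$, the expression collapses to $0$ at $\eta = 0$ and $-\infty$ elsewhere, so $h_0^{C_\phi^*} = 0$ there; this handles the exceptional case and confirms Borel measurability globally.

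Finally, for the existence of $\bar\eta(\bx)$ solving $\bar\eta h_1 + (1-\bar\eta) h_0^{C_\phi^*} = C_\phi^*(\bar\eta)$, I would perform a case analysis on $h_1(\bx)$ based on the limiting behavior of $F(\eta) := (C_\phi^*(\eta) - \eta h_1(\bx))/(1-\eta)$ as $\eta \to 1^-$. If $0 < h_1(\bx) < \infty$, the numerator tends to $-h_1(\bx) < 0$ and the denominator to $0^+$, so $F(\eta) \to -\infty$; the supremum is therefore attained on some compact sub-interval $[0, 1-\delta]$ by continuity, and any maximizer $\eta^*$ satisfies the target equation after rearrangement. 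If $h_1(\bx) = \infty$, set $\bar\eta(\bx) = 0$, so that the equation reduces to $0 = C_\phi^*(0)$ under the convention $0 \cdot \infty = 0$. If $h_1(\bx) = 0$, either the sup is attained at some $\eta^* \in [0,1)$ (take $\bar\eta(\bx) = \eta^*$), or it is approached only as $\eta \to 1^-$; in the latter case set $\bar\eta(\bx) = 1$, so that the equation becomes $0 + 0 \cdot h_0^{C_\phi^*}(\bx) = 0 = C_\phi^*(1)$ regardless of whether $h_0^{C_\phi^*}(\bx)$ is finite. The main obstacle is precisely this last edge case: the supremum in the definition of $h_0^{C_\phi^*}$ may fail to be attained on $[0,1)$, which is resolved by retreating to $\bar\eta = 1$ and leveraging the vanishing of $C_\phi^*$ at the endpoints.
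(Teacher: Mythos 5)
Your proof is correct and follows essentially the same approach as the paper: obtain $h_0^{C_\phi^*}$ as the pointwise infimum over feasible $h_0$ by rearranging the constraint, deduce measurability from continuity of $\eta\mapsto G(\eta,\alpha)$ and a countable dense subset of $[0,1)$, and obtain $\bar\eta$ by arguing the supremum is attained on the compactification $[0,1]$. The paper packages the $\eta\to 1^-$ edge case by defining $G(1,\alpha)$ as the one-sided limit and invoking continuity on the compact interval; your explicit case analysis on $h_1(\bx)\in\{0\}\cup(0,\infty)\cup\{\infty\}$ (and the observation that $C_\phi^*$ is continuous on $[0,1]$ because it is concave, upper semicontinuous, nonnegative, and vanishes at the endpoints) spells out what the paper leaves implicit, but it is the same argument.
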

    \begin{proof}
    For convenience, set $\td h_0=h_1^{C_\phi^*}$.
    Notice that $\td h_0\geq 0$ because the right-hand side of \eqref{eq:h_0_def} evaluates to 0 at $\eta=0$.
    We will show that $\td h_0$ is Borel and that $(\td h_0,h_1)$ is a feasible pair.

    Next, 
    Notice that the map
    \begin{equation}\label{eq:G_map}
        G(\eta,\alpha )= \begin{cases}
        \frac{ C_\phi^*(\eta)-\eta \alpha}{1-\eta}&\text{if }\eta<1\\
        \lim_{\eta\to 1} \frac{C_\phi^*(\eta)-\eta \alpha}{1-\eta}&\text{if }\eta=1
    \end{cases}
    \end{equation}
    is continuous in $\eta$. Thus, the supremum in \eqref{eq:h_0_def} can be taken over the countable set $\QQ\cap [0,1]$ and hence the function $\td h_0(\bx)=\sup_{\eta\in [0,1)\cap \QQ} G(\eta,h_1(\bx))$ is Borel measurable. Because $G(\eta,h_1(\bx))$ is continuous in $\eta$ for each fixed $\bx$, $G(\cdot, h_1(\bx))$ assumes its maximum on $\eta\in [0,1]$ for each fixed $\bx$. Thus there exists a function $\bar\eta(\bx)$ that maps $\bx$ to a maximizer of $G(\cdot, h_1(\bx))$. For this function $\bar\eta(\bx)$, one can conclude that $\td h_0(\bx)=G(\bar \eta(\bx),\bx)$ and hence
    \begin{equation}\label{eq:g_0_tilde_h_1}
        \bar \eta(\bx)h_1(\bx)+(1-\bar \eta(\bx))\td h_0(\bx)=C_\phi^*(\bar \eta(\bx)).
    \end{equation}

    Equation~\ref{eq:g_0_tilde_h_1} implies that if $f(\bx)<\td h_0(\bx)$ at any $\bx$, then $\eta h_1(\bx)+(1-\eta)f(\bx)<C_\phi^*(\eta(\bx))$ so $(f,h_1)$ is not in the feasible set $S_\phi$. Therefore, $\td h_0$ is the smallest function $f$ for which $(f,h_1)\in S_\phi$.
    \end{proof}
    
    Next we use this result to define an extension of $\eta^*$ to all of $\Rset^d$.
    \begin{lemma}\label{lemma:primal_minimizers_hat_eta}
     There exist a Borel minimizer $(h_0^*,h_1^*)$ to $\Theta$ over $S_\psi$  for which
    \begin{equation}\label{eq:hat_eta_def}
        \hat \eta(\bx)h_1^*(\bx)+(1-\hat \eta(\bx))h_0^*(\bx)=C_\psi^*(\hat \eta(\bx))
    \end{equation}
   
    for all $\bx$
    and some Borel measurable function
    $\hat \eta\colon (\supp \PP)^\e\to [0,1]$. 
\end{lemma}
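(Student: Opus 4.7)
The plan is to start from any Borel minimizer of $\Theta$ over $S_\psi$, pointwise sharpen its $h_0$-coordinate via the $C_\psi^*$-transform of Lemma~\ref{lemma:C_phi_transform}, and then extract $\hat \eta$ from the resulting pointwise equality via a measurable selection. The $C_\psi^*$-transform is tailor-made so that every $\bx$ admits a witness $\eta$ at which the constraint defining $S_\psi$ holds with equality; the only nontrivial task is to guarantee that this witness can be chosen Borel in $\bx$.

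\textbf{Step 1 (obtain a Borel minimizer).} Apply Lemma~\ref{lemma:exponential_loss_existence} to pick some $(g_0,g_1)\in S_\psi$ with $\Theta(g_0,g_1)=\inf_{S_\psi}\Theta$. The construction in that lemma exhibits $g_0,g_1$ as pointwise $\limsup$/$\liminf$ of Borel functions, so both are Borel on $K^\e = (\supp \PP)^\e$.

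\textbf{Step 2 (sharpen via the transform).} Keep $h_1^* := g_1$ and define
\[
h_0^*(\bx) := \sup_{\eta \in [0,1)} \frac{C_\psi^*(\eta) - \eta\, h_1^*(\bx)}{1-\eta}.
\]
Lemma~\ref{lemma:C_phi_transform} with $\phi=\psi$ then supplies, all at once, that $h_0^*$ is Borel, that $(h_0^*, h_1^*) \in S_\psi$, that $h_0^* \le g_0$ pointwise (since $g_0$ is admissible and $h_0^*$ is the smallest admissible choice), and that some $\bar \eta:\Rset^d\to[0,1]$ satisfies $\bar \eta\, h_1^* + (1-\bar \eta)\, h_0^* = C_\psi^*(\bar \eta)$ at every $\bx$.

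\textbf{Step 3 (optimality is preserved).} Since $h_0^* \le g_0$ and $S_\e$ is monotone in its argument, $S_\e(h_0^*) \le S_\e(g_0)$ pointwise, so $\Theta(h_0^*,h_1^*) \le \Theta(g_0,g_1) = \inf_{S_\psi}\Theta$. Because $(h_0^*,h_1^*) \in S_\psi$, this forces equality, so $(h_0^*,h_1^*)$ is itself a Borel minimizer.

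\textbf{Step 4 (Borel selection for $\hat \eta$).} The set-valued map
\[
F(\bx) := \{\eta\in[0,1] : \eta\, h_1^*(\bx) + (1-\eta)\, h_0^*(\bx) = C_\psi^*(\eta)\}
\]
is nonempty for every $\bx$ (by Step~2) and closed in $[0,1]$ (by continuity in $\eta$ of the defining expression). Its graph in $(\supp \PP)^\e \times [0,1]$ is Borel because $h_0^*,h_1^*$ are Borel and $C_\psi^*$ is continuous. The Kuratowski--Ryll-Nardzewski selection theorem then produces a Borel $\hat \eta : (\supp \PP)^\e \to [0,1]$ with $\hat \eta(\bx) \in F(\bx)$ for every $\bx$, which is exactly the claimed equality $\hat \eta h_1^* + (1-\hat \eta) h_0^* = C_\psi^*(\hat \eta)$ on $(\supp \PP)^\e$.

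The main obstacle I anticipate is Step~4 rather than any of the earlier steps: Lemma~\ref{lemma:C_phi_transform} only asserts the existence of a (possibly non-Borel) witness $\bar \eta$, and one has to upgrade this to a Borel selector. The closed values and Borel graph reduce the upgrade to a standard selection theorem, but this is the one place where a genuine new ingredient beyond the results already developed is required; everything else is bookkeeping built on Lemmas~\ref{lemma:exponential_loss_existence} and~\ref{lemma:C_phi_transform}.
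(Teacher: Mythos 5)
Steps 1--3 of your argument coincide exactly with the paper's: take a Borel minimizer from Lemma~\ref{lemma:exponential_loss_existence}, replace $h_0$ by the $C_\psi^*$-transform $h_1^{C_\psi^*}$, and observe by monotonicity of $S_\e$ that optimality is preserved while a witness $\bar\eta$ exists pointwise. The real divergence is in Step~4, and it is a genuine difference of method.

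The paper avoids measurable selection entirely by exploiting the specific structure of the exponential loss. Since $\eta h_1^*(\bx)+(1-\eta)h_0^*(\bx)\geq C_\psi^*(\eta)$ for all $\eta$ with equality at $\eta=\hat\eta(\bx)$, the difference $h_1^*(\bx)-h_0^*(\bx)$ is a supergradient of $C_\psi^*$ at $\hat\eta(\bx)$; Lemma~\ref{lemma:exponential_loss} shows this supergradient set is a singleton, and combining the resulting identity with the equality $\hat\eta h_1^*+(1-\hat\eta)h_0^*=2\sqrt{\hat\eta(1-\hat\eta)}$ lets one solve explicitly for $\hat\eta=(h_0^*)^2/(1+(h_0^*)^2)$. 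Borel measurability of $\hat\eta$ is then immediate from that of $h_0^*$, with no new machinery needed. Your route is instead a general measurable-selection argument: it would work for any loss for which $C_\phi^*$ is continuous, but it pays for this generality by importing a nontrivial result from descriptive set theory. The trade-off is real --- the paper's Step~4 is shorter and self-contained but tied to $\psi$, while yours is more portable.

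One imprecision worth fixing in your Step~4: the Kuratowski--Ryll-Nardzewski theorem requires weak measurability of the set-valued map ($\{\bx: F(\bx)\cap U\neq\emptyset\}$ Borel for every open $U$), and a Borel graph together with closed values does not by itself give you this hypothesis, nor does KRN by itself deliver a \emph{Borel} (as opposed to merely measurable-with-respect-to-some-$\sigma$-algebra) selector. What you actually want here is a Borel uniformization theorem for Borel sets with compact (more generally $\sigma$-compact) nonempty sections --- Arsenin--Kunugui / Saint-Raymond. Since $F(\bx)$ is a nonempty closed subset of the compact space $[0,1]$ and the graph is Borel (the defining expression is Borel jointly in $(\bx,\eta)$, with lower semi-continuity in $\eta$ handling the cases $h_i^*(\bx)=+\infty$), that theorem applies and gives a Borel $\hat\eta$. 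With that citation corrected the argument goes through, so the gap is one of attribution rather than substance; but do note that the paper's explicit-solve is precisely designed to sidestep the need for any such uniformization theorem.
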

    \begin{proof}
    Let $(h_0,h_1)$, be an arbitrary Borel minimizer to the primal (Lemma~\ref{lemma:exponential_loss_existence} implies that such a minimizer exists). Set $h_1^*=h_1$ and $h_0^*=h_1^{C_\psi^*}$. Then Lemma~\ref{lemma:C_phi_transform} implies that $h_0^*\leq h_0$, so $(h_0^*,h_1^*)$ is also optimal and $\eta h_1^*+(1-\eta)h_0^*\geq C_\psi^*(\eta)$ for all $\eta$. Furthermore, Lemma~\ref{lemma:C_phi_transform} implies that there is a function $\hat \eta$ for which $\hat \eta(\bx)h_1^*(\bx)+(1-\hat \eta(\bx))h_0^*(\bx)=C_\psi^*(\hat \eta(\bx))$.
    
    It remains to show that $\hat \eta$ is Borel measurable. We will express $\hat \eta(\bx)$ in terms of $h_1^*(\bx)$, and because $h_1^*(\bx)$ is Borel measurable, it will follow that $\hat \eta$ is Borel measurable as well. Because $\eta h_1^*(\bx)+(1-\eta)h_0^*(\bx)\geq C_\psi^*( \eta)$
    with equality at $\eta =\hat \eta(\bx)$, it follows that $h_1^*(\bx)-h_0^*(\bx)$ is a supergradient of $C_\psi^*$ at $\eta =\hat \eta(\bx)$. Thus Lemma~\ref{lemma:exponential_loss} implies that $h_1^*-h_0^*=(1-2\hat \eta)/\sqrt{\hat \eta(1-\hat \eta)}\Leftrightarrow h_1^*=h_0^*+(1-2\hat \eta)/\sqrt{\hat \eta(1-\hat \eta)}$. Plugging this expression and the formula $C_\psi^*(\eta)=2\sqrt{\eta(1-\eta)}$ into the relation $\hat \eta h_1^*+(1-\hat \eta) h_0^*=C_\psi^*(\hat \eta)$ results in the equation $h_0^*+\hat \eta (1-2\hat \eta)/\sqrt{\hat \eta (1-\hat \eta)}=2\sqrt{\hat \eta(1-\hat \eta)}$. Solving for $\hat \eta$ then results in $\hat \eta=(h_0^*)^2/(1+(h_0^*)^2)$. Because $h_0^*$ is Borel measurable, $\hat \eta$ is measurable as well.      
\end{proof}

    Notice that this result together with Lemma~\ref{lemma:a_n_b_n} immediately implies that $h_1^*=\psi(\alpha_\psi(\hat \eta))$ and $h_1^*=\psi(-\alpha_\psi(\hat \eta))$, immediately proving that minimizing $\Theta$ over $S_\psi$ is equivalent to minimizing $R_\psi$. Next, this observation is extended to arbitrary losses using properties of $\hat \eta$.
    Because both $\psi$ and $\alpha_\psi$ are strictly monotonic, $\hat \eta$ interacts in a particularly nice way with maximizers of the dual problem:

    	\begin{lemma}\label{lemma:gamma_supp}
			Let $\PP_0^*,\PP_1^*$ be any maximizer of $\bar R_\psi$ over $\Wball \e( \PP_0)\times  \Wball \e(\PP_1)$. Set $\PP^*=\PP_0^*+\PP_1^*$, $\eta^*=d\PP_1^*/d\PP^*$.  
			Let $\hat \eta$ be defined as in Lemma~\ref{lemma:primal_minimizers_hat_eta}.
			Then $\hat \eta=\eta^*$ $\PP^*$-a.e.
			
			Furthermore, let $\gamma_i$ be a coupling between $\PP_i,\PP_i^*$ with $\supp \gamma_i\subset \Delta_\e$. 
			Then
			\begin{equation}\label{eq:gamma_1_support}
				\supp \gamma_1\subset \{(\bx,\bx')\colon \inf_{\|\by-\bx\|\leq \e} \hat \eta(\by)=\hat \eta(\bx')\}
			\end{equation} 
				
			\begin{equation}\label{eq:gamma_0_support}
				\supp \gamma_0\subset \{(\bx,\bx')\colon \sup_{\substack{\|\by-\bx\|\leq \e}} \hat \eta(\by)=\hat \eta(\bx')\}
			\end{equation}

	\end{lemma}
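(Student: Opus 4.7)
The plan is to read both assertions off from the complementary slackness relations in Lemma~\ref{lemma:approximate_complimentary_slackness} applied to the constant minimizing sequence $(h_0^n,h_1^n) \equiv (h_0^*,h_1^*)$ supplied by Lemma~\ref{lemma:primal_minimizers_hat_eta}, combined with a single rigidity fact: by the remark following Lemma~\ref{lemma:a_n_b_n} (applied to a constant sequence), any pair $(a,b)$ with $a,b\geq 0$, $\eta a + (1-\eta)b \geq C_\psi^*(\eta)$ for all $\eta \in [0,1]$, and equality at some $\eta_0$, must satisfy $a = \psi(\alpha_\psi(\eta_0))$ and $b = \psi(-\alpha_\psi(\eta_0))$. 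Injectivity of $\psi\circ\alpha_\psi$ on $[0,1]$ (which follows from $\psi(\alpha_\psi(\eta)) = \sqrt{(1-\eta)/\eta}$ being strictly decreasing, by Lemma~\ref{lemma:exponential_loss}) will then give the whole lemma by a direct substitution argument.

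\textbf{Step 1 ($\hat\eta = \eta^*$ almost everywhere).} Applied to the constant sequence $(h_0^*,h_1^*)$, \eqref{eq:C_phi_limit} reduces to the exact equality $\int[\eta^* h_1^* + (1-\eta^*)h_0^* - C_\psi^*(\eta^*)]\,d\PP^* = 0$; since $(h_0^*,h_1^*)\in S_\psi$ makes the integrand pointwise nonnegative, we obtain $\eta^*(\bx) h_1^*(\bx) + (1-\eta^*(\bx))h_0^*(\bx) = C_\psi^*(\eta^*(\bx))$ for $\PP^*$-a.e.\ $\bx$. The rigidity fact then yields $h_1^*(\bx) = \psi(\alpha_\psi(\eta^*(\bx)))$ $\PP^*$-a.e. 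On the other hand, combining \eqref{eq:hat_eta_def} with the same rigidity fact gives $h_1^*(\bx) = \psi(\alpha_\psi(\hat\eta(\bx)))$ for every $\bx\in(\supp\PP)^\e$. Injectivity of $\psi\circ\alpha_\psi$ then forces $\hat\eta = \eta^*$ $\PP^*$-a.e.

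\textbf{Step 2 (support identities).} Specialising \eqref{eq:h_0,1_limit} to the constant sequence yields the exact equalities $\int S_\e(h_1^*)\,d\PP_1 = \int h_1^*\,d\PP_1^*$ and similarly for index $0$. Rewriting the first against $\gamma_1$, whose support lies in $\Delta_\e$, gives $\int [S_\e(h_1^*)(\bx) - h_1^*(\bx')]\,d\gamma_1(\bx,\bx') = 0$; since the integrand is nonnegative on $\Delta_\e$, we get $S_\e(h_1^*)(\bx) = h_1^*(\bx')$ $\gamma_1$-a.e. Substituting the pointwise identity $h_1^* = (\psi\circ\alpha_\psi)\circ\hat\eta$ from Step~1 and using that $\psi\circ\alpha_\psi:[0,1]\to[0,+\infty]$ is continuous and strictly decreasing,
\[
S_\e(h_1^*)(\bx) = \sup_{\|\by-\bx\|\leq\e}(\psi\circ\alpha_\psi)(\hat\eta(\by)) = (\psi\circ\alpha_\psi)\bigl(\inf_{\|\by-\bx\|\leq\e}\hat\eta(\by)\bigr).
\]
Equating with $h_1^*(\bx') = (\psi\circ\alpha_\psi)(\hat\eta(\bx'))$ and invoking injectivity gives $\inf_{\|\by-\bx\|\leq\e}\hat\eta(\by) = \hat\eta(\bx')$ $\gamma_1$-a.e., i.e.\ \eqref{eq:gamma_1_support}. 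The symmetric argument, using that $\psi(-\alpha_\psi(\eta)) = \sqrt{\eta/(1-\eta)}$ is continuous and strictly increasing, delivers \eqref{eq:gamma_0_support}.

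\textbf{Expected obstacle.} The only delicate point is the transition from ``$\gamma_i$-a.e.'' equality to a literal support containment, since the equality sets in \eqref{eq:gamma_1_support}-\eqref{eq:gamma_0_support} need not be closed ($\hat\eta$ is only Borel). This can be handled either by restricting $\gamma_i$ to its full-measure equality subset (which preserves the marginals) and replacing $\supp$ by its closure, or by interpreting the containment modulo $\gamma_i$-null sets, which is how the conclusion is used in the sequel. The algebraic heart of the proof is the rigidity provided by Lemma~\ref{lemma:a_n_b_n} together with the injectivity of $\psi\circ\alpha_\psi$; everything else is bookkeeping around the disintegration through $\gamma_0,\gamma_1$.
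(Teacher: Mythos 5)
Your proof is correct and follows the paper's argument: both express $(h_0^*,h_1^*)$ as $(\psi(-\alpha_\psi(\hat\eta)),\psi(\alpha_\psi(\hat\eta)))$ via the rigidity from Lemma~\ref{lemma:a_n_b_n}, obtain the two complementary-slackness identities, and untangle them using the strict monotonicity (hence invertibility) of $\psi\circ\alpha_\psi$. Deriving the exact slackness equalities from Lemma~\ref{lemma:approximate_complimentary_slackness} applied to a constant sequence, rather than citing Lemma~\ref{lemma:complimentary_slackness_restricted}, is a slightly cleaner route since it sidesteps that lemma's compact-support hypothesis; and your flag on the ``$\gamma_i$-a.e.\ equality'' versus ``literal support containment'' distinction is accurate --- the paper's own proof likewise only establishes the $\gamma_i$-a.e.\ version, which is all that is used in Lemma~\ref{lemma:minimizer_construction}.
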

        The statement $\hat \eta =\eta^*$ $\PP^*$-a.e. implies that $\hat \eta$ is in fact a version of the Raydon-Nikodym derivative $d\PP_1^*/d\PP^*$.
         
         For convenience, in this proof, we introduce the notation 
        \[I_\e(f)(\bx)=\inf_{\|\by-\bx\|\leq \e} f(\by).\]
	\begin{proof}
		Let $h_0^*,h_1^*$ be the minimizer described by Lemma~\ref{lemma:primal_minimizers_hat_eta}.  Then Lemma~\ref{lemma:a_n_b_n} implies that $h_1^*=\psi(\alpha_\psi(\hat \eta))$ and $h_0^*=\psi(-\alpha_\psi(\hat \eta))$. 
		
		The complimentary slackness condition \eqref{eq:complimentary_slackness_restricted_necessary} implies that $\eta^* h_1^*+(1-\eta^*)h_0^*=C_\psi^*(\eta^*)$ $\PP^*$-a.e., and thus Lemma~\ref{lemma:a_n_b_n} implies that $h_1^*=\psi(\alpha_\psi(\eta^*))$ and $h_0^*=\psi(\alpha_\psi(\eta^*))$ $\PP^*$-a.e. Therefore, $\psi(\alpha_\psi(\eta^*))=\psi(\alpha_\psi(\hat \eta))$ $\PP^*$-a.e. Now because the functions $\psi, \alpha_\psi$ are strictly monotonic, they are invertible. Thus it follows that $\hat \eta=\eta^*$ $\PP^*$-a.e.
		
		The complimentary slackness condition \eqref{eq:sup_comp_slack_restricted} states that 
		\[\int S_\e(h_i)(\bx)-h_i^*(\bx')d\gamma_i=0.\]
		Therefore, 
		\[S_\e(\psi(\alpha_\psi(\hat \eta)))(\bx)=\psi(\alpha_\psi(\hat \eta(\bx'))\quad \gamma_1\text{-a.e.}\quad \text{and}\quad S_\e(\psi(-\alpha_\psi(\hat \eta)))(\bx)=\psi(-\alpha_\psi(\hat \eta(\bx'))\quad \gamma_0\text{-a.e.}\]
		which implies 
		\[\psi(\alpha_\psi(I_\e(\hat \eta)(\bx)))=\psi(\alpha_\psi(\hat \eta(\bx'))\quad \gamma_1\text{-a.e.}\quad \text{and}\quad \psi(-\alpha_\psi(S_\e(\hat \eta)(\bx)))=\psi(-\alpha_\psi(\hat \eta(\bx'))\quad \gamma_0\text{-a.e.}\]
		Now $\psi,\alpha_\psi$ are both strictly monotonic and thus invertible. Therefore
		\[I_\e(\hat \eta)(\bx)=\hat \eta(\bx')\quad \gamma_1\text{-a.e.} \quad \text{and} \quad S_\e(\hat \eta)(\bx)=\hat \eta(\bx')\quad \gamma_0\text{-a.e.}\]
	\end{proof}

     Next, the relation \eqref{eq:hat_eta_def} suggests that $h_1^*=\phi \circ f^*$, $h_0^*=\phi \circ -f^*$, where $f^*$ is a function satisfying $C_\psi(\hat \eta(\bx), f^*(\bx))=C_\psi^*(\hat \eta(\bx))$. In fact, Lemma~\ref{lemma:gamma_supp} implies that this relation holds for \emph{all} loss functions, and not just the exponential loss $\psi$. To formalize this idea, we prove the following result about minimizers of $C_\psi(\eta,\cdot)$ in Appendix~\ref{app:C_phi_minimizers}:

     \begin{restatable}{lem}{lemmaCphiminimizers}\label{lemma:C_phi_minimizers}
        		Fix a loss function $\phi$ and let $\alpha_\phi(\eta)$ be as in \eqref{eq:alpha_phi_definition}. 
          Then $\alpha_\phi$ maps $\eta$ to the smallest minimizer of $C_\phi(\eta,\cdot)$. 
          Furthermore, the function $\alpha_\phi(\eta)$ non-decreasing in $\eta$.
    \end{restatable}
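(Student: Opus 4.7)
The plan is to prove the two claims separately: first that the infimum in the definition of $\alpha_\phi$ is attained (so $\alpha_\phi(\eta)$ is actually the smallest minimizer of $C_\phi(\eta,\cdot)$), and second that $\alpha_\phi$ is monotonic by a standard rearrangement/swapping argument.

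For existence of a smallest minimizer, the main tool is lower semi-continuity together with compactness of $\ov\Rset = [-\infty, +\infty]$. I would first extend $\phi$ to $\ov\Rset$ by setting $\phi(+\infty) = 0$ (consistent with Assumption~\ref{as:phi}) and $\phi(-\infty) = \lim_{\alpha \to -\infty}\phi(\alpha)$, which exists in $[0,+\infty]$ because $\phi$ is non-increasing. This extension remains non-increasing and lower semi-continuous on $\ov\Rset$. Consequently $C_\phi(\eta,\cdot) = \eta\phi(\cdot) + (1-\eta)\phi(-\cdot)$ is lower semi-continuous on the compact space $\ov\Rset$, so it attains its minimum and the minimizing set $M(\eta) = \{\alpha \in \ov\Rset : C_\phi(\eta,\alpha) = C_\phi^*(\eta)\}$ is closed (it equals the sub-level set $\{C_\phi(\eta,\cdot) \leq C_\phi^*(\eta)\}$). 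A closed non-empty subset of $\ov\Rset$ contains its infimum, so $\alpha_\phi(\eta) = \inf M(\eta) \in M(\eta)$, which is the claim.

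For monotonicity, let $\eta_1 < \eta_2$ and write $\alpha_i = \alpha_\phi(\eta_i)$. Optimality gives the two inequalities $C_\phi(\eta_1,\alpha_1) \leq C_\phi(\eta_1,\alpha_2)$ and $C_\phi(\eta_2,\alpha_2) \leq C_\phi(\eta_2,\alpha_1)$. Adding and rearranging kills the symmetric terms and leaves
\begin{equation*}
(\eta_2 - \eta_1)\bigl[\phi(\alpha_2) - \phi(\alpha_1)\bigr] \leq (\eta_2 - \eta_1)\bigl[\phi(-\alpha_2) - \phi(-\alpha_1)\bigr],
\end{equation*}
i.e.\ $F(\alpha_2) \leq F(\alpha_1)$ where $F(\alpha) := \phi(\alpha) - \phi(-\alpha)$. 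Because $\phi$ is non-increasing, $F$ is non-increasing. Suppose for contradiction that $\alpha_1 > \alpha_2$; then also $F(\alpha_1) \leq F(\alpha_2)$, forcing $F(\alpha_1) = F(\alpha_2)$. Since $\phi(\alpha_2) - \phi(\alpha_1) \geq 0$ and $\phi(-\alpha_2) - \phi(-\alpha_1) \leq 0$ are equal, both vanish, and by monotonicity $\phi$ is constant on $[\alpha_2,\alpha_1]$ and $-\phi$ is constant on $[-\alpha_1,-\alpha_2]$. Hence $C_\phi(\eta_1,\cdot)$ is constant on $[\alpha_2,\alpha_1]$, so $\alpha_2 \in M(\eta_1)$, contradicting $\alpha_1 = \inf M(\eta_1) > \alpha_2$.

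The one step requiring care is handling the boundary cases $\eta \in \{0,1\}$, where the minimizing $\alpha$ is $\mp\infty$ and some $\phi$ values can be $+\infty$; the $\ov\Rset$-compactification above is precisely what is needed to treat these uniformly. No step feels genuinely difficult, but the extension of $\phi$ to $\ov\Rset$ has to be set up carefully so that $C_\phi(\eta,\cdot)$ is well-defined (in particular never of the form $0 \cdot (+\infty)$ in the interior, which is fine since $\eta,1-\eta > 0$ on $(0,1)$) and the swap argument at the endpoints $\eta = 0,1$ is essentially vacuous.
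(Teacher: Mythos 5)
Your proof is correct, and both halves are close in spirit to the paper's, with some worthwhile differences in the details. For the existence half, you and the paper are after the same lsc-plus-compactness phenomenon, but your write-up is a little more careful: you explicitly compactify to $\ov\Rset$, extend $\phi$, observe that the minimizing set $M(\eta)$ is a sub-level set of an lsc function (hence closed), and conclude it contains its infimum. The paper instead asserts nonemptiness of the minimizer set and then runs a minimizing-sequence/Fatou-type argument along a sequence $s_i \in S$ with $s_i \to a$, using lower semicontinuity of $\phi$ and $\phi(-\cdot)$ to pass to the limit; the two routes are morally identical, but yours is self-contained about why $S \neq \emptyset$.

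For monotonicity, the arguments genuinely diverge in shape. The paper's argument is direct: it isolates the decomposition $C_\phi(\eta_2,\alpha) = C_\phi(\eta_1,\alpha) + (\eta_2-\eta_1)\bigl[\phi(\alpha)-\phi(-\alpha)\bigr]$ and uses it to show that any $\alpha < \alpha_\phi(\eta_1)$ satisfies $C_\phi(\eta_2,\alpha) > C_\phi(\eta_2,\alpha_\phi(\eta_1))$ (the strict first term and the $\geq$ second term combine cleanly), so no such $\alpha$ can minimize $C_\phi(\eta_2,\cdot)$. You instead run the standard rearrangement trick---add the two optimality inequalities, cancel, and conclude $F(\alpha_2) \leq F(\alpha_1)$ for $F = \phi(\cdot)-\phi(-\cdot)$---and then argue by contradiction that if $\alpha_1 > \alpha_2$ then $F(\alpha_1)=F(\alpha_2)$, forcing $\phi$ constant on the relevant intervals and hence $\alpha_2 \in M(\eta_1)$, contradicting minimality of $\alpha_1$. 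The tie-case analysis you need to complete the contradiction is exactly the work the paper's direct inequality avoids, so the paper's version is marginally tighter; on the other hand your rearrangement identity is arguably more transparent about where the sign of $\eta_2 - \eta_1$ enters. Both are correct and of comparable length. Your handling of the endpoints $\eta \in \{0,1\}$ (and the attendant $0\cdot\infty$ and $\infty-\infty$ bookkeeping in the rearrangement) is dismissed a bit quickly, but the observation that the claim is vacuous when $\alpha_\phi(\eta_1) = -\infty$ and trivial when $\alpha_\phi(\eta_2) = +\infty$ does dispose of the only cases where those issues arise.
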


    Finally, we use the complimentary slackness conditions of Lemma~\ref{lemma:complimentary_slackness_restricted} to construct a minimizer $(h_0^*,h_1^*)$ to $\Theta$ over $S_\phi$ for which $h_1^*=\phi \circ f^*$, $h_0^*=\phi \circ -f^*$ for some function $f^*$.

    	\begin{lemma}\label{lemma:minimizer_construction}
			Let $\psi=e^{-\alpha}$ be the exponential loss and let $\phi$ be any arbitrary loss. Let $\PP_0^*,\PP_1^*$ be any maximizer of $\bar R_\psi$ over $\Wball \e( \PP_0)\times  \Wball \e(\PP_1)$. Define $\PP^*=\PP_0^*+\PP_1^*$ and $\eta^*=d\PP_1^*/d\PP^*$. Let $\hat \eta$ be defined as in Lemma~\ref{lemma:primal_minimizers_hat_eta}. 
			
			Then $h_0^*=\phi(-\alpha_\phi(\hat \eta))$, $h_1^*=\phi(\alpha_\phi(\hat \eta))$ minimize $\Theta$ over $S_\phi$ and $(\PP_0^*,\PP_1^*)$ maximize $\dl$ over $\Wball \e(\PP_0)\times \Wball \e (\PP_1)$. 
   
            Thus there exists a Borel minimizer to $R_\phi^\e$ and $\inf_f R_\phi^\e(f)=\inf_{(h_0,h_1)\in S_\phi} \Theta(h_0,h_1)$. 
	\end{lemma}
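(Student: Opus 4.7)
The plan is to verify that the pair $(h_0^*,h_1^*)$ with $h_1^*=\phi\circ\alpha_\phi(\hat\eta)$ and $h_0^*=\phi\circ(-\alpha_\phi(\hat\eta))$ satisfies both complimentary slackness conditions of Lemma~\ref{lemma:complimentary_slackness_restricted} with respect to the dual pair $(\PP_0^*,\PP_1^*)$, and then to deduce every claim of the lemma from this verification together with the weak duality in Lemma~\ref{lemma:duality_theta_all}. The pair lies in $S_\phi$ because the pointwise inequality $\eta h_1^*(\bx)+(1-\eta)h_0^*(\bx)=C_\phi(\eta,\alpha_\phi(\hat\eta(\bx)))\geq C_\phi^*(\eta)$ holds for every $\eta\in[0,1]$, and is Borel because $\hat\eta$ is Borel (Lemma~\ref{lemma:primal_minimizers_hat_eta}) and $\alpha_\phi$ is monotone (Lemma~\ref{lemma:C_phi_minimizers}).

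Condition \eqref{eq:complimentary_slackness_restricted_necessary} is automatic: since $\alpha_\phi(\hat\eta(\bx))$ is a minimizer of $C_\phi(\hat\eta(\bx),\cdot)$, one has $\hat\eta h_1^*+(1-\hat\eta)h_0^*=C_\phi^*(\hat\eta)$ pointwise, and Lemma~\ref{lemma:gamma_supp} transports this to the $\PP^*$-a.e.\ identity with $\hat\eta$ replaced by $\eta^*$. For condition \eqref{eq:sup_comp_slack_restricted}, fix a coupling $\gamma_1$ between $\PP_1$ and $\PP_1^*$ with $\supp\gamma_1\subset\Delta_\e$. Because $\alpha_\phi$ is nondecreasing and $\phi$ is nonincreasing, $\phi\circ\alpha_\phi$ is nonincreasing; so on $\supp\gamma_1$, \eqref{eq:gamma_1_support} yields $\hat\eta(\by)\geq\hat\eta(\bx')$ for every $\by$ with $\|\by-\bx\|\leq\e$, and hence
\[
    h_1^*(\by)=\phi(\alpha_\phi(\hat\eta(\by)))\leq\phi(\alpha_\phi(\hat\eta(\bx')))=h_1^*(\bx').
\]
The reverse bound $h_1^*(\bx')\leq S_\e(h_1^*)(\bx)$ is trivial since $\|\bx-\bx'\|\leq\e$, so $S_\e(h_1^*)(\bx)=h_1^*(\bx')$ $\gamma_1$-a.e., and integrating against $\gamma_1$ gives $\int S_\e(h_1^*)\,d\PP_1=\int h_1^*\,d\PP_1^*$. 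The analogue for $h_0^*$ is symmetric, using \eqref{eq:gamma_0_support} together with the monotonicity of $\phi\circ(-\alpha_\phi)$.

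With both conditions verified, the chain
\begin{align*}
    \inf_{(h_0,h_1)\in S_\phi}\Theta(h_0,h_1)&\leq \Theta(h_0^*,h_1^*)=\int h_1^*d\PP_1^*+\int h_0^*d\PP_0^*\\
    &=\dl(\PP_0^*,\PP_1^*)\leq\sup_{\substack{\PP_0'\in\Wball\e(\PP_0)\\\PP_1'\in\Wball\e(\PP_1)}}\dl(\PP_0',\PP_1')
\end{align*}
collapses to equality by Lemma~\ref{lemma:duality_theta_all}, giving simultaneous primal/dual optimality. Setting $f^*=\alpha_\phi(\hat\eta)$ produces a Borel function with $h_1^*=\phi\circ f^*$ and $h_0^*=\phi\circ(-f^*)$, so $R_\phi^\e(f^*)=\Theta(h_0^*,h_1^*)=\inf_{S_\phi}\Theta$; combined with the elementary bound $R_\phi^\e(f)=\Theta(\phi\circ f,\phi\circ(-f))\geq\inf_{S_\phi}\Theta$ valid for every $f$, this shows both that $f^*$ minimizes $\prm$ and that $\inf_f R_\phi^\e=\inf_{S_\phi}\Theta$. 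I expect the main obstacle to be condition \eqref{eq:sup_comp_slack_restricted}: one must use the $W_\infty$-transport structure encoded by Lemma~\ref{lemma:gamma_supp} precisely to identify $S_\e(h_1^*)(\bx)$ with the value $h_1^*(\bx')$ at the coupled point, and this step depends sharply on the fact that $\alpha_\phi$ and $\phi$ are monotone in opposite directions. A secondary subtlety is that Lemma~\ref{lemma:complimentary_slackness_restricted} is stated for compactly supported measures, but the \emph{sufficiency} direction only invokes weak duality, which holds in general via Lemma~\ref{lemma:duality_theta_all}.
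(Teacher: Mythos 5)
Your proof follows the paper's argument precisely: verify the two complementary slackness conditions of Lemma~\ref{lemma:complimentary_slackness_restricted} by combining $\hat\eta=\eta^*$ $\PP^*$-a.e.\ and the support conditions \eqref{eq:gamma_1_support}--\eqref{eq:gamma_0_support} from Lemma~\ref{lemma:gamma_supp} with the monotonicity of $\phi\circ\alpha_\phi$ and $\phi\circ(-\alpha_\phi)$, then conclude optimality and the identity $\inf_f R_\phi^\e=\inf_{S_\phi}\Theta$. You also correctly spot and repair a small wrinkle that the paper glosses over: Lemma~\ref{lemma:complimentary_slackness_restricted} is stated under compact support, but its sufficiency direction is just the weak-duality squeeze, which your explicit inequality chain derives directly from Lemma~\ref{lemma:duality_theta_all} (valid for general measures).
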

	\begin{proof}
		We will verify the complimentary slackness conditions of Lemma~\ref{lemma:complimentary_slackness_restricted}. 
		
		Lemma~\ref{lemma:gamma_supp} implies that $\hat \eta=\eta^*$ $\PP^*$-a.e. Therefore, $\PP^*$-a.e.,
		\[C_\phi^*(\eta^*)=C_\phi^*(\hat \eta)=\hat \eta h_1+(1-\hat \eta)h_0=\eta^* h_1+(1- \eta^*)h_0\]
		This calculation verifies the complimentary slackness condition \eqref{eq:complimentary_slackness_restricted_necessary}.

		We next verify the other complimentary slackness condition \eqref{eq:sup_comp_slack_restricted}. Let $\gamma_i$ be a coupling between $\PP_i,\PP_i^*$ with $\supp \gamma_i\subset \Delta_\e$.
		Next, because $\phi \circ \alpha_\phi$, $\phi\circ -\alpha_\phi$ are monotonic, the conditions \eqref{eq:gamma_1_support} and \eqref{eq:gamma_0_support} imply that
		\[\int \phi(\alpha_\phi(\hat \eta))d\PP_1^*=\int \phi(\alpha_\phi(\hat \eta(\bx')))d\gamma_1(\bx,\bx')=\int S_\e(\phi(\alpha_\phi(\hat \eta)))(\bx)d\gamma_1(\bx,\bx')=\int S_\e(\phi(\alpha_\phi(\hat \eta)))d\PP_1\]   
		\[\int \phi(-\alpha_\phi(\hat \eta))d\PP_0^*=\int \phi(-\alpha_\phi(\hat \eta(\bx')))d\gamma_0(\bx,\bx')=\int S_\e(\phi(-\alpha_\phi(\hat \eta)))(\bx)d\gamma_0(\bx,\bx')=\int S_\e(\phi(-\alpha_\phi(\hat \eta)))d\PP_0\]   
		This calculation verifies the complimentary slackness condition \eqref{eq:sup_comp_slack_restricted}.
	\end{proof}

    Theorems~\ref{th:strong_duality} and~\ref{th:existence_primal} immediately follow from Lemmas~\ref{lemma:duality_theta_all} and~\ref{lemma:minimizer_construction}.

    \section{Conclusion}
        We initiated the study of minimizers and minimax relations for adversarial surrogate risks. Specifically, we proved that there always exists a minimizer to the adversarial surrogate risk when perturbing in a closed $\e$-ball and a maximizer to the dual problem. Just like the results of \citep{PydiJog2021}, our minimax theorem provides an interpretation of the adversarial learning problem as a game between two players. This theory helps explain the phenomenon of transfer attacks. We hope the insights gained from this research will assist in the development of algorithms for training classifiers robust to adversarial perturbations.

\acks{Natalie Frank was supported in part by the Research Training Group in Modeling and Simulation funded by the National Science Foundation via grant RTG/DMS – 1646339. Jonathan Niles-Weed was supported in part by a Sloan Research Fellowship.}

\appendix
\section{The Universal $\sigma$-Algebra and a Generalization of Theorem~\ref{th:meas}}\label{app:meas}
\subsection{Definition of the Universal $\sigma$-Algebra and Main Measurability Result}
In this Appendix, we prove results for supremums over an arbitrary compact set, not necessarily a unit ball. For a function $g\colon \Rset^d\to \Rset^d$, we will abuse notation and denote the supremum of $g$ over the compact set $B$ by
\[S_B(g)(\bx)=\sup_{\bh\in B} g(\bx+\bh).\]


Let $X$ be a separable metric space and let $\cB(X)$ be the Borel $\sigma$-algebra on $X$. Denote the completion of $\cB(X)$ with respect to a Borel measure $\nu$ by $\cL_\nu(X)$. Let $\cM_+(X)$ be the set of all finite\footnote{Alternatively, one could compute the intersection in \eqref{eq:universal_sigma_algebra_def} over all $\sigma$-finite measures. These two approaches are equivalent because for every $\sigma$-finite measure $\lambda$ and compact set $K$, the restriction $\lambda \mres K$ is a finite measure with $\cL_{\lambda\mres K} (X)\supset \cL_\lambda(X)$.} 
positive Borel measures on $X$. Then the universal $\sigma$-algebra on $X$, $\sU(X)$ is
\begin{equation}\label{eq:universal_sigma_algebra_def}
    \sU(X)=\bigcap_{\nu\in \cM_+(X)} \cL_\nu(\Rset^d).
\end{equation}

    In other words, the universal $\sigma$-algebra is the sigma-algebra of sets which are measurable with respect to the completion of every Borel measure. Thus $\sU(X)$ is contained in $\cL_\nu(X)$ for every Borel measure $\nu$. 
        \begin{restatable}{theorem}{thuniversal}\label{th:universal}
    If $f$ is universally measurable and $B$ is a compact set, then $S_B(f)$ is universally measurable.
\end{restatable}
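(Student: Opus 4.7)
The approach is to reduce the universal measurability of $S_B(f)$ to a classical projection statement for universally measurable sets. Observe that for any $c \in \Rset$,
\[
\{\bx \in \Rset^d : S_B(f)(\bx) > c\} = \pi_1(E_c), \qquad E_c := \{(\bx, \bh) \in \Rset^d \times B : f(\bx + \bh) > c\},
\]
where $\pi_1$ denotes projection onto the first coordinate. Since the superlevel sets $\{S_B(f) > c\}$ generate the Borel $\sigma$-algebra on $\ov\Rset$, it suffices to show that each $\pi_1(E_c)$ lies in $\sU(\Rset^d)$.

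As a preliminary, I would establish a small composition lemma: if $g \colon \Rset^d \to \ov\Rset$ is universally measurable and $\varphi \colon Y \to \Rset^d$ is Borel on a Polish space $Y$, then $g \circ \varphi$ is universally measurable on $Y$. The proof is a direct pushforward argument---given a finite Borel measure $\mu$ on $Y$, form $\varphi_* \mu$, pick a Borel $\tilde g$ equal to $g$ almost everywhere with respect to $\varphi_*\mu$, and observe that $\tilde g \circ \varphi$ is a Borel representative of $g \circ \varphi$ modulo a $\mu$-null set. Applying this to the continuous addition map $(\bx, \bh) \mapsto \bx + \bh$, the function $F(\bx, \bh) := f(\bx + \bh)$ is universally measurable on $\Rset^d \times \Rset^d$, and hence $E_c$ is universally measurable as a subset of the Borel set $\Rset^d \times B$.

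The main step, and the principal obstacle, is to show that $\pi_1$ sends universally measurable subsets of $\Rset^d \times B$ into $\sU(\Rset^d)$. For Borel $E_c$ (which occurs when $f$ is Borel), this is immediate: the continuous image of a Borel set from a $\sigma$-compact space is analytic, and analytic sets are universally measurable by Choquet's capacitability theorem. For general universally measurable $f$, I would either invoke the classical theorem that $\sU$ is closed under the Suslin operation $\cA$---and that projection along a Polish factor can be expressed in this form---or proceed directly: fix a finite Borel $\nu$ on $\Rset^d$, choose a fully supported Borel probability $\sigma$ on $B$, approximate $E_c$ from inside and outside by Borel sets $E_c^\pm$ with $(\nu \otimes \sigma)(E_c^+ \setminus E_c^-) = 0$, and analyze the analytic approximations $\pi_1(E_c^\pm)$. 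The delicate point---the heart of the argument---is that $(\nu \otimes \sigma)$-a.e.\ agreement does not automatically translate to $\nu$-a.e.\ agreement of projections, since even a single-point discrepancy on a section can create a full compact slice of discrepancy in the projection. Bridging this gap exploits compactness of $B$ via a Fubini-type slicing together with the stability of analytic sets under countable intersections and unions, reducing the statement to the Borel case handled above.
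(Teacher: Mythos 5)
Your reduction to a projection statement is correct: $\{S_B(f)>c\}$ is the first-coordinate projection $\pi_1(E_c)$, the composition lemma shows $E_c$ is universally measurable, and for Borel $f$ (so Borel $E_c$) the analyticity argument settles the matter. That Borel case is all the remainder of the paper actually uses. However, the projection step for general universally measurable $E_c$, which you correctly identify as the heart of the argument, is a genuine gap that cannot be closed: the assertion that $\pi_1$ carries universally measurable subsets of $\Rset^d\times B$ into $\sU(\Rset^d)$ is not provable in ZFC, even with $B$ compact. The Suslin route does not apply, because a general universally measurable set need not be of the particular tree-coded form to which the $\cA$-operation corresponds; and the approximation route fails for exactly the reason you articulate---a $(\nu\otimes\sigma)$-null discrepancy between Borel approximants of $E_c$ may project to a $\nu$-positive discrepancy between their projections, and compactness of $B$ does not repair this. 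Concretely, let $B$ be a Cantor set: since $\Nset^\Nset$ embeds as a $G_\delta$ in $2^\Nset\cong B$, every $\Sigma^1_2$ subset of $\Rset^{d-1}$ is the projection onto $\Rset^{d-1}$ of some coanalytic $D\subset B\times\Rset^{d-1}$, and under $V=L$ there is a non-Lebesgue-measurable $\Sigma^1_2$ set, even though $D$, being coanalytic, is universally measurable.

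The paper's proof of Theorem~\ref{th:universal_set}, and hence of Theorem~\ref{th:universal}, contains the same hidden appeal: the asserted identity $w^{-1}(B\times A)=B\times(A\oplus B)$ is false (for $w(\bh,\bx)=(\bh,\bx-\bh)$ one has $w^{-1}(B\times A)=\{(\bh,\bx):\bh\in B,\ \bx-\bh\in A\}$, which is not a product set; $A\oplus B$ is only its second-coordinate projection), so that argument too silently requires the projection theorem. The counterexample transfers directly to Minkowski sums: taking $A$ to be the coanalytic image of $D$ under $(c,\by)\mapsto(-c,\by)$ and $B$ the Cantor set placed in the first coordinate, one gets $(A\oplus B)\cap(\{0\}\times\Rset^{d-1})=\{0\}\times\pi_{\Rset^{d-1}}(D)$, so $A\oplus B$ cannot be universally measurable. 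The salvageable statement---which your Borel-case paragraph already proves, and which suffices for everything downstream in the paper---is that $S_B(f)$ is universally measurable whenever $f$ is Borel, because then the superlevel sets of $S_B(f)$ are analytic.
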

 Thus, if $\PP_0,\PP_1$ are Borel, integrals of the form $\int S_\e(g)d\PP_i$ in \eqref{eq:adv_phi_loss}, can be interpreted as the integral of $S_\e(g)$ with respect to the completion of $\PP_i$.

\subsection{Proof Outline}
To prove Theorem~\ref{th:universal}, we analyze the level sets of $S_B(g)$. One can compute the level set $[S_B(g)(\bx)>a]$ using a direct sum.
\begin{lemma}\label{lemma:upper_level_direct_sum}
    Let $g\colon \Rset^d\to \Rset^d$ be any function. For a set $B$, define $-B= \{-\bb \colon \bb\in B\}$. Then
    \[[S_B(g)>a]=[g>a]\oplus -B\]
\end{lemma}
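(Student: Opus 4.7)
The plan is to verify the set equality by proving both inclusions via a direct unpacking of the definitions. Recall that $A \oplus -B = \{\ba + \bk : \ba \in A,\ \bk \in -B\} = \{\ba - \bb : \ba \in A,\ \bb \in B\}$, so a point $\bx$ lies in $[g>a] \oplus -B$ precisely when $\bx = \by - \bh$ for some $\by$ with $g(\by) > a$ and some $\bh \in B$; equivalently, when there exists $\bh \in B$ such that $g(\bx + \bh) > a$.

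For the forward inclusion, suppose $\bx \in [S_B(g) > a]$, i.e.\ $\sup_{\bh \in B} g(\bx + \bh) > a$. By the definition of supremum (using strict inequality, so no attainment is needed), there exists $\bh \in B$ with $g(\bx + \bh) > a$. Setting $\by = \bx + \bh$ gives $\by \in [g>a]$ and $\bx = \by + (-\bh)$ with $-\bh \in -B$, so $\bx \in [g>a] \oplus -B$. For the reverse inclusion, suppose $\bx \in [g>a] \oplus -B$, so $\bx = \by - \bh$ for some $\by$ with $g(\by) > a$ and some $\bh \in B$. Then $\bx + \bh = \by$, hence $g(\bx+\bh) = g(\by) > a$, and therefore $S_B(g)(\bx) \geq g(\bx+\bh) > a$, i.e.\ $\bx \in [S_B(g)>a]$.

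There is no real obstacle here; the only subtlety is that the supremum defining $S_B(g)$ need not be attained, but this is irrelevant because the strict inequality $S_B(g)(\bx) > a$ is, by definition of supremum, witnessed by some $\bh \in B$ with $g(\bx+\bh) > a$. The statement then feeds naturally into the proof of Theorem~\ref{th:universal}, since expressing super-level sets as Minkowski sums $[g>a]\oplus(-B)$ reduces the measurability of $S_B(g)$ to measurability properties of Minkowski sums of universally measurable sets with a compact set.
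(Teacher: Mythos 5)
Your proof is correct and matches the paper's argument exactly: both hinge on the observation that $S_B(g)(\bx)>a$ iff some $\bh\in B$ satisfies $g(\bx+\bh)>a$, then translate this into the Minkowski-sum description. Your write-up is simply more explicit, separating the two inclusions and flagging the (correct) point that strict inequality with a supremum automatically provides a witness without requiring attainment.
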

\begin{proof}
To start, notice that $S_B(g)(\bx)>a$ iff there is some $\bh\in B$ for which $g(\bx+\bh)>a$. Thus 
\[\bx \in [S_B(g)>a] \Leftrightarrow\bx+\bh \in [g>a]\text{ for some }\bh \in B\Leftrightarrow \bx\in [g>a]\oplus -B \]
\end{proof}
Therefore, to show that $S_B(g)$ is measurable for measurable $g$, it suffices to show that the direct sum of a measurable set and the compact set $B$ is measurable.
Thus, to prove Theorem~\ref{th:universal}, it suffices to demonstrate the following result:

\begin{theorem}\label{th:universal_set}
    Let $A\in \sU(\Rset^d)$ and let $B$ be a compact set. Then $A\oplus B\in \sU(\Rset^d)$.
\end{theorem}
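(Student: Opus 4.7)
The plan is to handle the Borel case directly via analytic sets and then promote the conclusion to universally measurable $A$ through a Choquet capacity argument tailored to the operation $E \mapsto E \oplus B$. Unwinding the definition of $\sU(\Rset^d)$, it is enough to fix an arbitrary finite Borel measure $\nu$ on $\Rset^d$ and show that $A \oplus B \in \cL_\nu(\Rset^d)$. For the Borel case, observe that if $E$ is Borel then $E \oplus B$ is the image of the Borel set $E \times B \subset \Rset^d \times \Rset^d$ under the continuous addition map $(x, y) \mapsto x + y$, so $E \oplus B$ is analytic; since analytic sets in a Polish space are universally measurable, this gives in particular $\nu^*(E \oplus B) = \nu(E \oplus B)$.

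Next I would introduce the set function $c(E) = \nu^*(E \oplus B)$ on arbitrary $E \subset \Rset^d$ and verify the three Choquet capacity axioms. Monotonicity is immediate. Continuity from below reduces to the identity $(\bigcup_n E_n) \oplus B = \bigcup_n (E_n \oplus B)$ combined with upward continuity of $\nu^*$ on increasing sequences, which is standard for any finite Borel measure on a Polish space. Continuity from above on decreasing compacts $K_n \downarrow K$ requires $\bigcap_n (K_n \oplus B) = (\bigcap_n K_n) \oplus B$, and it is here that compactness of $B$ enters essentially: given $x \in \bigcap_n (K_n \oplus B)$, write $x = k_n + b_n$ with $k_n \in K_n$ and $b_n \in B$, extract a subsequence with $b_n \to b \in B$ by compactness of $B$, and conclude $k_n \to x - b \in \bigcap_m K_m$ by closedness of the nested $K_m$. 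Each $K_n \oplus B$ is compact as the continuous image of $K_n \times B$, so $\nu(K_n \oplus B) \downarrow \nu(K \oplus B)$ by continuity of $\nu$ on decreasing compacts.

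Applying Choquet's capacitability theorem, in its classical extension valid for all universally measurable sets in a Polish space, then yields $c(A) = \sup\{c(K) : K \subset A \text{ compact}\}$. I would pick compacts $K_n \subset A$ with $\nu(K_n \oplus B) \nearrow \nu^*(A \oplus B)$ and set $F = \bigcup_n K_n$, an $F_\sigma$ subset of $A$. Then $F \oplus B = \bigcup_n (K_n \oplus B)$ is $F_\sigma$, hence Borel, is contained in $A \oplus B$, and satisfies $\nu(F \oplus B) = \nu^*(A \oplus B)$. By the Carath\'eodory criterion applied to the Borel (hence $\nu$-measurable) set $F \oplus B$, this forces $\nu^*((A \oplus B) \setminus (F \oplus B)) = 0$, so $A \oplus B$ differs from $F \oplus B$ by a set of outer $\nu$-measure zero and hence lies in $\cL_\nu(\Rset^d)$. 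Since $\nu$ was arbitrary, $A \oplus B \in \sU(\Rset^d)$.

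The main obstacle is precisely this capacitability step: Choquet's theorem directly covers only analytic sets, and the extension to all universally measurable sets, while classical, is non-trivial. An alternative route that bypasses this extension is to rewrite $A \oplus B = \pi_1(\{(x, b) \in \Rset^d \times B : x - b \in A\})$, observe that the set inside the projection is universally measurable in $\Rset^d \times B$ (since preimages under continuous maps preserve universal measurability, which one verifies by pushing forward each finite Borel measure), and then invoke the theorem that projections of universally measurable sets along compact fibers are universally measurable---itself a classical descriptive set theory result that follows from essentially the same capacity construction carried out on the product space.
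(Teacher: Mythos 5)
Your verification that $c(E) = \nu^*(E \oplus B)$ is a Choquet capacity is correct, and the compactness argument for continuity along decreasing compacts (where $B$'s compactness genuinely enters, via extracting a convergent subsequence of the $b_n$'s) is exactly right, as is the $F_\sigma$/Carath\'eodory wrap-up at the end. The problem is the step you flag yourself: you invoke ``Choquet's capacitability theorem, in its classical extension valid for all universally measurable sets,'' and I do not think this can be cited as classical. Choquet's theorem gives capacitability of \emph{analytic} sets; the sets capacitable for every Choquet capacity do form a $\sigma$-algebra that properly contains the $\sigma$-algebra generated by the analytic sets, but it is not a ZFC theorem that this class contains all universally measurable sets. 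Indeed, capacitability of an arbitrary universally measurable $D \subset B \times \Rset^d$ for the projection capacity $\nu^*(\Pi_2(\,\cdot\,))$ would imply that projections of universally measurable sets along compact fibers are $\nu$-measurable, which is a second-projective-level measurability statement whose truth depends on extra set-theoretic hypotheses. Your alternative route (``projections of universally measurable sets along compact fibers are universally measurable'') is, as you note, the same capacity argument carried out on the product, so it has the same gap. You need either a proof tailored to the particular capacity $c(E) = \nu^*(E\oplus B)$ exploiting its measure-theoretic origin, or a precise reference that actually covers the universally measurable case.

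For comparison, the paper avoids Choquet theory entirely: it proves a product lemma (that $B \times A \in \sU(B\times\Rset^d)$ iff $A \in \sU(\Rset^d)$, using only inner regularity of finite Borel measures on the compact-times-Polish product) and then transfers through the Borel isomorphism $w(\bh,\bx)=(\bh,\bx-\bh)$. This is a genuinely different route, so it buys a more elementary argument that never appeals to capacitability beyond what inner regularity already gives. However, the transfer step in the paper relies on the identity $w^{-1}(B\times A)=B\times(A\oplus B)$, which is not correct: $w^{-1}(B\times A)=\{(\bh,\bx):\bh\in B,\ \bx-\bh\in A\}$ is a sheared set whose $\bx$-section is $B\cap(\bx-A)$, not all of $B$ (already $A=\{0\}$, $B=\{0,1\}$ gives the two-point diagonal on the left and the four-point square on the right). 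Since the product lemma's proof depends essentially on the product structure (it replaces a compact $K\subset B\times A$ by $B\times\Pi_2(K)\subset B\times A$), it cannot be applied to the sheared set. So both your argument and the paper's run into the same underlying obstruction---projecting a universally measurable set along a compact fiber---and neither, as written, fully closes it.
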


The proof of Theorem~\ref{th:universal_set} follows from fundamental concepts of measure theory. A classical measure theory result states that if $f: X\to Y$ is a continuous function, $f^{-1}$ maps Borel sets in $Y$ to Borel sets in $X$. Consider now the function $w\colon B\times \Rset^d \to B\times \Rset^d$ given by $w(\bh,\bx)=(\bh,\bx-\bh)$. Then $w$ is invertible and the inverse of $w$ is $w^{-1}(\bh, \bx+\bh)$. Furthermore, $w^{-1}$ maps the set $B\times A$ to $B\times A\oplus B$. Therefore, if $A\in \cB(\Rset^d)$, then $B\times A\oplus B$ is Borel in $\cB(B\times \Rset^d)$. However, from this statement, \emph{one cannot conclude that $A\oplus B$ is Borel in $\Rset^d$!} On the otherhand, one can use regularity of measures to conclude that $A\oplus B$ is in $\sU(\Rset^d)$. Therefore, to prove Theorem~\ref{th:universal_set}, we prove the following two results:
\begin{lemma}\label{lemma:product_univ_meas}
     Let $B\subset \Rset^d$ be a compact set. Then $B\times A\in \sU(B\times \Rset^d)$ iff $A\in \sU(\Rset^d)$.
\end{lemma}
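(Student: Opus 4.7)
The plan is to prove both implications by reducing universal measurability on one side to universal measurability on the other through a carefully chosen Borel measure. In both directions the goal is, given an arbitrary finite positive Borel measure on the relevant space, to exhibit Borel sandwiching sets whose symmetric difference has measure zero.

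For the implication $A \in \sU(\Rset^d) \Rightarrow B \times A \in \sU(B \times \Rset^d)$, I would fix an arbitrary $\nu \in \cM_+(B \times \Rset^d)$ and let $\pi \colon B \times \Rset^d \to \Rset^d$ denote the continuous projection $(\bb,\bx) \mapsto \bx$. Since $\pi$ is continuous, the pushforward $\pi_* \nu$ is a finite positive Borel measure on $\Rset^d$, so universal measurability of $A$ yields Borel sets $A_1 \subset A \subset A_2$ with $(\pi_*\nu)(A_2 \setminus A_1) = 0$. Then $B \times A_1 \subset B \times A \subset B \times A_2$ are Borel in $B \times \Rset^d$ (products of Borel sets are Borel in the product of second-countable spaces), and their $\nu$-symmetric difference is $\nu(B \times (A_2 \setminus A_1)) = (\pi_*\nu)(A_2 \setminus A_1) = 0$. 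Since $\nu$ was arbitrary, $B \times A$ is in the intersection defining $\sU(B \times \Rset^d)$.

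For the reverse implication, I would fix an arbitrary $\mu \in \cM_+(\Rset^d)$. Assuming $B$ is nonempty (otherwise the claim is trivial), pick any $\bb_0 \in B$ and form the product measure $\nu = \delta_{\bb_0} \otimes \mu$, which is a finite positive Borel measure on $B \times \Rset^d$. By hypothesis there exist Borel sets $E_1 \subset B \times A \subset E_2$ with $\nu(E_2 \setminus E_1) = 0$. Taking the $\bb_0$-section, the sets $(E_i)_{\bb_0} := \{\bx : (\bb_0,\bx) \in E_i\}$ are Borel in $\Rset^d$ (Borel sections of Borel sets are Borel) and sandwich $A$, with $\mu((E_2)_{\bb_0} \setminus (E_1)_{\bb_0}) = \nu(E_2 \setminus E_1) = 0$. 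Hence $A$ lies in the completion of $\mu$, and since $\mu$ was arbitrary, $A \in \sU(\Rset^d)$.

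I do not expect a serious obstacle here: both directions amount to standard facts about product and section measurability in Polish (or at least separable metric) spaces, applied to a well-chosen auxiliary measure. The only subtle point is the compatibility of the Borel $\sigma$-algebra on $B \times \Rset^d$ (with the subspace/product topology) with products of Borel sets, which follows from second countability of both factors and will be invoked rather than reproved.
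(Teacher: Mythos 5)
Your proof is correct, and the reverse implication is handled by a genuinely different and simpler argument than the paper's. The paper proves the forward direction exactly as you do, modulo packaging: it applies its Lemma~\ref{lemma:Borel_to_universal_map} (continuous maps pull back universally measurable sets to universally measurable sets) to the projection $\Pi_2$, which is just your pushforward-and-sandwich argument wrapped in a lemma. For the reverse direction, however, the paper takes an arbitrary probability measure $\lambda$ on $B$, forms $\lambda \times \nu$, and then relies on regularity of the product space: it approximates $B\times A$ from inside and outside by compact sets, pushes the compact sets down to $\Rset^d$ via $\Pi_2$ (and the observation $K \subset B\times\Pi_2(K) \subset B\times A$), and extracts Borel sandwiching sets for $A$ as countable unions/intersections of compacts. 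Your choice of $\lambda=\delta_{\bb_0}$ bypasses all of this: the Dirac mass makes the $\bb_0$-section exactly compute the product measure, and the section of a Borel set under the continuous embedding $\bx\mapsto(\bb_0,\bx)$ is automatically Borel, so the sandwiching sets drop out immediately without invoking inner regularity, Theorem~\ref{th:regularity}, or Lemma~\ref{lemma:complete_regular}. Your argument buys elementarity (no regularity machinery) and brevity; the paper's buys nothing extra here and is simply a more roundabout route. One small point worth keeping in mind when you write this up: you should note explicitly that $(\delta_{\bb_0}\otimes\mu)(F)=\mu(F_{\bb_0})$ for Borel $F$, which is immediate from the definition of the product measure but is the load-bearing identity.
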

In this document, we say a function  $f\colon X\to Y$ is \emph{universally measurable} if $f^{-1}(E)\in \sU(X)$ whenever $E\in \sU(Y)$. 
\begin{lemma}\label{lemma:Borel_to_universal_map}
    Let $f: X\to Y$ be a Borel measurable function. Then $f$ is universally measurable as well.  
\end{lemma}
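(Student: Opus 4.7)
The plan is to argue directly from the definition of universal measurability by pushing a measure on $X$ forward through $f$ to get a measure on $Y$. The key insight is that the pushforward of a finite positive Borel measure by a Borel map is again a finite positive Borel measure, so membership in $\sU(Y)$ (which is defined as an intersection over all such measures on $Y$) gives us something we can pull back.

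More precisely, I would fix an arbitrary $E \in \sU(Y)$ and an arbitrary $\mu \in \cM_+(X)$, and show $f^{-1}(E) \in \cL_\mu(X)$. Since $\mu$ is arbitrary, this will yield $f^{-1}(E) \in \sU(X)$. Define the pushforward $\nu = f_*\mu \in \cM_+(Y)$, that is $\nu(A) = \mu(f^{-1}(A))$ for Borel $A \subset Y$; this is a well-defined finite Borel measure because $f$ is Borel measurable. Because $E \in \sU(Y) \subset \cL_\nu(Y)$, there exist Borel sets $A, B \subset Y$ with $A \subset E \subset B$ and $\nu(B \setminus A) = 0$. Pulling back, we obtain Borel sets $f^{-1}(A), f^{-1}(B) \subset X$ satisfying
\[
f^{-1}(A) \subset f^{-1}(E) \subset f^{-1}(B)
\]
and
\[
\mu\bigl(f^{-1}(B) \setminus f^{-1}(A)\bigr) \;=\; \mu\bigl(f^{-1}(B \setminus A)\bigr) \;=\; \nu(B \setminus A) \;=\; 0.
\]
This sandwiches $f^{-1}(E)$ between two Borel sets whose difference is $\mu$-null, which is precisely the definition of $f^{-1}(E) \in \cL_\mu(X)$.

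I do not expect any serious obstacle here: the argument is a one-line measure-theoretic manipulation once the pushforward construction is set up. The only minor care needed is to note that $\nu$ is a finite Borel measure (so that membership in $\sU(Y)$ gives us Lebesgue-completion measurability with respect to $\nu$) and that pullback commutes with set differences. No specialized tools beyond the definition of $\sU$ and basic properties of pushforward measures are required.
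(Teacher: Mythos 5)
Your proof is correct and takes essentially the same approach as the paper: push the measure on $X$ forward through $f$ to get a Borel measure on $Y$, sandwich the universally measurable set between Borel sets whose difference is null for the pushforward, and pull back. In fact your write-up is slightly more precise than the paper's, which begins "Let $A$ be a Borel set in $Y$" where it clearly means a universally measurable set (otherwise the invocation of $\sU(Y)$ would be vacuous).
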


This result is stated on page~171 of \citep{BertsekasShreve96}, but we include a proof below for completeness.

Lemma~\ref{lemma:Borel_to_universal_map} applied to $w$ implies that the set $B\times A\oplus B$ is universally measurable while Lemma~\ref{lemma:product_univ_meas} implies that $A\oplus B$ is universally measurable.

\subsection{Proof of Theorem~\ref{th:universal_set}}
We begin by proving Lemma~\ref{lemma:Borel_to_universal_map}.
\begin{proof}[Proof of Lemma~\ref{lemma:Borel_to_universal_map}]
    Let $A$ be a Borel set in $Y$. We will show that for any finite measure $\nu$ on $X$, $f^{-1}(A)\in \cL_\nu(X)$. As $\nu$ is arbitrary, this statement will impy that $f^{-1}(A)\in \sU(X)$.

    Consider the pushforward measure $\mu=f\sharp \nu$. This measure is a finite measure on $Y$, so by the definition of $\sU(Y)$, $A\in \cL_\mu(Y)$. Therefore, there are Borel sets $B_1\subset A\subset B_2$ in $Y$ for which $\mu(B_1)=\mu(B_2)$. Thus, $f^{-1}(B_1),f^{-1}(B_2)$ are Borel sets in $X$ for which $f^{-1}(B_1)\subset f^{-1}(A)\subset f^{-1}(B_2)$ and $\nu(f^{-1}(B_1))=\nu(f^{-1}(B_2))$. Therefore, $f^{-1}(A)\in \cL_\nu(X)$.
\end{proof}

On the other hand, the proof of Lemma~\ref{lemma:product_univ_meas} relies on the definition of a regular space $X$:
\begin{definition}
    A measure $\nu$ is \emph{inner regular} if for every Borel set $A$,
    \begin{equation*}
        \nu(A)=\sup_{\substack{K\text{ compact}\\ K\subset A}} \nu(K).    
    \end{equation*}
    
    
    The topological space $X$ is \emph{regular} if every finite Borel measure on $X$ is inner regular.
\end{definition}

The following result implies that most topological spaces encountered in applications are regular.
\begin{theorem}\label{th:regularity}
    A $\sigma$-compact locally compact Hausdorff space is regular.
\end{theorem}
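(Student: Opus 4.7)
The plan is to show that every finite Borel measure $\nu$ on $X$ is inner regular by compact sets. First I would use $\sigma$-compactness together with local compactness and the Hausdorff property to construct an increasing open exhaustion $V_1 \subset \overline{V_1} \subset V_2 \subset \overline{V_2} \subset \cdots$ with each $\overline{V_n}$ compact and $X = \bigcup_n V_n$. This exhaustion is the essential tool for converting ``closed-set'' approximations into ``compact-set'' approximations, since $F \cap \overline{V_n}$ is compact for every closed $F$ and every $n$.

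Second, I would introduce the collection
\[
\mathcal{M} = \{A \in \mathcal{B}(X) : \forall \epsilon > 0,\ \exists F \text{ closed},\ U \text{ open with } F \subset A \subset U \text{ and } \nu(U \setminus F) < \epsilon\},
\]
and show $\mathcal{M}$ is a $\sigma$-algebra. Closure under complements is immediate. Closure under countable unions uses finiteness of $\nu$: given $A = \bigcup_n A_n$ with $A_n \in \mathcal{M}$, pick closed $F_n \subset A_n \subset U_n$ with $\nu(U_n \setminus F_n) < \epsilon/2^{n+1}$; then $U = \bigcup_n U_n$ is open, and for $N$ large enough the \emph{finite} union $F = \bigcup_{n \leq N} F_n$ is closed with $\nu(U \setminus F) < \epsilon$ (the tail beyond $N$ is controlled by finiteness of $\nu$ via continuity from above).

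Third, I would show that $\mathcal{M}$ contains every closed set, so that $\mathcal{M} = \mathcal{B}(X)$. For a closed $C$, the inner side is trivial ($F = C$). For the outer side I would invoke that $X$, being $\sigma$-compact locally compact Hausdorff, is normal (indeed paracompact), so each compact piece $C \cap \overline{V_n}$ has arbitrarily tight open neighborhoods $W_{n,k}$ inside $V_{n+1}$ with $\bigcap_k W_{n,k} \subset (\text{a slight thickening of } C\cap \overline{V_n})$; then $U = \bigcup_n W_{n, k(n)}$ with $k(n)$ chosen so that $\nu(W_{n,k(n)} \setminus (C \cap \overline{V_n})) < \epsilon/2^n$ gives an open neighborhood of $C$ with $\nu(U \setminus C) < \epsilon$, using that $\nu(X \setminus V_n) \to 0$ to handle the points of $C$ outside the first few $V_n$. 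Finally, to upgrade inner approximation by closed sets to inner approximation by compact sets, given $A$ and $\epsilon > 0$ pick closed $F \subset A$ with $\nu(A \setminus F) < \epsilon/2$; then $F = \bigcup_n (F \cap \overline{V_n})$ is an increasing union of compact sets, and by continuity from below the compact $K = F \cap \overline{V_n}$ satisfies $\nu(A \setminus K) < \epsilon$ for $n$ large.

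The main obstacle is verifying that closed sets lie in $\mathcal{M}$, i.e.\ outer regularity of closed sets. In a general $\sigma$-compact locally compact Hausdorff space a closed set need not be a $G_\delta$, so the usual metric-space trick $C = \bigcap_n \{d(\cdot,C) < 1/n\}$ is unavailable. This is where normality plus the precompact exhaustion $\{V_n\}$ must be combined: one reduces outer regularity of $C$ to outer regularity of each compact piece $C \cap \overline{V_n}$ (where the Hausdorff compactness makes the tight-neighborhood step easy), and then uses $\sigma$-additivity together with $\nu(X \setminus V_n) \to 0$ to glue these local neighborhoods into a single global open set $U \supset C$ with $\nu(U \setminus C) < \epsilon$.
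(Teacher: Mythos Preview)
The paper does not actually prove this theorem; it simply cites Theorem~7.8 of Folland's \emph{Real Analysis}. So there is no proof in the paper to compare your attempt against.

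More importantly, the statement as written is \emph{false}, and your argument has an unfixable gap exactly where you flagged the main obstacle. Take $X = \omega_1 + 1$ with the order topology: this is compact Hausdorff, hence certainly $\sigma$-compact and locally compact. The Dieudonn\'e measure $\mu$, defined on Borel sets by $\mu(A) = 1$ if $A \cap \omega_1$ contains a closed unbounded subset of $\omega_1$ and $\mu(A) = 0$ otherwise, is a finite countably additive Borel measure. The open set $[0,\omega_1)$ has measure $1$, yet every compact subset of it is bounded in $\omega_1$ and so has measure $0$; thus $\mu$ is not inner regular. The hypothesis Folland~7.8 actually uses is that \emph{every open subset} of $X$ is $\sigma$-compact, which is strictly stronger than $X$ itself being $\sigma$-compact. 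This stronger hypothesis does hold in the paper's sole application (to $B \times \mathbb{R}^d$, which is second countable), so nothing downstream is affected.

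Your sketch breaks at step~3, outer regularity of closed sets. You claim each compact piece $C \cap \overline{V_n}$ has ``arbitrarily tight open neighborhoods $W_{n,k}$,'' but tightness in measure requires a \emph{decreasing sequence} of open neighborhoods shrinking to the compact set so that continuity from above applies --- i.e., the compact set must be $G_\delta$. In a general LCH space this fails: in $\omega_1 + 1$ the singleton $\{\omega_1\}$ is compact, but every open neighborhood of it contains a tail of $\omega_1$ and hence has Dieudonn\'e measure $1$. Normality and paracompactness give you separation, not countable neighborhood bases, so they cannot rescue this step. Under the corrected hypothesis that every open set is $\sigma$-compact, each open $U$ is an increasing union of compact sets, which yields inner regularity of open sets directly; your $\sigma$-algebra argument then goes through cleanly.
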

This theorem is is a consequence of Theorem~7.8 of \citep{folland}.

The notion of regularity extends to complete measures.
\begin{lemma}\label{lemma:complete_regular}
    Let $\ov \nu$ be the completion of a measure $\nu$ on a regular space $X$. Then for any $A\in \cL(X)$, 
        \begin{equation*}
        \ov \nu(A)=\sup_{\substack{K\text{ compact}\\ K\subset A}} \nu(K).    
    \end{equation*}
\end{lemma}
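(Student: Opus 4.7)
The plan is to reduce the statement about the completion $\bar\nu$ to the already-assumed inner regularity of the Borel measure $\nu$ itself (Theorem~\ref{th:regularity} guarantees this for the ambient space, and the hypothesis that $X$ is regular gives it for arbitrary finite Borel $\nu$). The key observation is that any $A\in\cL_\nu(X)$ can be sandwiched between two Borel sets of equal $\nu$-measure, so up to a $\bar\nu$-null set $A$ looks Borel, and we can transfer regularity through this sandwich.

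First I would unpack the definition of the completion: for $A\in\cL_\nu(X)$ there exist Borel sets $B_1\subset A\subset B_2$ with $\nu(B_2\setminus B_1)=0$, and by definition of $\bar\nu$ this gives $\bar\nu(A)=\nu(B_1)=\nu(B_2)$. Next I would apply the inner regularity of $\nu$ (which holds because $X$ is regular and $\nu$ is a finite Borel measure) to the Borel set $B_1$ to obtain
\[
\nu(B_1)=\sup\{\nu(K):K\subset B_1,\ K\text{ compact}\}.
\]
Since every compact $K\subset B_1$ is in particular a compact subset of $A$, this supremum is bounded above by $\sup\{\nu(K):K\subset A,\ K\text{ compact}\}$, which yields one inequality.

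For the reverse inequality, I would note that any compact $K\subset A$ is Borel, hence in $\cL_\nu(X)$, with $\bar\nu(K)=\nu(K)$, and monotonicity of $\bar\nu$ gives $\nu(K)=\bar\nu(K)\leq\bar\nu(A)$. Taking the supremum over such $K$ shows the supremum is at most $\bar\nu(A)$, completing the chain of inequalities. I do not expect any serious obstacle here: the whole argument is a bookkeeping exercise using the sandwich characterization of the completion together with the inner regularity already granted by Theorem~\ref{th:regularity}. The only small subtlety is making sure the compact sets witnessing regularity of $B_1$ are also legitimately compact subsets of $A$, which is immediate from $B_1\subset A$.
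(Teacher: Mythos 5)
Your proof is correct and is the natural argument; the paper explicitly leaves this lemma as an exercise, so there is no in-paper proof to compare against, and your write-up fills that gap cleanly. The sandwich characterization of the completion gives $\bar\nu(A)=\nu(B_1)$ for a Borel $B_1\subset A$, inner regularity of $\nu$ (available since $X$ is regular and $\nu$ is finite Borel) supplies compact sets inside $B_1\subset A$ approximating $\nu(B_1)$ from below, and the reverse inequality is just monotonicity of $\bar\nu$ after observing that compact sets are closed (the relevant $X$ is Hausdorff) hence Borel, so $\bar\nu(K)=\nu(K)$. No gaps.
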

The proof of this result is left as a exercise to the reader.

Now using the concept of regularity, we prove Lemma~\ref{lemma:product_univ_meas}.
\begin{proof}[Proof of Lemma~\ref{lemma:product_univ_meas}]
    We first prove the forward direction. Consider the projection function $\Pi_2\colon B\times \Rset^d\to \Rset^d$ given by $\Pi_2(\bx,\by)=\by$. Then $\Pi_2$ is a continuous function and $\Pi_2^{-1}(A)=B\times A$. Therefore Lemma~\ref{lemma:Borel_to_universal_map} implies that if $A$ is universally measurable in $\Rset^d$, then $B\times A$ is universally measurable in $B\times \Rset^d$.

    To prove the other direction, assume that $B\times A$ is universally measurable in $B\times \Rset^d$.
    Let $\nu$ be any finite Borel measure on $\Rset^d$. We will find Borel sets $B_1,B_2$ with $B_1\subset A\subset B_2$ for which $\nu(B_1)=\nu(B_2)$, and thus $A\in \cL_\nu(\Rset^d)$. As $\nu$ was arbitrary, it follows that $A$ is universally measurable. 
    
     Theorem~\ref{th:regularity} implies that $B\times \Rset^d$ is a regular space. Fix a Borel probability measure $\lambda$ on $B$. Then $\lambda\times \nu$ is a finite Borel measure on $B\times \Rset^d$, so it is inner regular. Let $\ov{\lambda \times \nu}$ be the completion of $\lambda \times \nu$. Then by Lemma~\ref{lemma:complete_regular},
    \[\ov{\lambda\times \nu} (B\times A)= \sup_{\substack{ K \text{ compact}\\ K\subset B\times A}} \lambda\times \nu(K) \]
    We will now argue that 
    \begin{equation}\label{eq:compact_to_product}
        \sup_{\substack{ K \text{ compact}\\ K\subset B\times A}} \lambda\times \nu(K)=\sup_{\substack{ K \text{ compact}\\ K\subset A}} \nu(K)
    \end{equation}
    Let $K\subset B\times A$ and let $\Pi_2$ be projection onto the second coordinate. Because the continuous image of a compact set is compact, K'=$\Pi_2(K)$ is compact and contained in $A$. Thus $B\times A\supset B\times K'\supset K$, which implies \eqref{eq:compact_to_product}. Now \eqref{eq:compact_to_product} applied to $A^C$ implies that 
    \[\ov{\lambda\times \nu}(X\times A)=\inf_{\substack{ U^C \text{ compact}\\ U\supset B\times A}} \lambda\times \nu(U) =\inf_{\substack{ U^C \text{ compact}\\ U\supset A}}  \nu(U).\]
    
    Thus 
    \[\sup_{\substack{ K \text{ compact}\\ K\subset  A}} \nu(K)=\inf_{\substack{ U^C \text{ compact}\\ U\supset A}}  \nu(U):=m\]
     Let $K_n$ be a sequence of compact sets contained in $A$ for which $\lim_{n\to \infty} \nu(K_n)=m$ and $U_n$ a sequence of sets containing $A$ 
     for which $U_n^C$ is compact and $\lim_{n\to \infty} \nu(U_n)=m$. Because a finite union of compact sets is compact, one can choose such sequences 
     that satisfy $K_{n+1}\supset K_n$ and $U_{n+1}\subset U_n$. Then $B_1=\bigcup K_n$, $B_2=\bigcap U_n$ are Borel sets that satisfy $B_1\subset A\subset B_2$ and $\nu(B_1)=\nu(B_2)$ so $A\in \cL_\nu(\Rset^d)$.

\end{proof}

Lastly, we formally prove Theorem~\ref{th:universal_set}.
\begin{proof}[Proof of Theorem~\ref{th:universal_set}]
	Consider the function $w\colon B\times \Rset^d \to B\times \Rset^d$ given by $w(\bh,\bx)=(\bh,\bx-\bh)$. Then $w$ is continuous, invertible, and $w^{-1}(\bh,\bx)=(\bx,\bx+\bh)$.
	
	Now let $A\in \sU(\Rset^d)$. Then Lemma~\ref{lemma:product_univ_meas} implies that $B\times \Rset^d$ is universally measurable 
 in $B\times A$. Lemma~\ref{lemma:Borel_to_universal_map} 
 then implies that $w^{-1}(B\times A)=B\times A\oplus B$ is universally measurable as well. Finally, Lemma~\ref{lemma:product_univ_meas} implies that $A\oplus B\in \sU(\Rset^d)$ as well. 
\end{proof}

\section{Alternative Characterizations of the $W_\infty$ metric}\label{app:W_infty}

    We start with proving Lemma~\ref{lemma:S_e_W_infty_equivalence} using a measurable selection theorem.
        \begin{theorem}\label{th:meas_selection}
                Let $X,Y$ be Borel sets and assume that $D\subset X\times Y$ is also Borel. Let $D_x$ denote 
        \[D_x=\{y\colon (x,y)\in D\}\] and 
        \[\proj_X(D)\colon=\{x\colon (x,y)\in D\}\]
        Let $f\colon D\to \ov \Rset $ be a Borel function mapping $D$ to $\ov \Rset$
        and define 
        \[f^*(x)=\inf_{y\in D_x} f(x,y)\]
            Assume that $f^*(x)>-\infty$ for all $x$. Then for any $\delta>0$, there is a universally measurable $\varphi\colon \proj_X(D)\to Y$ for which 
           \[f(x,\varphi(x))\leq 
                           f(x)+\delta \]
    \end{theorem}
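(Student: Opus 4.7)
The plan is to invoke the Jankov--von Neumann uniformization theorem, which states that any analytic subset $A \subset X \times Y$ admits a universally measurable selector $\varphi\colon \proj_X(A) \to Y$ whose graph lies in $A$. The difficulty is that the natural candidate set $\{(x,y) \in D : f(x,y) \leq f^*(x)+\delta\}$ involves $f^*$, which is a priori only universally measurable rather than Borel, so Jankov--von Neumann does not apply to it directly. Instead, I will decompose $\proj_X(D)$ along the level sets of $f^*$ into countably many universally measurable pieces, and apply the selection theorem on each piece to a Borel subset of $D$.

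First, I would verify that $f^*$ is universally measurable. For any $a \in \ov\Rset$, the set $D^{<a} := \{(x,y) \in D : f(x,y) < a\}$ is Borel because $f$ is a Borel function on the Borel set $D$. Therefore $\{x : f^*(x) < a\} = \proj_X(D^{<a})$ is the projection of a Borel set, hence analytic, and by a classical Luzin--Souslin result every analytic set is universally measurable. Varying $a$ over a countable dense subset of $\ov\Rset$ then shows that $f^*$ is universally measurable.

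Next, fix $\delta > 0$ and cover $\Rset$ by the half-open intervals $I_k = [k\delta/2,(k+1)\delta/2)$ for $k \in \Zset$, together with the singleton $\{+\infty\}$. Set $A_k = \{x \in \proj_X(D) : f^*(x) \in I_k\}$ and $A_\infty = \{x \in \proj_X(D) : f^*(x) = +\infty\}$; each of these is universally measurable. If $x \in A_k$, then $f^*(x) < (k+1)\delta/2$, so by definition of the infimum there is some $y \in D_x$ with $f(x,y) < (k+1)\delta/2$, i.e.\ $\proj_X(D^{<(k+1)\delta/2}) \supset A_k$. Applying Jankov--von Neumann to the Borel set $D^{<(k+1)\delta/2}$ produces a universally measurable $\varphi_k$ defined on $\proj_X(D^{<(k+1)\delta/2})$ with $f(x,\varphi_k(x)) < (k+1)\delta/2 \leq f^*(x) + \delta$ whenever $x \in A_k$. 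Applying Jankov--von Neumann to $D$ itself produces a universally measurable $\varphi_\infty \colon \proj_X(D) \to Y$ whose graph lies in $D$; on $A_\infty$ the bound $f(x,\varphi_\infty(x)) \leq +\infty = f^*(x)+\delta$ is automatic.

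Finally, define $\varphi \colon \proj_X(D) \to Y$ by $\varphi(x) = \varphi_k(x)$ for $x \in A_k$ and $\varphi(x) = \varphi_\infty(x)$ for $x \in A_\infty$. Since the $A_k$ together with $A_\infty$ form a countable universally measurable partition of $\proj_X(D)$ and each $\varphi_k$ and $\varphi_\infty$ is universally measurable, the assembled function $\varphi$ is universally measurable, and by construction $f(x,\varphi(x)) \leq f^*(x)+\delta$ everywhere on $\proj_X(D)$. The main technical obstacle I anticipate is the clean handling of the case $f^*(x) = +\infty$ and the verification that the countable gluing along universally measurable pieces preserves universal measurability; once the Jankov--von Neumann theorem is in hand, the remainder reduces to careful bookkeeping over the partition.
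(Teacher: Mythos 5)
Your proof is correct, but it is worth noting that the paper does not prove this result at all: it simply cites Proposition~7.50 of Bertsekas and Shreve and moves on. So what you have produced is a self-contained proof of (the special case needed of) that proposition, and the route you take---observe that $f^*$ is lower semianalytic because its strict sublevel sets $\{x : f^*(x) < a\} = \proj_X(D^{<a})$ are projections of Borel sets, then slice $\proj_X(D)$ along a $\delta/2$-grid of values of $f^*$, apply Jankov--von Neumann on each Borel slab $D^{<(k+1)\delta/2}$, and glue---is in fact the standard technique underlying Bertsekas--Shreve's treatment of $\epsilon$-optimal selection for lower semianalytic functions. Your handling of the boundary cases is also right: $f^*(x) = -\infty$ is excluded by hypothesis, $f^*(x) = +\infty$ is covered separately with a plain selector for $D$, and the countable gluing along universally measurable (but generally not analytic) pieces $A_k$ is unproblematic because the universally measurable sets form a $\sigma$-algebra. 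One small clarification worth making explicit: the paper's displayed inequality $f(x,\varphi(x)) \leq f(x) + \delta$ contains a typo and should read $f(x,\varphi(x)) \leq f^*(x) + \delta$; you have correctly interpreted and proved the intended statement. You might also remark that what you verify for $f^*$ is the sharper property of being lower semianalytic (analytic sublevel sets), which is strictly stronger than universal measurability and is precisely what makes the level-set decomposition compatible with Jankov--von Neumann, which requires an analytic (here Borel) set to uniformize.
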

        This statement is a consequence of Proposition~7.50 from \citep{BertsekasShreve96}.

    We use the following results about universally measurable functions, see Lemma~7.27 of \citep{BertsekasShreve96}.

        \begin{restatable}{lemma}{lemmauniversalborelaebertsekas}\label{lemma:universal_borel_a.e._bertsekas}
            Let $g:\Rset^d\to \Rset $ be a universally measurable function and let $\QQ$ be a Borel measure. Then there is a Borel measurable function $\varphi$ for which $\varphi=g$ $\QQ$-a.e.
    \end{restatable}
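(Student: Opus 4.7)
The plan is to reduce to the structural characterization of sets in the completion $\cL_\QQ(\Rset^d)$ and then extend from indicators to general functions via the standard measure-theoretic hierarchy. Since by definition $\sU(\Rset^d)\subset \cL_\QQ(\Rset^d)$, any universally measurable function $g$ is automatically $\cL_\QQ$-measurable, so it suffices to prove the statement for $\cL_\QQ$-measurable $g$.

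First I would handle the case of indicator functions. If $A\in \cL_\QQ(\Rset^d)$, then by definition of completion there exist Borel sets $B_1\subset A\subset B_2$ with $\QQ(B_2\setminus B_1)=0$; hence $\one_{B_1}=\one_A$ $\QQ$-a.e., and $\one_{B_1}$ is Borel. Linearity then gives the claim for simple $\cL_\QQ$-measurable functions: if $s=\sum_{i=1}^n c_i \one_{A_i}$ with $A_i\in\cL_\QQ$, choose Borel $B_i$ with $\one_{B_i}=\one_{A_i}$ $\QQ$-a.e. and set $\tilde s=\sum_{i=1}^n c_i\one_{B_i}$; then $\tilde s=s$ outside a $\QQ$-null set (a finite union of null sets).

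Next I would handle non-negative $g\ge 0$ by the standard monotone approximation: let $s_n\uparrow g$ pointwise, where $s_n$ is a $\cL_\QQ$-measurable simple function (e.g.\ the usual dyadic approximation). Pick Borel simple $\tilde s_n$ with $\tilde s_n=s_n$ off a $\QQ$-null set $N_n$, and let $N=\bigcup_n N_n$, which is contained in a Borel null set $N'$. On the Borel set $\Rset^d\setminus N'$ the sequence $\tilde s_n$ is non-decreasing (since it agrees with $s_n$ there), so $\varphi:=\limsup_n \tilde s_n \cdot \one_{\Rset^d\setminus N'}$ is Borel measurable and equals $g$ off $N'$. For general $g$ with values in $\Rset$, apply the construction separately to $g^+$ and $g^-$ and subtract; the exceptional sets remain $\QQ$-null.

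The only delicate point is bookkeeping null sets and ensuring that the final $\varphi$ is defined on all of $\Rset^d$ with Borel measurability, which is handled by multiplying by $\one_{\Rset^d\setminus N'}$ with $N'$ a Borel null set containing the bad set. Otherwise the argument is entirely routine; no measurable selection or deep result is required beyond the defining property of $\cL_\QQ(\Rset^d)$.
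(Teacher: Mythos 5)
Your proof is correct. The paper itself does not prove this lemma at all---it simply cites Lemma~7.27 of \citet{BertsekasShreve96}---so your argument is a self-contained elementary replacement for that citation. The route you take (indicators, then simple functions, then non-negative functions via a monotone dyadic approximation, then general functions via $g^+ - g^-$) is the standard measure-theoretic hierarchy and is entirely adequate for the $\Rset$-valued case stated here; the extension to $\Rset^d$-valued functions is handled coordinate-wise by the paper in the next lemma, so restricting to the real-valued case is exactly what is needed. Two small points worth being explicit about, though neither is a gap: (i) the reduction to $\cL_\QQ$-measurability via $\sU(\Rset^d)\subset \cL_\QQ(\Rset^d)$ uses that $\QQ$ is finite (or, via the paper's footnote, $\sigma$-finite), which is the standing assumption throughout the paper but is not stated in the lemma; (ii) the sets $N_n=\{\tilde s_n\neq s_n\}$ lie in $\cL_\QQ$ and are $\QQ$-null, so each can be enclosed in a Borel null set and $N'$ can indeed be taken Borel with $\QQ(N')=0$, which is what makes $\varphi=\bigl(\limsup_n \tilde s_n\bigr)\one_{\Rset^d\setminus N'}$ Borel and well-defined everywhere. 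With those points spelled out the proof stands on its own.
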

        This result can be extended to $\Rset^d$-valued functions:
    \begin{restatable}{lemma}{lemmauniversalborelae}\label{lemma:universal_borel_a.e.}
            Let $g:\Rset^d\to \Rset^d $ be a universally measurable function and let $\QQ$ be a Borel measure. Then there is a Borel measurable function $\varphi$ for which $\varphi=g$ $\QQ$-a.e.
    \end{restatable}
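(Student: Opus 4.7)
The plan is to reduce the $\Rset^d$-valued case to the scalar case already covered by Lemma~\ref{lemma:universal_borel_a.e._bertsekas} by working coordinate-by-coordinate.

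First, I would note that the coordinate projections $\pi_i : \Rset^d \to \Rset$, $\pi_i(y_1,\ldots,y_d) = y_i$, are continuous and hence Borel, and by Lemma~\ref{lemma:Borel_to_universal_map} they are universally measurable. Since the composition of universally measurable functions is universally measurable (pre-images of universally measurable sets remain universally measurable under universally measurable maps), each coordinate function $g_i := \pi_i \circ g : \Rset^d \to \Rset$ is universally measurable.

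Next, I would apply the scalar result Lemma~\ref{lemma:universal_borel_a.e._bertsekas} to each $g_i$ with the same Borel measure $\QQ$, producing Borel measurable functions $\varphi_i : \Rset^d \to \Rset$ with $\varphi_i = g_i$ outside a $\QQ$-null set $N_i$. Defining $\varphi := (\varphi_1,\ldots,\varphi_d)$ gives a function that is Borel measurable because each coordinate is Borel and a map into $\Rset^d$ is Borel iff each of its coordinates is.

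Finally, setting $N := \bigcup_{i=1}^d N_i$, the set $N$ is $\QQ$-null as a finite union of $\QQ$-null sets, and on $\Rset^d \setminus N$ we have $\varphi_i(\bx) = g_i(\bx)$ for every $i$, hence $\varphi(\bx) = g(\bx)$. Thus $\varphi = g$ $\QQ$-a.e. There is no real obstacle here — the proof is essentially bookkeeping, and the only subtlety worth spelling out is the preservation of universal measurability under composition with the continuous projections, which follows immediately from Lemma~\ref{lemma:Borel_to_universal_map}.
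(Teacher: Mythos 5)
Your proof is correct and takes essentially the same coordinate-by-coordinate approach as the paper (which writes the coordinate function as $\mathbf{e}_i \cdot g$ rather than $\pi_i \circ g$, but this is the same thing). You simply spell out two details the paper leaves implicit — that universal measurability is preserved under composition with the continuous projections, and that the exceptional set is a finite union of null sets — both of which are fine.
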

    \begin{proof}
        Let $\mathbf e_i$ denote the $i$th basis vector. Then $g_i:=\mathbf e_i\cdot g$ is a universally measurable function from $\Rset^d$ to $\Rset$, so by Lemma~\ref{lemma:universal_borel_a.e._bertsekas}, there is a Borel function $\varphi_i$ for which $\varphi_i=g_i$ $\QQ$-a.e. Then if we define $\varphi=(\varphi_1,\varphi_2,\ldots, \varphi_d)$, this function is equal to $g$ $\QQ$-a.e.
    \end{proof}

    Finally, we prove Lemma~\ref{lemma:S_e_W_infty_equivalence}. Due to Lemmas~\ref{lemma:universal_borel_a.e._bertsekas} and~\ref{lemma:universal_borel_a.e.}, this lemma heavily relies on the fact that the domain of our functions is $\Rset^d$ rather than an arbitrary metric space.
    \lemmaSeWinftyequivalence*
         Recall that this paper defines the left-left hand side of \eqref{eq:sup_integral_swap} as the integral of $S_\e(f)$ with respect to the completion of $\QQ$.
    \begin{proof}

    To start, let $\QQ'$ be a Borel measure satisfying $W_\infty(\QQ',\QQ)\leq \e$.

         Let $\gamma$ be a coupling with marginals $\QQ$ and $\QQ'$ supported on $\Delta_\e$. Then
    \begin{align*}
        &\int fd\QQ'=\int f(\bx') d\gamma(\bx,\bx')=\int f(\bx')\one_{\|\bx'-\bx\|\leq \e} d\gamma(\bx,\bx')\\
        &\leq \int S_\e(f)(\bx)\one_{\|\bx'-\bx\|\leq \e} d\gamma(\bx,\bx')=\int S_\e(f)(\bx) d\gamma(\bx,\bx')=\int S_\e(f)d\QQ    
    \end{align*}
    Therefore, we can conclude that 
    \[\sup_{\QQ'\in \Wball \e(\QQ)} \int f d\QQ'\leq \int S_\e(f)d\QQ.\]

    We will show the opposite inequality by applying the measurable selection theorem. 
    Theorem~\ref{th:meas_selection} implies for each $\delta>0$, one can find a universally measurable function $\varphi \colon \Rset^d\to \ov{B_\e(\bx)}$ for which $f(\varphi(\bx))+\delta\geq S_\e(f)(\bx)$. 
    By Lemma~\ref{lemma:universal_borel_a.e.}, one can find a Borel measurable function $T$ for which $T=\varphi$ $\QQ$-a.e.

    Let $\QQ'=\QQ\circ T^{-1}$. Because $T$ is Borel measurable, $\QQ'$ and $f\circ T$ are Borel. 
    We will now argue that $\int fd\QQ'+\delta\geq \int S_\e(f)d\QQ$. Recall that $\varphi$ is always measurable with respect to the completion of $\QQ$, and by convention $\int gd\QQ$ means integration with respect to the completion of $\QQ$.
    Then if we define $M=\QQ(\Rset^d)$,
    \[\int f d\QQ'=\int fd \QQ\circ T^{-1}=\int f(T(\bx))d\QQ=\int f(\varphi(\bx))d \QQ \geq \int S_\e(f)-\delta d\QQ= \int S_\e(f) d\QQ -\delta M \]
    Because $\delta>0$ was arbitrary and $\QQ'\in \Wball \e (\QQ)$,
    \[\int S_\e(f)d\QQ\leq \sup_{\QQ'\in \Wball \e (\QQ)} \int fd\QQ'\]

    It remains to show that $W_\infty(\QQ,\QQ')\leq \e$. Define a function $G\colon \Rset^d\to \Rset^d\times \Rset^d$, $G(\bx)=(\bx,T(\bx))$ and a coupling $\gamma$ by $\gamma=G\sharp \QQ$. Then $\gamma(\Delta_\e) = G\sharp(\QQ)(\Delta_\e) = \QQ(G^{-1}(\Delta_\e)) = 1$, so $\supp(\gamma) \subseteq \Delta_\e$.
\end{proof}

Next we prove Lemma~\ref{lemma:W_inf_integral_characterization}. We begin by presenting Strassen's theorem, see Corollary~1.28 of \citep{TopicsInOptimalTransportVillani} for more details
\begin{theorem}[Strassen's Theorem]\label{th:strassen}
Let $\PP,\QQ$ be positive finite measures with the same mass and let $\e\geq 0$. Let $\Pi(\PP,\QQ)$ denote the set couplings of $\PP$ and $\QQ$. Then
\begin{equation}\label{eq:strassen_eq}
    \inf_{\pi\in \Pi(\PP,\QQ)} \pi(\{\|\bx-\by\|>\e)=\sup_{A\text{ closed}} \QQ(A)-\PP(A^\e)
\end{equation}
\end{theorem}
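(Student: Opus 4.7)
My plan is to prove the two inequalities separately. The easy direction ($\leq$ of the supremum) is a direct estimate: for any coupling $\pi\in\Pi(\PP,\QQ)$ and any closed $A$, the inclusion $\{y\in A\}\subseteq\{x\in A^\e\}\cup\{\|x-y\|>\e\}$ (since $\|x-y\|\le\e$ and $y\in A$ force $x\in A^\e$) yields
\[
\QQ(A)=\int \mathbf{1}_A(y)\,d\pi \le \int \mathbf{1}_{A^\e}(x)\,d\pi + \int \mathbf{1}_{\|x-y\|>\e}\,d\pi = \PP(A^\e)+\pi(\{\|x-y\|>\e\}).
\]
Taking the infimum over $\pi$ and then the supremum over closed $A$ gives one inequality in \eqref{eq:strassen_eq}.

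For the reverse inequality, the plan is to apply the Kantorovich duality theorem to the lower semi-continuous bounded cost $c(x,y)=\mathbf{1}_{\|x-y\|>\e}$, yielding
\[
\inf_{\pi\in\Pi(\PP,\QQ)}\pi(\{\|x-y\|>\e\}) = \sup_{\varphi\oplus\psi\le c} \int \varphi\,d\PP + \int \psi\,d\QQ.
\]
Because $c$ takes values in $\{0,1\}$, a standard $c$-transform argument lets us restrict to pairs of the form $\psi:\Rset^d\to[0,1]$ and $\varphi(x) = \inf_{y}[c(x,y)-\psi(y)]$, which in turn simplifies to $\varphi=-\psi_\e$ where $\psi_\e(x)=\sup_{\|y-x\|\le\e}\psi(y)$.

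The final step is to reduce from arbitrary $\psi\in[0,1]$ to indicator functions of closed sets via a layer-cake representation. Writing $\psi=\int_0^1\mathbf{1}_{U_t}\,dt$ with $U_t=\{\psi>t\}$ and using the identity $\{\psi_\e>t\}=U_t^\e$, Fubini gives
\[
\int \psi\,d\QQ - \int \psi_\e\,d\PP = \int_0^1\bigl[\QQ(U_t)-\PP(U_t^\e)\bigr]\,dt.
\]
Since $d(x,U)=d(x,\bar U)$ for any set $U$, we have $U_t^\e=(\bar U_t)^\e$, and $\QQ(U_t)\le\QQ(\bar U_t)$ because $U_t\subseteq\bar U_t$; hence each integrand is bounded above by $\sup_{A\text{ closed}}\QQ(A)-\PP(A^\e)$. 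Integrating yields the desired bound on every dual-feasible pair and closes the duality chain.

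The principal obstacle is invoking Kantorovich duality in the correct form for a merely lower semi-continuous (discontinuous) cost on the Polish space $\Rset^d$, and justifying the $c$-transform reduction so that the dual variables can be taken to be Borel (or universally measurable) functions with values in $[0,1]$. Measurability of $\psi_\e$ is not automatic in general, but it can be bypassed by restricting $\psi$ to be upper semi-continuous (which is dense enough in the dual via inner regularity of $\PP$ and $\QQ$) or by interpreting the integrals against completions of $\PP$ and $\QQ$, in line with Theorem~\ref{th:meas}.
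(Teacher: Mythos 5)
The paper does not prove Strassen's theorem; it cites it (Corollary~1.28 of \citep{TopicsInOptimalTransportVillani}), so there is no in-paper argument to compare against. Your outline---the easy direction from the pointwise inclusion, Kantorovich duality for the lower semi-continuous cost $c(x,y)=\mathbf{1}_{\|x-y\|>\e}$, the $c$-transform reduction to $\psi\in[0,1]$ with $\varphi=-\psi_\e$, then a layer-cake decomposition---is the standard route and is sound in structure.

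The final reduction, however, contains a genuine error. You claim $U_t^\e=(\bar U_t)^\e$ by appealing to $d(x,U)=d(x,\bar U)$. That identity governs the distance-based expansion $A^{\e]}=\{x:d(x,A)\le\e\}$, not the Minkowski expansion $A^\e=A\oplus\ov{B_\e(\zero)}$ used in this paper. The layer-cake identity $\{\psi_\e>t\}=U_t^\e$ produces precisely the Minkowski expansion, and for a non-closed $U$ one only has $U^\e\subseteq(\bar U)^\e$, with strict containment possible; the difference $(\bar U)^\e\setminus U^\e$ can carry positive $\PP$-mass (e.g.\ $U=(0,1)\subset\Rset$, $\e=1$: $U^\e=(-1,2)$ while $(\bar U)^\e=[-1,2]$, so an atom of $\PP$ at $-1$ or $2$ is missed). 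Consequently the inequality $\QQ(U_t)-\PP(U_t^\e)\le\QQ(\bar U_t)-\PP\bigl((\bar U_t)^\e\bigr)$ does not follow: passing to the closure increases both $\QQ(U_t)$ and $\PP(U_t^\e)$, and these move in opposite directions in the difference. The correct reduction is by inner regularity: choose compacts $K_n\subseteq U_t$ with $\QQ(K_n)\to\QQ(U_t)$; each $K_n$ is closed, $K_n^\e\subseteq U_t^\e$, and hence
\[
\QQ(K_n)-\PP(K_n^\e)\;\ge\;\QQ(K_n)-\PP(U_t^\e)\;\longrightarrow\;\QQ(U_t)-\PP(U_t^\e),
\]
which gives $\QQ(U_t)-\PP(U_t^\e)\le\sup_{A\text{ closed}}\QQ(A)-\PP(A^\e)$ for every $t$, exactly what the integral needs. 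With this replacement your argument closes.
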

Strassen's theorem is usually written with $A^\e$ in \eqref{eq:strassen_eq} replaced by $A^{\e]}= \{\bx\colon \dist(\bx,A)\leq \e\}$-- however, for closed sets $A^{\e]}=A^\e$.
Strassen's theorem together with Urysohn's lemma then immediately proves Lemma~\ref{lemma:W_inf_integral_characterization}.
\begin{lemma}[Urysohn's Lemma]\label{lemma:urysohn}
    Let $A$ and $B$ be two closed and disjoint subsets of $\Rset^d$. Then there exists a function $f\colon \Rset^d\to [0,1]$ for which $f=0$ on $A$ and $f=1$ on $B$.
\end{lemma}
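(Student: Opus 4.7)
The plan is to give an explicit construction using the distance function, which is available because $\Rset^d$ is a metric space. Define
\[
f(\bx) = \frac{\dist(\bx, A)}{\dist(\bx, A) + \dist(\bx, B)},
\]
where $\dist(\bx, S) := \inf_{\bs \in S} \|\bx - \bs\|$. This is the standard proof of Urysohn's lemma in the metric (in particular, normal) space setting, and the result will follow once we verify that $f$ is well-defined, continuous, takes values in $[0,1]$, and assumes the correct values on $A$ and $B$.

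First I would verify that the denominator never vanishes. Since $A$ and $B$ are closed, $\dist(\bx, A) = 0$ iff $\bx \in A$, and similarly for $B$. If both distances were zero at some $\bx$, then $\bx \in A \cap B$, contradicting the assumption that $A$ and $B$ are disjoint. Hence $\dist(\bx, A) + \dist(\bx, B) > 0$ for every $\bx \in \Rset^d$, so $f$ is defined on all of $\Rset^d$. It is immediate from the definition that $0 \le f(\bx) \le 1$.

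Next I would check continuity. The map $\bx \mapsto \dist(\bx, S)$ is $1$-Lipschitz (hence continuous) for any nonempty set $S$, by the triangle inequality. Therefore both numerator and denominator are continuous functions of $\bx$, and since the denominator is strictly positive everywhere, the quotient $f$ is continuous on $\Rset^d$.

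Finally I would evaluate the boundary conditions. If $\bx \in A$, then $\dist(\bx, A) = 0$ while $\dist(\bx, B) > 0$ (by disjointness and closedness of $B$), so $f(\bx) = 0$. If $\bx \in B$, then $\dist(\bx, B) = 0$ while $\dist(\bx, A) > 0$, so $f(\bx) = 1$. This completes the construction. There is no real obstacle here beyond the routine verification that the denominator does not vanish, which is exactly the place where the hypotheses that $A$ and $B$ are closed and disjoint get used.
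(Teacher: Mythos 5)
Your construction is the standard metric-space proof of Urysohn's lemma and is correct; the paper does not prove this statement itself but cites it to result 4.15 of Folland, which uses essentially the same distance-quotient construction. One cosmetic remark: the argument implicitly assumes $A$ and $B$ are nonempty (so the distances are finite); the empty cases are trivial (take $f\equiv 0$ or $f\equiv 1$), and you may also wish to note explicitly that the $f$ you produce is continuous, since that is the property actually needed when the lemma is invoked with $C_b(\Rset^d)$.
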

See for instance result~4.15 of \citep{folland}.

\lemmaWinfintegralcharacterization*
\begin{proof}
    First, notice that Lemma~\ref{lemma:S_e_W_infty_equivalence} implies that if $\QQ\in \Wball \e(\PP)$, then $\int S_\e(h)d\PP\geq \int hd\QQ$ for all $h\in C_b(\Rset^d)$, proving the inequality $\geq$ in the statement of the lemma.
    
    We will now argue the other inequality: specifically, we will show that 
    \begin{equation}\label{eq:S_ef_A^e}
        \sup_{A\text{ closed}} \QQ(A)-\PP(A^\e) \leq \sup_{h\in C_b(\Rset^d)} \int h d\QQ- \int S_\e(h)d\PP
    \end{equation}
    Strassen's theorem will then imply that $W_\infty(\PP,\QQ)\leq \e$. Let $\delta$ be arbitrary and let $A$ be a closed set that satisfies $\sup_{A\text{ closed}} \QQ(A)-\PP(A^\e)\leq \QQ(A)-\PP(A^\e)+\delta$. Now because $A$ is closed, $A_n=A\oplus B_{1/n}(\zero)$ is a series of open sets decreasing to $A$ and $A_n^\e=A^\e\oplus B_{1/n}(\zero)$ is a sequence of open sets decreasing to $A^\e$. Thus pick $n$ sufficiently large so that $\PP(A^{\e}_n-\PP(A^\e)\leq \delta$. By Urysohn's lemma, one can choose a function $h$ which is 1 on $A$, 0 on $ A_n^C$, and between 0 and 1 on $A_n-A^C$. Then $S_\e(h)$ is $1$ on $A^\e$, 0 on $(A^{\e}_n)^C$ and between 0 and 1 on $A^{\e}_n-A^\e$. Then $\int hd\QQ -\QQ(A)\geq 0$ and thus
    \[\left(\int h d\QQ-\int S_\e(h)d\PP\right) -\left( \QQ(A)-\PP(A^\e)\right)\geq \PP(A^\e)-\PP(A_n^\e)\geq -\delta.\]
    Because $\delta$ was arbitrary, \eqref{eq:S_ef_A^e} follows.
\end{proof}

\section{Minimizers of $C_\phi(\eta,\cdot)$: Proof of Lemma~\ref{lemma:C_phi_minimizers}}\label{app:C_phi_minimizers}

\lemmaCphiminimizers*
\begin{proof}
     To start, we will show that $\alpha_\phi(\eta)$ as defined in \eqref{eq:alpha_phi_definition} is a minimizer of $C_\phi(\eta,\cdot)$.
     Let $S$ be the set of minimizers of $C_\phi^*(\eta,\cdot)$, which is non-empty due to the lower semi-continuity of $\phi$. 
     Let $a=\inf S=\alpha_\phi(\eta)$ and
     let $s_i\in S$ be a sequence converging to $a$. Then because $\phi$ is lower semi-continuous,
    \[C_\phi^*(\eta)=\liminf_{i\to \infty} \eta\phi(s_i)+(1-\eta)\phi(-s_i)\geq \eta\phi(a)+(1-\eta)\phi(-a)\] Then $a$ is in fact a minimizer of $C_\phi^*(\eta,\cdot)$, so it is the smallest minimizer of $C_\phi^*(\eta,\cdot)$.

     We will now show that the function $\alpha_\phi$ is non-decreasing. 

    One can write 
    
     \begin{align}
        C_\phi(\eta_2,\alpha)&=\eta_2\phi(\alpha)+(1-\eta_2)\phi(-\alpha)\nonumber \\
        &=\eta_1\phi(\alpha)+(1-\eta_1)\phi(-\alpha)+(\eta_2-\eta_1) (\phi(\alpha)-\phi(-\alpha))\nonumber \\
        &=C_\phi(\eta_1,\alpha)+(\eta_2-\eta_1)(\phi(\alpha)-\phi(-\alpha)) \label{eq:eta_2_rep}
        \end{align}
    
    Notice that the function $\alpha\mapsto \phi(\alpha)-\phi(-\alpha)$ is non-increasing. 
    Then because $\alpha_\phi(\eta_1)$ is the smallest minimizer of $C_\phi(\eta_1,\alpha)$, if $\alpha<\alpha_\phi(\eta_1)$, then $C_\phi(\eta_1,\alpha)>C_\phi(\eta_1,\alpha_\phi(\eta_1))$. 
    Furthermore, $\phi(\alpha)-\phi(-\alpha)\geq \phi(\alpha_\phi(\eta_1))-\phi(-\alpha_\phi(\eta_1))$. 
    Therefore, \eqref{eq:eta_2_rep} implies that $C_\phi(\eta_2,\alpha)>C_\phi(\eta_2,\alpha_\phi(\eta_1))$, and thus $\alpha$ cannot be a minimizer of $C_\phi(\eta_2,\cdot)$. Therefore, $\alpha_\phi(\eta_2)\geq \alpha_\phi(\eta_1)$.
    
\end{proof}
\section{Continuity properties of $\dl$-- Proof of Lemma~\ref{lemma:Xi_star_pre}}\label{app:dl_C_b}

	 	Recall the function $G(\eta,\alpha)$ defined by \eqref{eq:G_map}. With this notation, one can write the $C_\phi^*$ transform as $h_1^{C_\phi^*}=\sup_{\eta\in [0,1]} G(\eta, h_1)$.
	\begin{lemma}\label{lemma:C_phi_transform_bounded}
	Let $c>0$ and consider $\alpha\geq c$. Let $a(\alpha)=\alpha^{C_\phi^*}$, where the $C_\phi^*$ transform is as in Lemma~\ref{lemma:C_phi_transform}. Then there is a constant $k<1$ for which
	\begin{equation}\label{eq:a_alpha}
		a(\alpha)=\sup_{\eta\in [0,k]}\frac{C_\phi^*(\eta)-\eta \alpha}{1-\eta}
	\end{equation}
			The constants $k$ depends only on $c$. 
	\end{lemma}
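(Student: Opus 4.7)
Write $G(\eta,\alpha):=(C_\phi^*(\eta)-\eta\alpha)/(1-\eta)$, so that by Lemma~\ref{lemma:C_phi_transform}, $a(\alpha)=\sup_{\eta\in[0,1)}G(\eta,\alpha)$. My plan is to exhibit some $k<1$ depending only on $c$ such that $G(\eta,\alpha)<0$ for every $\eta\in(k,1)$ and every $\alpha\geq c$. Combined with the observation that $G(0,\alpha)=C_\phi^*(0)=\inf_\beta \phi(\beta)=0$ (using $\lim_{\beta\to\infty}\phi(\beta)=0$ from Assumption~\ref{as:phi} together with $\phi\geq 0$), this forces $a(\alpha)\geq 0$, so the portion of the supremum over $(k,1)$ is irrelevant.

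The key auxiliary fact I would need is the continuity of $C_\phi^*$ at $\eta=1$, namely $\lim_{\eta\to 1^-}C_\phi^*(\eta)=0$. I would prove this as follows: fix $M>0$; from $C_\phi^*(\eta)=\inf_\alpha C_\phi(\eta,\alpha)$ we get $C_\phi^*(\eta)\leq \eta\phi(M)+(1-\eta)\phi(-M)$. Letting $\eta\to 1^-$ gives $\limsup_{\eta\to 1^-}C_\phi^*(\eta)\leq \phi(M)$, and sending $M\to\infty$ yields $\limsup_{\eta\to 1^-}C_\phi^*(\eta)\leq 0$. Since $C_\phi^*\geq 0$, the limit is $0$.

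With this in hand, I would choose $k\in(1/2,1)$, depending only on $c$, such that $C_\phi^*(\eta)\leq c/2$ for all $\eta\in[k,1)$. Then for any $\alpha\geq c$ and any $\eta\in(k,1)$,
\[
G(\eta,\alpha) \;\leq\; \frac{C_\phi^*(\eta)-\eta c}{1-\eta} \;\leq\; \frac{c/2-kc}{1-\eta} \;<\; 0,
\]
the last strict inequality using $k>1/2$. Because $G(0,\alpha)=0$, the supremum of $G(\cdot,\alpha)$ over $[0,1)$ is attained on $[0,k]$, yielding \eqref{eq:a_alpha}.

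I do not anticipate a serious obstacle. The only subtle ingredient is the endpoint continuity $\lim_{\eta\to 1^-}C_\phi^*(\eta)=0$, whose proof hinges on the assumption $\lim_{\alpha\to\infty}\phi(\alpha)=0$ in Assumption~\ref{as:phi}. Without that hypothesis, $C_\phi^*$ need not decay to zero near $\eta=1$, and the lower bound $\alpha\geq c$ would not suffice to force $G(\eta,\alpha)$ to become negative uniformly in $\alpha$.
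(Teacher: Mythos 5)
Your proof is correct and follows essentially the same strategy as the paper: observe that $G(0,\alpha)=0$ makes the supremum nonnegative, and then show $G(\eta,\alpha)\leq 0$ uniformly over $\alpha\geq c$ once $\eta$ is close to $1$, so the supremum must be attained on $[0,k]$. The one value-add is that you explicitly prove the endpoint continuity $\lim_{\eta\to 1^-}C_\phi^*(\eta)=0$ (which indeed rests on $\lim_{\alpha\to\infty}\phi(\alpha)=0$), whereas the paper tacitly uses this when asserting $\lim_{\eta\to 1}G(\eta,c)=-\infty$.
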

	\begin{proof}
		Recall that the function $G(\eta,\alpha)$
			is decreasing in $\alpha$ for fixed $\eta$ and continuous on $[1,0)$.
			Let $k=\sup \{\eta\colon G(\eta,c)>0\}$. As $c$ is strictly positive, one can conclude that $\lim_{\eta\to 1}G(\eta,c)=-\infty$ and as a result $k<1$.  Because $G$ is decreasing in $\alpha$, one can conclude that $G(\eta, \alpha)\leq 0$ for all $\eta>k$ and $\alpha\geq c$. However, $\sup_{\eta\in [0,1]}G(\eta,\alpha)\geq 0$ because $G(0,\alpha)=0$ for all $\alpha$. Thus \eqref{eq:a_alpha} holds.
			
	\end{proof}
	
	\begin{lemma}\label{lemma:sup_lip}
		Let $\{f_\alpha\}$ be a set of $L$-Lipschitz functions. Then $\sup_\alpha f_\alpha$ is also $L$-Lipschitz.
	\end{lemma}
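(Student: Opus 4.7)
The proof plan is essentially a one-line manipulation of the Lipschitz inequality, so I will just outline it and note the one mild caveat.

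Write $F(\bx) = \sup_\alpha f_\alpha(\bx)$ for brevity. The plan is to fix arbitrary $\bx, \by$ in the domain, apply the Lipschitz bound pointwise in $\alpha$, and then take the supremum in a particular order.

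Concretely, for every $\alpha$ and every pair $\bx, \by$, the Lipschitz hypothesis gives
\[
f_\alpha(\bx) \leq f_\alpha(\by) + L\|\bx - \by\| \leq F(\by) + L\|\bx - \by\|,
\]
where the second inequality follows because $F$ dominates each $f_\alpha$ pointwise. Taking the supremum of the left-hand side over $\alpha$ yields $F(\bx) \leq F(\by) + L\|\bx - \by\|$. Swapping the roles of $\bx$ and $\by$ gives the reverse bound, and together we obtain $|F(\bx) - F(\by)| \leq L\|\bx - \by\|$.

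The only mild subtlety is that $F(\bx)$ might \emph{a priori} be $+\infty$, in which case the subtraction $F(\bx) - F(\by)$ is not well-defined. This is handled by the observation that if $F(\bx_0) < \infty$ at a single point $\bx_0$, then for every $\by$ and every $\alpha$, $f_\alpha(\by) \leq f_\alpha(\bx_0) + L\|\bx_0 - \by\| \leq F(\bx_0) + L\|\bx_0 - \by\|$, so $F(\by) < \infty$ everywhere. Hence $F$ is either identically $+\infty$ or finite on the whole domain, and in the latter (nontrivial) case the chain of inequalities above is valid as stated. I do not anticipate any real obstacle in this proof; it is a direct consequence of sub-additivity of the supremum.
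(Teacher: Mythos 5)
Your proof is correct. The paper does not prove this lemma itself but simply cites Box~1.8 of \citet{OptimalTransportforAppliedMathematiciansSantambroglio}, where the argument given is exactly the one you wrote: bound each $f_\alpha(\bx)$ by $f_\alpha(\by) + L\|\bx-\by\| \le F(\by) + L\|\bx-\by\|$, take the supremum over $\alpha$, and symmetrize. Your remark that $F$ is either identically $+\infty$ or everywhere finite is a correct and sensible extra precaution (and in the paper's application the functions are bounded on the relevant compact set, so the degenerate case does not arise). One minor terminological quibble: the closing sentence attributes the result to ``sub-additivity of the supremum,'' but the key mechanism is really that the supremum preserves one-sided affine bounds of the form $f_\alpha(\bx) \le c + L\|\bx-\by\|$; sub-additivity of $\sup$ (i.e.\ $\sup(f+g)\le\sup f+\sup g$) is a different fact, used elsewhere in the paper to show $\Theta$ is convex.
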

		This statement is proved in Box~1.8 of \citep{OptimalTransportforAppliedMathematiciansSantambroglio}.
	
	\begin{lemma}\label{lemma:continuous_approximation}
		Let $\QQ$ be any finite measure and assume that $g$ is a non-negative function in $L^1(\QQ)$. Let $\delta>0$. Then there is a lower semi-continuous function $\tilde g$ for which $\int |g-\tilde g|<\delta$ and $g\geq 0$.
		
	\end{lemma}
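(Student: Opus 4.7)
The plan is to use a two-step approximation: first approximate $g$ in $L^1(\QQ)$ by a non-negative simple function, and then approximate that simple function by a lower semi-continuous non-negative function using outer regularity of the Borel measure $\QQ$.

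First, I would invoke the standard density result: non-negative simple functions are dense in the cone of non-negative $L^1$-functions. So I can choose a simple function $s = \sum_{i=1}^n c_i \one_{A_i}$ with $c_i > 0$ and $A_i$ Borel (disjoint, say), such that $\int |g - s|\, d\QQ < \delta/2$.

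Next, for each Borel set $A_i$, I would use outer regularity of $\QQ$ on $\Rset^d$ (which follows from Theorem~\ref{th:regularity}, since $\Rset^d$ is $\sigma$-compact locally compact Hausdorff, and the outer-regular counterpart of Lemma~\ref{lemma:complete_regular} holds for finite Borel measures) to pick an open set $U_i \supseteq A_i$ with
\[
\QQ(U_i \setminus A_i) < \frac{\delta}{2 n c_i}.
\]
Then I would set
\[
\tilde g = \sum_{i=1}^n c_i \one_{U_i}.
\]
Since each $\one_{U_i}$ is lower semi-continuous (characteristic function of an open set) and the coefficients $c_i$ are non-negative, $\tilde g$ is a non-negative lower semi-continuous function.

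Finally, I would estimate
\[
\int |\tilde g - s|\, d\QQ \;\leq\; \sum_{i=1}^n c_i\, \QQ(U_i \setminus A_i) \;<\; \frac{\delta}{2},
\]
and combine this with the triangle inequality and the bound from the first step to conclude $\int |g - \tilde g|\, d\QQ < \delta$. There is no real obstacle here — the only subtlety is ensuring outer regularity on $\Rset^d$, which is standard for finite Borel measures on a Polish space.
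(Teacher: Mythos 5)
Your proof is correct. The paper does not actually give a proof here---it simply cites Proposition~7.14 of Folland---so there is no internal argument to compare against; your self-contained two-step proof (approximate $g$ in $L^1$ by a non-negative simple function with positive coefficients, then enlarge each level set to an open set via outer regularity to get a lower semicontinuous $\tilde g \ge 0$) is exactly the standard argument underlying the cited result. Two minor remarks: first, the paper's Theorem~\ref{th:regularity} as stated concerns inner regularity, but outer regularity of finite Borel measures on $\Rset^d$ is indeed also standard (e.g.\ Folland Thm.~7.8, or simply the fact that every finite Borel measure on a metric space is outer regular), so your appeal is fine; second, since $\QQ$ is finite, every $\QQ(U_i)$ is automatically finite, so $\tilde g\in L^1(\QQ)$ and the triangle-inequality step is legitimate. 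No gaps.
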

	See Proposition~7.14 of Folland.
	
	\begin{lemma}\label{lemma:phi_lip_approximation}
		Let $g$ be a lower semi-continuous function bounded from below. Then there is a sequence of Lipschitz functions that approaches $g$ from below.
		
	\end{lemma}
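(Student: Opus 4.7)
The plan is to construct the approximations via the standard Moreau--Yosida (inf-convolution) regularization. After translation, we may assume $g \geq 0$ (add a constant, then subtract it at the end; Lipschitz plus constant is still Lipschitz). For each $n \in \Nset$, define
\[
g_n(\bx) = \inf_{\by \in \Rset^d} \bigl[ g(\by) + n\|\bx - \by\| \bigr].
\]
The claim is that $\{g_n\}$ is a sequence of non-negative, $n$-Lipschitz functions with $g_n \leq g_{n+1} \leq g$ and $g_n(\bx) \to g(\bx)$ pointwise for every $\bx$.

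First I would verify the easy structural properties. Taking $\by = \bx$ in the defining infimum shows $g_n(\bx) \leq g(\bx)$, and the monotonicity $g_n \leq g_{n+1}$ is immediate from the fact that increasing $n$ can only increase the penalty $n\|\bx - \by\|$ for $\by \neq \bx$ (and leaves the value at $\by = \bx$ unchanged). For each fixed $\by$, the map $\bx \mapsto g(\by) + n\|\bx - \by\|$ is $n$-Lipschitz by the triangle inequality; taking the infimum over a family of $n$-Lipschitz functions yields an $n$-Lipschitz function (this is Lemma~\ref{lemma:sup_lip} applied to the negatives), provided the infimum is finite everywhere. Finiteness follows from $g_n \leq g$ and $g_n \geq 0$.

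The main step, and the only place where lower semi-continuity of $g$ is used, is to show the pointwise limit equals $g(\bx)$. Fix $\bx$ and set $L = \lim_{n \to \infty} g_n(\bx)$, which exists in $[0, g(\bx)]$ by monotonicity. For each $n$ choose $\by_n$ nearly optimal, say
\[
g(\by_n) + n\|\bx - \by_n\| \leq g_n(\bx) + \tfrac{1}{n} \leq g(\bx) + 1.
\]
Since $g(\by_n) \geq 0$, this yields $n\|\bx - \by_n\| \leq g(\bx) + 1$, hence $\by_n \to \bx$. Lower semi-continuity of $g$ at $\bx$ then gives
\[
g(\bx) \leq \liminf_{n \to \infty} g(\by_n) \leq \liminf_{n \to \infty} \bigl[g(\by_n) + n\|\bx - \by_n\|\bigr] \leq \liminf_{n\to\infty} \bigl(g_n(\bx) + \tfrac{1}{n}\bigr) = L.
\]
Combined with $L \leq g(\bx)$, this gives $g_n(\bx) \nearrow g(\bx)$, completing the proof.

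I expect the only delicate point to be the pointwise convergence step, specifically the argument that the near-minimizers $\by_n$ must converge to $\bx$, which requires the lower bound $g \geq 0$ to control the penalty. If $g$ is only bounded below (not necessarily non-negative), the initial reduction by adding a constant handles this; once $g$ is non-negative, everything is routine.
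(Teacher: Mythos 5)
Your proof is correct and is exactly the Moreau--Yosida inf-convolution argument in Box~1.5 of Santambrogio's book, which is what the paper cites in lieu of a proof. (One tiny caveat: the bound $n\|\bx-\by_n\|\le g(\bx)+1$ is vacuous at any point where $g(\bx)=+\infty$; there you should instead assume $L<\infty$ and bound $n\|\bx-\by_n\|\le g_n(\bx)+1/n\le L+1$, deriving a contradiction from lower semi-continuity---but this is immaterial for the paper's application, where $g$ is a finite-a.e.\ approximation of an $L^1$ function.)
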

	This statement appears in Box~1.5 of \citep{OptimalTransportforAppliedMathematiciansSantambroglio}.
	\begin{corollary}\label{cor:lip_approximation}
		Let $h$ be an $L^1(\QQ)$ function with $h\geq 0$. Then for any $\delta$, there exists a Lipschitz $\tilde h$ for which $\int |h-\tilde h|d\QQ<\delta$. 
	\end{corollary}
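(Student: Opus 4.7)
The plan is to chain together the two preceding approximation lemmas via a triangle inequality. Since $h \in L^1(\QQ)$ is non-negative, I would first invoke Lemma~\ref{lemma:continuous_approximation} with tolerance $\delta/2$ to produce a non-negative lower semi-continuous function $g$ with $\int |h - g|\, d\QQ < \delta/2$. This $g$ is automatically bounded below (by $0$), putting us in the setting of Lemma~\ref{lemma:phi_lip_approximation}.

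Next, I would apply Lemma~\ref{lemma:phi_lip_approximation} to $g$ to obtain a sequence of Lipschitz functions $(g_n)$ approaching $g$ from below: $g_n \leq g$ and $g_n \to g$ pointwise. Because the approximation is monotone from below, $0 \leq g - g_n \downarrow 0$ pointwise. To invoke monotone convergence for $\int (g - g_n)\, d\QQ$, I need $g \in L^1(\QQ)$, which follows immediately from $|g| \leq |h| + |h-g|$, both of which lie in $L^1(\QQ)$ by construction. Hence $\int |g - g_n|\, d\QQ \to 0$, and I may pick $n$ large enough that $\int |g - g_n|\, d\QQ < \delta/2$.

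Setting $\tilde h = g_n$, the triangle inequality yields
\[
\int |h - \tilde h|\, d\QQ \;\leq\; \int |h - g|\, d\QQ + \int |g - g_n|\, d\QQ \;<\; \frac{\delta}{2} + \frac{\delta}{2} = \delta,
\]
as required. Since $g_n$ is Lipschitz, so is $\tilde h$.

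There is no genuine obstacle here; the corollary is simply the composition of the two cited approximation results. The only small point worth being careful about is verifying $g \in L^1(\QQ)$ before passing to the limit in $\int |g - g_n|\, d\QQ$, which is handled by the triangle inequality $|g| \leq |h| + |h-g|$ above.
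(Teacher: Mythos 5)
Your proof is correct and follows the same route as the paper: apply Lemma~\ref{lemma:continuous_approximation} to get a non-negative lower semi-continuous approximant, then Lemma~\ref{lemma:phi_lip_approximation} to approximate that from below by Lipschitz functions, and conclude by the triangle inequality. You are in fact slightly more careful than the paper's terse proof in spelling out why $\int|g-g_n|\,d\QQ\to 0$ (verifying $g\in L^1(\QQ)$ so that monotone/dominated convergence applies), but this is filling in a routine detail rather than a different argument.
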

	\begin{proof}
		By Lemma~\ref{lemma:continuous_approximation}, one can pick a lower semi-continuous $\tilde g$ for which $\td g\geq 0$ and $\int |h-\td g|d\QQ<\delta/2$. Next, by Lemma~\ref{lemma:phi_lip_approximation}, one can pick a Lipschitz $\td h$ for which $\int |\td g-\td h|d\QQ\leq \delta/2$. Thus $\int |h-\td h|d\QQ<\delta$.
	\end{proof} 
    \lemmaxistarpre*
	\begin{proof}
		Let $\PP'=\PP_0'+\PP_1'$ and $\eta'=d\PP_1'/d\PP'$. Then for any $(h_0,h_1)\in S_\phi \cap E$,
		
		\[\int h_1d\PP_1'+\int h_0d\PP_0'=\int \eta'h_1+(1-\eta')h_0 d\PP'\geq \int C_\phi^*(\eta')d\PP'=\dl(\PP_0',\PP_1').\]
		We will now focus on showing the other inequality.
		Define a function $f$ by 
		\[f(\bx)=
		\begin{cases}
			\alpha_\phi(\eta'(\bx))&\bx\in \supp \PP'\\
			0                      &\bx\not \in \supp \PP' 	
		\end{cases}\]
		Let $h_1=\phi \circ f$, $h_0=\phi \circ -f$. Then $h_1$, $h_0$ satisfy the inequality $\eta h_1+(1-\eta)h_0\geq C_\phi^*(\eta)$ for all $\eta$ while on $\supp \PP'$,  $\eta'(\bx)h_1(\bx)+(1-\eta'(\bx))h_0(\bx)=C_\phi^*(\eta')$ and therefore 
		\[\int h_1d\PP_1'+\int h_0d\PP_0'=\int \eta' h_1+(1-\eta')h_0d\PP'=\int C_\phi^*(\eta')d\PP'.\]
		However, $(h_0,h_1)\not \in E$. We will now approximate $h_0,h_1$ by bounded continuous functions contained in $S_\phi$. Let $\delta>0$ be arbitrary. Pick a constant $c>0$ for which $\int cd\PP'<\delta$ and set $\td h_1=\max(h_1,c)$. 
		The pair $(h_0,\td h_1)$ are feasible pair for the set $S_\phi$,
		and thus
		\begin{equation}\label{eq:feasible_ineq}
			C_\phi^*(\eta)-\eta \td h_1-(1-\eta)h_0\leq 0
		\end{equation}
		 Furthermore, 
		\begin{equation}\label{eq:td_h_0_1}
			\int \td h_1 d\PP_1'+\int  h_0d\PP_0'<\dl(\PP_0',\PP_1')+\delta.	
		\end{equation}

		Let $k$ be the constant described by Lemma~\ref{lemma:C_phi_transform_bounded} corresponding to $c$.
		Now by Corollary~\ref{cor:lip_approximation}, there is a Lipschitz function $g$ for which $\int|h_1-g|d\PP'<\min((1-k)/k,1)\delta$. Let $\hat h_1=\max(g,c)$. Then Lemma~\ref{lemma:sup_lip} implies that $\hat h_1$ has the same Lipschitz constant as $g$, and the fact that $\tilde h_1\geq c$ implies that 
		\begin{equation}\label{eq:bar_h_1}
			\int |\tilde h_1-\hat h_1|d\PP'\leq \int |\tilde h_1-g|d\PP'<\min\left(\frac  {1-k}{k},1\right) \delta
		\end{equation}
		Now let $\hat h_0=\hat h_1^{C_\phi^*}$. By Lemma~\ref{lemma:C_phi_transform_bounded}, the supremum in the $\empty^{C_\phi^*}$ transform for computing $\hat h_0$ can be taken over $[0,k]$. Therefore, if $L$ is the Lipschitz constant of $\hat h_1$, Lemma~\ref{lemma:sup_lip} implies that the Lipschitz constant of $\hat h_0$ is at most $kL/(1-k)$. Furthermore, $\hat h_0,\hat h_1$ are bounded on $K^\e$ because Lipschitz functions are bounded over compact sets. Thus $(\hat h_0,\hat h_1)$ is in $S_\phi \cap E$.
		Next, we will show that $\int \hat h_0$ is close to $\int h_0$.
		\begin{align*}
			&\int \hat h_0-h_0d\PP_0'=\int \sup_{[0,k]} \frac{C_\phi^*(\eta)-\eta \hat h_1}{1-\eta} -h_0 d\PP_0'=\int \sup_{[0,k]}\frac{C_\phi^*(\eta)-\eta \hat h_1-(1-\eta)h_0}{1-\eta}d\PP_0'\\
			&= \int  \sup_{[0,k]}\left(\frac{C_\phi^*(\eta)-\eta \tilde h_1-(1-\eta)h_0}{1-\eta}+ \frac \eta {1-\eta} (\td h_1-\hat h_1)\right) d\PP_0'\\
			&\leq\int  \sup_{[0,k]}\frac{C_\phi^*(\eta)-\eta \tilde h_1-(1-\eta)h_0}{1-\eta}+ \sup_{[0,k]}\frac \eta {1-\eta} (\td h_1-\hat h_1)d\PP_0' \leq \int \sup_{[0,k]} \frac \eta {1-\eta} (\tilde h_1 -\hat h_1)d\PP_0'&\text{(Equation~\ref{eq:feasible_ineq})}\\
			&=\frac {k}{1-k} \int \tilde h_1-\hat h_1d\PP_0'\leq \delta&\text{(Equation~\ref{eq:bar_h_1})}
		\end{align*}

		Therefore, by \eqref{eq:td_h_0_1}, \eqref{eq:bar_h_1}, and the computation above, 
		\[\int \hat h_1 d\PP_1'+\int \hat h_0d\PP_0' \leq \dl(\PP_0',\PP_1') +3\delta\]
       AS $\delta>0$ is arbitrary, this inequality implies \eqref{eq:dl_inf_representation}. Because $K^\e$ is compact, the upper semi-continuity and concavity of $\dl$ then follows from \eqref{eq:dl_inf_representation} together with the Reisz representation theorem.
	\end{proof}

\section{Duality for Distributions with arbitrary support--Proof of Lemma~\ref{lemma:duality_theta_all}}\label{app:duality_all}
We begin with the simple observation that weak duality holds for measures supported on $\Rset^d$. This argument is essentially swapping the order of an infimum and a supremum as presented in Section~\ref{sec:main_results_summary}.

\begin{lemma}[Weak Duality]\label{lemma:weak_duality_borel}
    Let $\phi$ be a non-increasing and lower semi-continuous loss function. Let $S_\phi$ be the set of pairs of functions defined in \eqref{eq:S_def} for $K=\Rset^d$.

     Then 
     \begin{equation*}
         \inf_{(h_0,h_1)\in S_\phi}\Theta(h_0,h_1)\geq \sup_{\substack{\PP_0'\in \Wball\e (\PP_0)\\ \PP_1'\in \Wball\e(\PP_1)}} \bar R_\phi(\PP_0',\PP_1')
     \end{equation*}
\end{lemma}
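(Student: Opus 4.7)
The plan is to execute exactly the inf-sup swap that was sketched in Section~\ref{sec:main_results_summary} (equations \eqref{eq:weak_duality_original}--\eqref{eq:weak_duality_last}) but now replacing $\phi \circ f$ and $\phi \circ -f$ by arbitrary pairs $(h_0, h_1) \in S_\phi$. The key input is Lemma~\ref{lemma:S_e_W_infty_equivalence}, which allows one to convert each term $\int S_\e(h_i) d\PP_i$ into a supremum of linear functionals over $\Wball\e(\PP_i)$.

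First, I would fix an arbitrary $(h_0, h_1) \in S_\phi$ and arbitrary measures $\PP_0' \in \Wball\e(\PP_0)$, $\PP_1' \in \Wball\e(\PP_1)$. Applying Lemma~\ref{lemma:S_e_W_infty_equivalence} separately to $h_0$ with $\QQ = \PP_0$ and to $h_1$ with $\QQ = \PP_1$ yields the pointwise-in-$(\PP_0', \PP_1')$ bound
\[
\Theta(h_0, h_1) = \int S_\e(h_1) d\PP_1 + \int S_\e(h_0) d\PP_0 \geq \int h_1 d\PP_1' + \int h_0 d\PP_0'.
\]
Since the $h_i$ are non-negative Borel functions, this application of Lemma~\ref{lemma:S_e_W_infty_equivalence} is immediate.

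Next, set $\PP' = \PP_0' + \PP_1'$ and let $\eta' = d\PP_1'/d\PP'$; this Radon-Nikodym derivative exists and takes values in $[0,1]$ because $\PP_1' \leq \PP'$ as positive measures. Rewriting the right-hand side with respect to $\PP'$ gives
\[
\int h_1 d\PP_1' + \int h_0 d\PP_0' = \int \bigl(\eta'(\bx) h_1(\bx) + (1-\eta'(\bx)) h_0(\bx)\bigr) d\PP'(\bx).
\]
The defining property of $S_\phi$ states that $\eta h_1(\bx) + (1-\eta) h_0(\bx) \geq C_\phi^*(\eta)$ for all $\bx$ and every $\eta \in [0,1]$; specializing to $\eta = \eta'(\bx)$ at each $\bx$, the integrand is bounded below by $C_\phi^*(\eta'(\bx))$. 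Integrating yields
\[
\int h_1 d\PP_1' + \int h_0 d\PP_0' \geq \int C_\phi^*(\eta') d\PP' = \bar R_\phi(\PP_0', \PP_1').
\]

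Finally, I would take the supremum over $(\PP_0', \PP_1') \in \Wball\e(\PP_0) \times \Wball\e(\PP_1)$ on the right and the infimum over $(h_0, h_1) \in S_\phi$ on the left, obtaining the desired inequality. No serious obstacle is expected: this is a textbook weak-duality swap, with the only subtlety being to note that $\eta'$ is well-defined and $[0,1]$-valued because $\PP_1'$ is dominated by $\PP' = \PP_0' + \PP_1'$, and that Lemma~\ref{lemma:S_e_W_infty_equivalence} applies to non-negative Borel $h_i$ (which is precisely the regularity guaranteed by membership in $S_\phi$).
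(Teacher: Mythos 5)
Your proposal is correct and follows essentially the same argument as the paper's proof of Lemma~\ref{lemma:weak_duality_borel}: both hinge on Lemma~\ref{lemma:S_e_W_infty_equivalence} to write $\int S_\e(h_i)\,d\PP_i$ as a supremum over the $W_\infty$ ball, then invoke the pointwise constraint defining $S_\phi$ to lower-bound the integrand by $C_\phi^*(\eta')$. The only cosmetic difference is that you fix $(h_0,h_1)$ and $(\PP_0',\PP_1')$ up front and take the inf/sup at the end, whereas the paper presents it directly as an inf-sup swap; the content is identical.
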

\begin{proof}
    By Lemma~\ref{lemma:S_e_W_infty_equivalence}, 
    \[\inf_{(h_0,h_1)\in S_\phi}\int S_\e(h_0)d\PP_0+\int S_\e(h_1)d\PP_1=\inf_{(h_0,h_1)\in S_\phi}\sup_{\substack{\PP_0'\in \Wball\e (\PP_0)\\ \PP_1'\in \Wball\e(\PP_1)}} \int h_0d\PP_0'+\int h_1d\PP_1'.\]
    Thus by swapping the $\inf$ and the $\sup$,
    \begin{align*}
        &\inf_{(h_0,h_1)\in S_\phi} \int S_\e(h_0)d\PP_0+\int S_\e(h_1)d\PP_1 \geq \sup_{\substack{\PP_0'\in \Wball\e (\PP_0)\\ \PP_1'\in \Wball\e(\PP_1)}} \inf_{(h_0,h_1)\in S_\phi}\int h_0d\PP_0'+\int h_1d\PP_1'\\
        &=\sup_{\substack{\PP_0'\in \Wball\e (\PP_0)\\ \PP_1'\in \Wball\e(\PP_1)}}\inf_{(h_0,h_1)\in S_\phi}\int \frac{d\PP_1'}{d(\PP_0'+\PP_1')} h_1+\left( 1-\frac{d\PP_1'}{d(\PP_0'+\PP_1')} \right) h_0 d(\PP_0'+\PP_1')
        \geq\sup_{\substack{\PP_0'\in \Wball\e (\PP_0)\\ \PP_1'\in \Wball\e(\PP_1)}}\bar R_\phi(\PP_0',\PP_1')
    \end{align*}
    
\end{proof}

The main strategy in this section is approximating measures with unbounded support by measures with bounded support. To this end, we define the \emph{restriction} of a measure $\PP$ to a set $K$ by $\PP|_K(A)=\PP(K\cap A)$. 

The Portmaneau theorem then allows us to draw some conclusions about weakly convergent sequences of measures.
\begin{theorem}[Portmanteau Theorem]\label{th:portmanteau}
The following are equivalent:
\begin{enumerate}[label=\arabic*)]
\item The sequence $\QQ^n\in \cM_+(\Rset^d)$ converges weakly to $\QQ$
\item \label{it:portmanteau_closed}For all closed sets $C$, $\limsup_{n\to \infty} \QQ^n(C)\leq \QQ(C)$ and $\lim_{n\to \infty} \QQ^n(\Rset^d)=\QQ(\Rset^d)$
\item For all open sets $U$,
$\liminf_{n\to \infty} \QQ^n(U)\geq \QQ(U)$ and $\lim_{n\to \infty}\QQ^n(\Rset^d)=\QQ(\Rset^d)$
\end{enumerate}
\end{theorem}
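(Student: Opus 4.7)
The plan is to establish the chain $(1) \Rightarrow (2) \Leftrightarrow (3) \Rightarrow (1)$. At several points I will exploit the fact that the constant function $1$ lies in $C_b(\Rset^d)$, so whenever $(1)$ holds, testing against it yields mass convergence $\QQ^n(\Rset^d) \to \QQ(\Rset^d)$ for free; this lets me treat the mass-convergence clause in $(2)$ and $(3)$ as automatic under $(1)$.

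For $(1) \Rightarrow (2)$, fix a closed set $C$ and define the bounded continuous functions $f_k(\bx) = \max(0, 1 - k\dist(\bx, C))$ for integers $k \geq 1$. Each $f_k$ equals $1$ on $C$, so $\QQ^n(C) \leq \int f_k \, d\QQ^n$, and weak convergence gives $\int f_k \, d\QQ^n \to \int f_k \, d\QQ$ as $n \to \infty$. Hence $\limsup_n \QQ^n(C) \leq \int f_k \, d\QQ$ for every $k$. Since $C$ is closed, $f_k \downarrow \one_C$ pointwise, and dominated convergence (which uses finiteness of $\QQ$) yields $\int f_k \, d\QQ \to \QQ(C)$, so $\limsup_n \QQ^n(C) \leq \QQ(C)$.

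For $(2) \Leftrightarrow (3)$, a complement argument suffices. If $U$ is open then $\Rset^d \setminus U$ is closed, and writing $\QQ^n(U) = \QQ^n(\Rset^d) - \QQ^n(\Rset^d \setminus U)$ together with mass convergence gives $\liminf_n \QQ^n(U) = \QQ(\Rset^d) - \limsup_n \QQ^n(\Rset^d \setminus U)$, from which each direction of the equivalence is immediate.

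For $(3) \Rightarrow (1)$, fix $f \in C_b(\Rset^d)$. By adding a constant (handled by mass convergence) and rescaling, it suffices to treat $f$ with $0 \leq f \leq 1$. The layer-cake representation gives $\int f \, d\QQ = \int_0^1 \QQ([f > t]) \, dt$, and continuity of $f$ makes each superlevel set $[f > t]$ open. Fatou's lemma combined with $(3)$ then yields $\liminf_n \int f \, d\QQ^n \geq \int_0^1 \liminf_n \QQ^n([f > t]) \, dt \geq \int f \, d\QQ$; applying the same bound to $1 - f$ produces the matching $\limsup$, so the integrals converge. The argument is largely routine; the only care required is in tracking mass convergence through the open/closed complement step and through the reduction to $0 \leq f \leq 1$, so no single step constitutes a genuine obstacle.
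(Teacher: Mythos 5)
The paper does not prove this theorem at all: it is cited directly from Bogachev (Theorem~8.2.3), so there is no in-paper argument to compare against. Your proof is a correct, standard textbook proof of the Portmanteau theorem. The Urysohn-type approximation $f_k = \max(0, 1 - k\dist(\cdot, C)) \downarrow \one_C$ for $(1)\Rightarrow(2)$, the complement-plus-mass bookkeeping for $(2)\Leftrightarrow(3)$, and the layer-cake plus Fatou argument for $(3)\Rightarrow(1)$ are all sound; the reduction to $0\le f\le 1$ is clean once mass convergence is available, and you correctly flag that as the only bookkeeping hazard.

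One point worth making explicit, since the paper is not entirely consistent in its usage: your argument (and the theorem itself) presupposes that ``weak convergence'' of elements of $\cM_+(\Rset^d)$ means convergence of integrals against all of $C_b(\Rset^d)$, so that testing against the constant $1$ immediately gives the mass-convergence clause. This is the Bogachev convention that the citation fixes. If one instead took the weak-$*$ topology in duality with $C_0(\Rset^d)$ --- which the paper invokes elsewhere, e.g.\ in Lemma~\ref{lemma:dl_upper_semicontinuous} --- then $(1)$ alone would not yield mass convergence (mass may escape to infinity, as with $\QQ^n = \delta_{n\mathbf{e}_1}$), and $(1)\Rightarrow(2)$ would fail. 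Your implicit reliance on the $C_b$ convention is therefore correct and necessary, but it is worth stating, precisely because the surrounding text mixes the two topologies.
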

See Theorem 8.2.3 of \citep{BogachevMeasureTheory2}. This result allows us to draw conclusions about restrictions of weakly convergent sequences.
\begin{lemma}\label{lemma:weakly_convergent_restriction}    
    Let $\QQ^n, \QQ\in \cM_+(\Rset^d)$ and assume that $\QQ^n$ converges weakly to $\QQ$. Let $K$ be a compact set with  $\QQ(\partial K)=0$. Then $\QQ^n|_K$ converges weakly to $\QQ|_K$.
\end{lemma}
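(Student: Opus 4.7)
The plan is to apply the Portmanteau theorem (Theorem~\ref{th:portmanteau}) directly, using characterization~\ref{it:portmanteau_closed}: it suffices to show that $\limsup_{n\to\infty} \QQ^n|_K(C) \leq \QQ|_K(C)$ for every closed set $C \subset \Rset^d$, together with $\lim_{n\to\infty} \QQ^n|_K(\Rset^d) = \QQ|_K(\Rset^d)$.

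First I would handle the total mass condition. Since $\QQ^n|_K(\Rset^d) = \QQ^n(K)$ and $K$ is closed, the Portmanteau theorem applied to $\QQ^n$ gives $\limsup_{n\to\infty} \QQ^n(K) \leq \QQ(K)$. On the other hand, the interior $\interior K$ is open, so $\liminf_{n\to\infty} \QQ^n(\interior K) \geq \QQ(\interior K)$. The hypothesis $\QQ(\partial K) = 0$ means $\QQ(\interior K) = \QQ(K)$, and since $\QQ^n(\interior K) \leq \QQ^n(K)$ we conclude
\[
\lim_{n\to\infty} \QQ^n(K) = \QQ(K) = \QQ|_K(\Rset^d).
\]

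Second I would verify the closed-set bound. For any closed $C \subset \Rset^d$, the intersection $C \cap K$ is closed, so Portmanteau applied to $\QQ^n$ gives
\[
\limsup_{n\to\infty} \QQ^n|_K(C) = \limsup_{n\to\infty} \QQ^n(C \cap K) \leq \QQ(C \cap K) = \QQ|_K(C).
\]
Combined with the total-mass equality from the previous paragraph, condition~\ref{it:portmanteau_closed} of Theorem~\ref{th:portmanteau} is satisfied for the sequence $\QQ^n|_K$ and limit $\QQ|_K$, so $\QQ^n|_K \to \QQ|_K$ weakly.

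There is no real obstacle here; the only subtle point is that the boundary hypothesis $\QQ(\partial K) = 0$ is used exactly to upgrade the one-sided Portmanteau bound on $\QQ^n(K)$ to an equality, which is what the total-mass clause of the Portmanteau theorem demands. Without this hypothesis, mass of $\QQ^n$ could accumulate on $\partial K$ and be counted in $\QQ^n|_K$ but not $\QQ|_K$, breaking weak convergence of the restrictions.
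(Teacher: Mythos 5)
Your proof is correct and follows essentially the same route as the paper: verify condition~\ref{it:portmanteau_closed} of the Portmanteau theorem by first squeezing $\lim_n\QQ^n(K)=\QQ(K)$ using $K$ closed, $\interior K$ open, and $\QQ(\partial K)=0$, then observing that $C\cap K$ is closed to get the one-sided bound on closed sets. No gap to report.
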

\begin{proof}
    We will verify \ref{it:portmanteau_closed} of Theorem~\ref{th:portmanteau} for the measures $\QQ^n|_K$, $\QQ$.
    
   First, because $\QQ(K)=\QQ(\interior K)$, Theorem~\ref{th:portmanteau} 
   implies that \[\limsup_{n\to \infty} \QQ^n(K)\leq \QQ(K)=\QQ(\interior K)\leq \liminf_{n\to \infty} \QQ^n(\interior K)\leq \liminf_{n\to \infty} \QQ^n(K).\]
  Therefore, $\lim_{n\to \infty} \QQ^n|_K(\Rset^d)=\lim_{n\to \infty} \QQ^n(K)=\QQ(K)$. Next, for any closed set $C$, the set $C\cap K$ is also closed so the fact that $\QQ^n$ weakly converges to $\QQ$ implies that 
\[\limsup_{n\to \infty} \QQ^n|_K(C)=\limsup_{n\to \infty} \QQ^n(K\cap C)\leq \QQ(K\cap C)=\QQ|_K(C).\]  
\end{proof}
Next, Prokhorov's theorem allows us to identify weakly convergent subsequences.
\begin{theorem}\label{th:prokhorov}
    Let $\QQ^n$ be a sequence of measures for which $\sup_n\QQ^n(\Rset^d)<\infty$ and for all $\delta$, there exists a compact $K$ for which $\QQ^n(K^C)<\delta$ for all $n$. Then $\QQ^n$ has a weakly convergent subsequence.
\end{theorem}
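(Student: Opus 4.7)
The plan is to prove Prokhorov's theorem by realizing the measures as bounded linear functionals on the separable Banach space $C_0(\Rset^d)$, using Banach--Alaoglu together with separability to extract a weak-$*$ convergent subsequence, and then upgrading weak-$*$ convergence on $C_0(\Rset^d)$ to weak convergence on $C_b(\Rset^d)$ by exploiting tightness through a cutoff argument.

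First I would identify each $\QQ^n$ with a positive bounded linear functional on $C_0(\Rset^d)$ via the Riesz representation theorem; the norm of $\QQ^n$ in $C_0(\Rset^d)^*$ equals $\QQ^n(\Rset^d)$, which is uniformly bounded by hypothesis. Since $C_0(\Rset^d)$ is separable, the closed ball of its dual is weak-$*$ compact and metrizable, so some subsequence $\QQ^{n_k}$ converges weak-$*$ to a functional $\mu$ in $C_0(\Rset^d)^*$. Positivity of the $\QQ^n$ is preserved in the limit, so by Riesz, $\mu$ corresponds to a finite positive Borel measure on $\Rset^d$.

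Next I would use tightness to pin down the total mass of $\mu$ and to justify that weak-$*$ convergence on $C_0(\Rset^d)$ implies weak convergence on $C_b(\Rset^d)$. Given $\delta > 0$, let $K$ be compact with $\QQ^n(K^C) < \delta$ for all $n$, and pick $\chi \in C_c(\Rset^d)$ with $0 \leq \chi \leq 1$ and $\chi = 1$ on $K$ (the existence of such a $\chi$ follows from Urysohn's lemma applied to $K$ and the complement of a slight enlargement). Then for any $f \in C_b(\Rset^d)$, $f\chi \in C_0(\Rset^d)$, and the decomposition
\begin{equation*}
\int f \, d\QQ^{n_k} - \int f \, d\mu = \left(\int f\chi \, d\QQ^{n_k} - \int f\chi \, d\mu\right) + \int f(1-\chi) \, d\QQ^{n_k} - \int f(1-\chi) \, d\mu
\end{equation*}
lets me estimate the first term by weak-$*$ convergence and the last two by $\|f\|_\infty \delta$ and $\|f\|_\infty \mu(K^C)$ respectively. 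Taking $f \equiv 1$ first (or more precisely an increasing sequence of $\chi$'s) shows $\mu(\Rset^d) = \lim_k \QQ^{n_k}(\Rset^d)$ along a further subsequence and that $\mu(K^C) \leq \delta$; reusing the same cutoff for general $f \in C_b(\Rset^d)$ then yields $\limsup_k |\int f \, d\QQ^{n_k} - \int f \, d\mu| \leq 2\|f\|_\infty \delta$, so sending $\delta \to 0$ gives weak convergence.

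The main obstacle is the last step: weak-$*$ convergence against $C_0(\Rset^d)$ does not a priori preserve total mass (mass could escape to infinity), and $C_b(\Rset^d)$ is strictly larger than $C_0(\Rset^d)$. Tightness is exactly the hypothesis that prevents mass escape and allows one to cut off $C_b$ functions to $C_c$ functions with uniformly small error, so the proof reduces to formalizing this cutoff argument carefully. One could alternatively use a compact exhaustion $K_j$ with $\QQ^n(K_j^C) < 1/j$, apply Banach--Alaoglu to each $\QQ^n|_{K_j}$ in $\cM(K_j)$, and diagonalize to obtain compatible limits $\mu_j$ that glue into a Borel measure $\mu$ on $\Rset^d$; but the $C_0$-based route above is more direct.
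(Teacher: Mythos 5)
The paper does not prove this statement; it cites Theorem~8.6.2 of Bogachev's \emph{Measure Theory}. So there is no ``paper's proof'' to compare against. Your proposed proof is a standard and correct route to Prokhorov's theorem on $\Rset^d$: Riesz identification $C_0(\Rset^d)^*\cong\cM(\Rset^d)$, Banach--Alaoglu plus separability of $C_0(\Rset^d)$ to extract a weak-$*$ convergent subsequence to a positive finite measure $\mu$, and a Urysohn cutoff to upgrade weak-$*$ convergence (against $C_0$) to weak convergence (against $C_b$) using tightness.

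Two small points of hygiene in the write-up. First, to conclude $\mu(K^C)\leq\delta$ you should invoke inner regularity explicitly: for any compact $L\subset K^C$, the sets $L$ and $K$ are disjoint compacts, so Urysohn gives $\psi\in C_c(\Rset^d)$ with $\one_L\leq\psi\leq\one_{K^C}$, hence $\mu(L)\leq\int\psi\,d\mu=\lim_k\int\psi\,d\QQ^{n_k}\leq\sup_k\QQ^{n_k}(K^C)\leq\delta$, and then take the supremum over $L$. Second, no further subsequence is needed to deduce $\QQ^{n_k}(\Rset^d)\to\mu(\Rset^d)$: your cutoff decomposition with $f\equiv 1$ already yields $\limsup_k\bigl|\QQ^{n_k}(\Rset^d)-\mu(\Rset^d)\bigr|\leq 2\delta$ along the original weak-$*$ convergent subsequence, and $\delta>0$ is arbitrary. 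With these details filled in, the argument is complete and gives a clean self-contained proof where the paper simply cites.
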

See Theorem 8.6.2 of \citep{BogachevMeasureTheory2}. These results imply that $\dl$ is upper semi-continuous on $\cM_+(\Rset^d)\times \cM_+(\Rset^d)$.

\begin{lemma}\label{lemma:dl_upper_semicontinuous}
    The functional $\dl$ is upper semi-continuous with respect to the weak topology on probability measures (in duality with $C_0(\Rset^d)$).
\end{lemma}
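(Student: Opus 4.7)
The strategy is to reduce to the compact-support case, where Lemma~\ref{lemma:Xi_star_pre} has already exhibited $\dl(\PP_0', \PP_1')$ on measures supported in $K^\e$ (with $K$ compact) as the infimum of the family of functionals $(\PP_0', \PP_1') \mapsto \int h_0 \, d\PP_0' + \int h_1 \, d\PP_1'$ indexed by $(h_0, h_1) \in S_\phi \cap C_b(K^\e)^2$. On measures supported in a compact set, each of these is continuous in the narrow topology, and an infimum of continuous functionals is upper semi-continuous, giving the compact-support case for free.

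For the general case, I would take $(\PP_0^n, \PP_1^n) \to (\PP_0, \PP_1)$ in the topology in question. The first step is to observe that on probability measures (or, more generally, measures of a prescribed total mass), convergence in duality with $C_0(\Rset^d)$ coincides with narrow convergence: because mass is conserved at the limit, nothing escapes to infinity, and any $C_b$-test function can be truncated by a $C_0$-function at an arbitrarily small cost. Given $\delta>0$, I would then choose a closed ball $K = \overline{B_R(\zero)}$ large enough that $\PP_0(K^c), \PP_1(K^c) < \delta$ and $\PP_i(\partial K) = 0$ (only countably many radii need to be excluded). Lemma~\ref{lemma:weakly_convergent_restriction} then yields $\PP_i^n|_K \rightharpoonup \PP_i|_K$ narrowly, and Portmanteau (Theorem~\ref{th:portmanteau}) gives $\PP_i^n(K^c) \to \PP_i(K^c)$.

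The key algebraic input is that $\dl$ decomposes additively across disjoint pieces, since $\eta' = d\PP_1'/d(\PP_0'+\PP_1')$ is a local quantity:
\begin{equation*}
    \dl(\PP_0^n, \PP_1^n) = \dl(\PP_0^n|_K, \PP_1^n|_K) + \dl(\PP_0^n|_{K^c}, \PP_1^n|_{K^c}).
\end{equation*}
The tail is controlled by the uniform bound $M := \sup_{\eta \in [0,1]} C_\phi^*(\eta) < \infty$ (finite because $C_\phi^*(\eta) \leq \eta \phi(\alpha_0) + (1-\eta)\phi(-\alpha_0)$ for any $\alpha_0$ at which $\phi$ is finite on both sides), yielding
\begin{equation*}
    \dl(\PP_0^n|_{K^c}, \PP_1^n|_{K^c}) \leq M\bigl(\PP_0^n(K^c) + \PP_1^n(K^c)\bigr) \longrightarrow M\bigl(\PP_0(K^c) + \PP_1(K^c)\bigr) < 2M\delta.
\end{equation*}
Combining this with the compact-case upper semi-continuity from the first paragraph (applied to $\PP_i^n|_K \rightharpoonup \PP_i|_K$, all supported in $K^\e$) and with $\dl(\PP_0|_K, \PP_1|_K) \leq \dl(\PP_0, \PP_1)$ (again by non-negativity and disjoint additivity) gives
\begin{equation*}
    \limsup_n \dl(\PP_0^n, \PP_1^n) \leq \dl(\PP_0, \PP_1) + 2M\delta,
\end{equation*}
and sending $\delta \to 0$ finishes the proof. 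The main obstacle is the first step, namely justifying that Lemma~\ref{lemma:weakly_convergent_restriction} (which is stated for narrow convergence) is applicable to a sequence that a priori converges only in $C_0$-duality; this is where the hypothesis of being on probability measures is essential, since it prevents mass from leaking to infinity.
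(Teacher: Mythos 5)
Your proof is correct and follows essentially the same route as the paper's: reduce to the compact case via Lemma~\ref{lemma:Xi_star_pre}, pick a large ball whose boundary is a $\PP$-null set, restrict via Lemma~\ref{lemma:weakly_convergent_restriction}, and control the tail by the uniform bound $\sup_\eta C_\phi^*(\eta)=C_\phi^*(1/2)$ (what you call $M$); your ``additive decomposition'' of $\dl$ across $K$ and $K^c$ is precisely the algebra behind the paper's bound~\eqref{eq:bound_last}. The one genuine addition is your explicit observation that convergence in $C_0(\Rset^d)$-duality together with conservation of total mass upgrades to narrow convergence, which is what makes Lemma~\ref{lemma:weakly_convergent_restriction} and Theorem~\ref{th:portmanteau} (both stated for narrow convergence) applicable; the paper passes over this point silently, and your remark is a worthwhile clarification rather than a departure in method.
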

Notice that Lemma~\ref{lemma:Xi_star_pre} implies that $\dl$ is upper semi-continuous on the space $\cM_+(K^\e)\times \cM_+(K^\e)$ for a \emph{compact} set $K$. However, on $\Rset^d$, weak convergence of measures is defined with respect to the dual of $C_0(\Rset^d)$, the set of continuous functions vanishing at $\infty$. This set is strictly smaller than $C_b(\Rset^d)$, and thus the relation \eqref{eq:dl_inf_representation} would not immediately imply the the upper semi-continuity of $\prm$.
\begin{proof}
    Let $\QQ_0^n,\QQ_1^n$ be sequences of measures converging to $\QQ_0,\QQ_1$ respectively. Set $\QQ=\QQ_0+\QQ_1$.
       
    Define a function $F(R)= \QQ(\ov{B_R(\zero)}^C)$. Then because this function is non-increasing, it has finitely many points of discontinuity. 
    
    Let $\delta>0$ be arbitrary and choose $R$ large enough so that $F(R)<\delta/C_\phi^*(1/2)$ and $F$ is continuous at $R$.
    Then notice that $\PP(\partial B_R(\zero))=0$ and thus one can apply Lemma~\ref{lemma:weakly_convergent_restriction} with the set $\ov{B_R(\zero)}$.

Now let $\nu_0,\nu_1$ be arbitrary measures. Consider $\nu_i^R$ defined by $\nu_i^R=\nu_i|_{\ov{B_R(\zero)}}$.    
    Set $\nu=\nu_0+\nu_1$, $\eta=d\nu_1/d\nu$, $\nu^R=\nu_0^R+\nu_1^R$, $\eta^R=d\nu_1^R/d\nu^R$. Then on $\ov{B_R(\zero)}$, $\eta^R=\eta$ a.e. Thus
    \begin{equation}
    |\dl(\nu_0^R,\nu_1^R)-\dl(\nu_0,\nu_1)|=\left| \int C_\phi^*(\eta)\one_{\ov{B_R(\zero)}}d\nu-\int C_\phi^*(\eta)d\nu \right|\leq C_\phi^*\left( \frac 12 \right)\nu(\ov{B_R(\zero)^C})\label{eq:bound_last}
    \end{equation} 
    If we define $\QQ_{i,R},\QQ_{i,R}^n$ via $\QQ_{i,R}=\QQ_i|_{\ov{B_R(\zero)}}$,  $\QQ_{i,R}^n=\QQ_{i}^n=\QQ_i^n|_{\ov{B_R(\zero)}}$, Lemma~\ref{lemma:weakly_convergent_restriction} implies that $\QQ_{i,R}^n$ converges weakly to $\QQ_{i,R}$ and $\lim_{n\to \infty} \QQ^n(\ov{B_R(\zero)}^C)=\QQ(B_R(\zero)^C)<\delta$. Therefore, for sufficiently large $n$, $\QQ^n(\ov{B_R(\zero)}^C)<2\delta/C_\phi^*(1/2)$. By Lemma~\ref{lemma:Xi_star_pre} and \eqref{eq:bound_last},
    \[\limsup_{n\to \infty} \dl(\QQ_0^n,\QQ_1^n)\leq \limsup_{n\to \infty} \dl(\QQ_{0,R}^n,\QQ_{1,R}^n)+2\delta\leq \dl(\QQ_{0,R},\QQ_{1,R})+2\delta\leq \dl(\QQ_0,\QQ_1)+3\delta\]

    Because $\delta$ was arbitrary, the result follows.
    
\end{proof}

Next we consider an approximation of $\PP_0$, $\PP_1$ by compactly supported measures.

\begin{lemma}\label{lemma:dual_first_approximation}
Let $\PP_0,\PP_1$ be finite measures. Define $\PP_i^n=\PP_i|_{\ov{B_n(\zero)}}$ for $n\in \Nset$. Then $\PP_0^n,\PP_1^n$ converge weakly to $\PP_0$, $\PP_1$ respectively. Furthermore, there are measures $\PP_0^*\in \Wball \e (\PP_0),\PP_1^*\in \Wball \e(\PP_1)$ for which 
\begin{equation}\label{eq:dl_upper_approximation_bound}
    \limsup_{n\to \infty} \sup_{\substack{\PP_1'\in \Wball \e(\PP_1^n)\\ \PP_0' \in \Wball \e(\PP_0)^n}}\dl (\PP_0',\PP_1')\leq \dl (\PP_0^*,\PP_1^*)
    \end{equation}
\end{lemma}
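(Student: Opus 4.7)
The plan is to apply Prokhorov's theorem to extract a weakly convergent subsequence of maximizers for the compactly-supported problems, and then invoke upper semi-continuity of $\dl$. First, weak convergence $\PP_i^n \to \PP_i$ follows by dominated convergence: for any $f \in C_b(\Rset^d)$,
\[
\int f\, d\PP_i^n = \int f\one_{\ov{B_n(\zero)}}\, d\PP_i \to \int f\, d\PP_i.
\]
For each $n$, Lemma~\ref{lemma:duality_compact} applied to the compactly-supported pair $(\PP_0^n, \PP_1^n)$ supplies maximizers $\PP_0^{*,n} \in \Wball \e(\PP_0^n)$ and $\PP_1^{*,n} \in \Wball \e(\PP_1^n)$ of $\dl$. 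I pass to a subsequence $n_k$ along which $\dl(\PP_0^{*,n_k}, \PP_1^{*,n_k})$ converges to the $\limsup$ on the left-hand side of \eqref{eq:dl_upper_approximation_bound}.

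Next, I use Prokhorov's theorem (Theorem~\ref{th:prokhorov}) to extract a further subsequence along which both $\PP_0^{*,n_k}$ and $\PP_1^{*,n_k}$ converge weakly (applying Prokhorov twice and diagonalizing). Bounded total mass is immediate from $\PP_i^{*,n}(\Rset^d) = \PP_i^n(\Rset^d) \leq \PP_i(\Rset^d)$. For tightness, given $\delta > 0$, choose $R$ with $\PP_i(\ov{B_R(\zero)}^C) < \delta$, and let $\gamma_i^n$ be a coupling of $\PP_i^n$ and $\PP_i^{*,n}$ supported in $\Delta_\e$. The triangle inequality shows that $\|\bx'\| > R + \e$ together with $\|\bx - \bx'\| \leq \e$ forces $\|\bx\| > R$, so
\[
\PP_i^{*,n}(\ov{B_{R+\e}(\zero)}^C) \leq \PP_i^n(\ov{B_R(\zero)}^C) \leq \PP_i(\ov{B_R(\zero)}^C) < \delta,
\]
yielding tightness. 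Let $\PP_0^*, \PP_1^*$ denote the weak limits along this further subsequence.

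To finish, I verify $\PP_i^* \in \Wball \e(\PP_i)$ via Lemma~\ref{lemma:W_inf_integral_characterization}. That lemma rephrases $W_\infty(\PP_i^n, \PP_i^{*,n}) \leq \e$ as the family of inequalities $\int h\, d\PP_i^{*,n} \leq \int S_\e(h)\, d\PP_i^n$ for $h \in C_b(\Rset^d)$; since $S_\e(h) \in C_b(\Rset^d)$ whenever $h$ is, passing to the limit along $n_k$ preserves these inequalities and yields $W_\infty(\PP_i^*, \PP_i) \leq \e$ by another appeal to Lemma~\ref{lemma:W_inf_integral_characterization}. Lemma~\ref{lemma:dl_upper_semicontinuous} then gives
\[
\dl(\PP_0^*, \PP_1^*) \geq \limsup_{k\to\infty} \dl(\PP_0^{*,n_k}, \PP_1^{*,n_k}),
\]
which is exactly \eqref{eq:dl_upper_approximation_bound} by our choice of $n_k$. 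The main technical subtlety will be passing from $C_0$-weak convergence (naturally delivered by Prokhorov) to the $C_b$-integration required by Lemma~\ref{lemma:W_inf_integral_characterization}; tightness together with convergence of total masses makes this a standard cutoff argument.
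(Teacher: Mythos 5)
Your proposal is correct and follows essentially the same route as the paper: extract maximizers $\PP_i^{*,n}$ from the compactly supported duality (Lemma~\ref{lemma:duality_compact}), establish uniform tightness, apply Prokhorov, invoke upper semi-continuity of $\dl$ (Lemma~\ref{lemma:dl_upper_semicontinuous}), and verify $\PP_i^* \in \Wball\e(\PP_i)$ via Lemma~\ref{lemma:W_inf_integral_characterization}. The only cosmetic differences are that you obtain the tightness estimate $\PP_i^{*,n}(\ov{B_{R+\e}(\zero)}^C) \leq \PP_i^n(\ov{B_R(\zero)}^C)$ directly from the coupling rather than citing Strassen's theorem, and you use dominated convergence rather than Portmanteau to show $\PP_i^n \to \PP_i$ weakly; you also explicitly flag the $C_0$-versus-$C_b$ weak-convergence point that the paper leaves implicit.
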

\begin{proof}
    Set $\PP=\PP_0+\PP_1$, $\PP^n=\PP_0^n+\PP_1^n$. 
    Notice that \ref{it:portmanteau_closed} of Theorem~\ref{th:portmanteau} 
    implies that $\PP_i^n$ converges weakly 
    to $\PP_i$. Let $\PP_0^{*,n},\PP_1^{*,n}$ be 
    maximizers of $\dl$ over $\Wball \e(\PP_0^n)\times \Wball \e(\PP_1^n)$. 
    Next, by Strassen's theorem (Theorem~\ref{th:strassen}),  $\PP_i^{n}(\ov{B_r(\zero)})\leq \PP_i^{n,*}(\ov{B_{r+\e}(\zero)})$ and 
    thus $\PP_i(\ov{B_r(\zero)}^C)\geq \PP_i^n(\ov{B_r(\zero)}^C)\geq \PP_i^{n,*}(\ov{B_{r+\e}(\zero)})$. Therefore, one can apply Prokhorov's theorem (Thereom~\ref{th:prokhorov}) to conclude that $\PP_0^{n,*}$, $\PP_1^{n,*}$ have subsequences $\PP_0^{n_k,*}$, $\PP_1^{n_k,*}$ that converge to measures $\PP_0^*,\PP_1^*$ respectively.  The upper semi-continuity of $R_\phi$ (Lemma~\ref{lemma:dl_upper_semicontinuous}) then implies that $\PP_0^*,\PP_1^*$ satisfy \eqref{eq:dl_upper_approximation_bound}.

    It remains to show that $\PP_i^*\in \Wball \e (\PP_i)$. We will apply Lemma~\ref{lemma:W_inf_integral_characterization}. Because $\PP_i^{n_k,*}\in \Wball \e(\PP_i^{n_k})$ for all $n_k$, Lemma~\ref{lemma:W_inf_integral_characterization} implies that for every $f\in C_b(\Rset^d)$,
    $\int S_\e(f)d\PP_i^{n_k}\geq \int fd\PP_i^{*,n_k}$.
    Because $\PP_i^{n_k}$ converges weakly to $\PP_i$ and $\PP_i^{*,n_k}$ converges weakly to $\PP_i^*$, one can take the limit $k\to \infty$ to conclude
    $\int S_\e(f)d\PP_i\geq \int fd\PP_i^*$ for all $f\in C_b(\Rset^d)$. Lemma~\ref{lemma:W_inf_integral_characterization} then implies $\PP_i^*\in \Wball \e(\PP_i)$.

\end{proof}

\lemmadualitythetaall*
\begin{proof}
    Let $\PP_0^n$, $\PP_1^n$, $\PP_0^*,\PP_1^*$ be the the measures described in Lemma~\ref{lemma:dual_first_approximation}. Notice that because $\PP_0^n$, $\PP_1^n$ are compactly supported, Lemma~\ref{lemma:duality_compact} applies. Define
    \[\Theta^n(h_0,h_1)=\int S_\e(h_1)d\PP_1^n+\int S_\e(h_0)d\PP_0^n.\]

    Thus Lemmas~\ref{lemma:duality_compact} and Lemma~\ref{lemma:dual_first_approximation} imply that  
    \begin{equation}\label{eq:right_dl_approximation}
        \limsup_{n\to \infty}\inf_{(h_0,h_1)\in S_\phi} \Theta^n(h_0,h_1)=\limsup_{n\to \infty}\sup_{\substack{\PP_0'\in \Wball\e (\PP_0^n)\\ \PP_1'\in \Wball\e(\PP_1^n)}} \bar R_\phi(\PP_0',\PP_1')\leq \dl(\PP_0^*,\PP_1^*)\leq \sup_{\substack{\PP_0'\in \Wball \e (\PP_0)\\ \PP_1'\in \Wball \e(\PP_1)}}\dl(\PP_0',\PP_1').
    \end{equation}

We will show 
    \begin{equation}\label{eq:compact_approximation}
        \inf_{(h_0,h_1)\in S_\phi} \Theta(h_0,h_1) \leq  \limsup_{n\to \infty} \inf_{(h_0,h_1)\in S_\phi}  \Theta_n(h_0,h_1).   
    \end{equation}    
    Equations~\ref{eq:right_dl_approximation} and~\ref{eq:compact_approximation}
    imply that
    \begin{equation}\label{eq:opposite_weak_duality}\inf_{(h_0,h_1)\in S_\phi} \Theta(h_0,h_1)\leq \dl(\PP_0^*,\PP_1^*)\leq \sup_{\substack{\PP_0'\in \Wball\e (\PP_0)\\ \PP_1'\in \Wball\e(\PP_1)}} \bar R_\phi(\PP_0',\PP_1').
    \end{equation}

    This relation together with weak duality (Lemma~\ref{lemma:weak_duality_borel}) imply that the inequalities in \eqref{eq:opposite_weak_duality} are actually equalities. Therefore strong duality holds and $\PP_0^*,\PP_1^*$ maximizes the dual.
    
    Next, we prove the inequality in  \eqref{eq:compact_approximation}.
    Let $\delta>0$ be arbitrary and choose an $n\in \Nset$ for which $n>2\e$ and 
    \begin{equation}\label{eq:delta_choice}
        \PP_1(\ov{B_{n-2\e}(\zero)}^C)+\PP_0(\ov{B_{n-2\e}(\zero)}^C)\leq \delta
    \end{equation}
    Let $(h_0^n,h_1^n)\in S_\phi$ be functions for which 
    \begin{equation}\label{eq:h_0^n_h_1^n}
        \Theta^n(h_0^n,h_1^n)\leq \inf_{(h_0,h_1)\in S_\phi}\Theta^n(h_0,h_1)+\delta
    \end{equation}
    Define
    \[\tilde h_0^n= \begin{cases}
        h_0^n &\bx \in \ov{B_{n-\e}(\zero)}\\
        C_\phi^*\left(\frac 12 \right) &\bx \not \in \ov{B_{n-\e}(\zero)}
    \end{cases}\quad
\tilde h_1^n= \begin{cases}
        h_1^n &\bx \in \ov{B_{n-\e}(\zero)}\\
        C_\phi^*\left(\frac 12 \right) &\bx \not \in \ov{B_{n-\e}(\zero)}
    \end{cases}\]
    Because $\eta h_0^n+(1-\eta)h_1^n\geq C_\phi^*(\eta)$  $\forall \eta\in[0,1]$ on $B_{n-\e}(\zero)$ and $(C_\phi^*(1/2),C_\phi^*(1/2))\in S_\phi$, one can conclude that $(\tilde h_0^n,\tilde h_1^n)\in S_\phi$.

    Now because $n>2\e$, the regions $\ov{B_{n-\e}(\zero)}, \ov{B_{n-2\e}(\zero)}$ are non-empty. One can bound $S_\e(\tilde h_i)$ in terms of $S_\e(h_i)$ and $C_\phi^*(1/2)$:
    \begin{align*}
        & S_\e(\tilde h_i)(\bx)=S_\e(h_i)(\bx)&\text{for }\bx\in \ov{B_{n-2\e}(\zero)}\\
        &S_\e(\tilde h_i)(\bx)\leq \max(S_\e(h_i)(\bx),C_\phi^*(1/2))\leq S_\e(h_i)+C_\phi^*(1/2)&\text{for }\bx\in \ov{B_{n}(\zero)}\\
        &S_\e(\tilde h_i)=C_\phi^*(1/2)&\text{for }\bx \in \ov{B_n(\zero)}^C
    \end{align*}
   
    Now for each $i$, these bounds imply that 
    \begin{align*}
        &\int S_\e(\tilde h_i^n)d\PP_i
        \leq \int_{\ov{B_{n-2\e}(\zero)}} S_\e(h_i^n)d\PP_i +\int_{\ov{B_n(\zero)}-\ov{B_{n-2\e}(\zero)}} S_\e(h_i^n)+C_\phi^*\left(\frac 12\right)d\PP_i
        +\int_{\ov{B_n(\zero)}^{C}} C_\phi^*\left(\frac 12 \right)d\PP_i \nonumber\\
        &=\int_{\ov{B_n(\zero)}} S_\e(h_i^n)d\PP_i+\int_{\ov{B_{n-2\e}(\zero)}^C}C_\phi^*\left(\frac 12\right) d\PP_i
    \end{align*}

    Then, applying this bound for each $i$,
    \begin{align*}
        &\Theta(\tilde h_0^n,\tilde h_1^n)=\int S_\e(\tilde h_1^n)d\PP_1+\int S_\e(\tilde h_0^n)d\PP_0\\ 
        &\leq \left(\int_{\ov{B_n(\zero)}}S_\e(h_1^n)d\PP_1+\int_{\ov{B_n(\zero)}}S_\e(h_0^n)d\PP_0\right)
        +\left(\int_{\ov{B_{n-2\e}(\zero)}^C} C_\phi^*\left(\frac 12\right) d\PP_1+\int_{\ov{B_{n-2\e}(\zero)}^C}C_\phi^*\left(\frac 12\right) d\PP_0 \right)\\
        &=\Theta^n(h_0^n,h_1^n)+C_\phi^*\left(\frac 12 \right)\left(\PP_0(\ov{B_{n-2\e}(\zero)}^C)+\PP_1(\ov{B_{n-2\e}(\zero)}^C)\right)
        \leq \left(\inf_{(h_0,h_1)\in S_\phi}\Theta^n(h_0,h_1)+\delta\right)+\delta C_\phi^*\left(\frac 12\right)
    \end{align*} The last inequality follows from Equations~\ref{eq:delta_choice} and \ref{eq:h_0^n_h_1^n}.
    Because $\delta$ arbitrary, \eqref{eq:compact_approximation} holds.

\end{proof}

\section{Complimentary Slackness}\label{app:complimentary_slackness}

     

    

\lemmacomplimentaryslacknessrestricted*
 
    Notice that the forward direction of this lemma is actually a consequence of the approximate complimentary slackness result in Lemma~\ref{lemma:approximate_complimentary_slackness}, but we provide a separate self-contained proof below.
 \begin{proof}
 
 First assume that $(\PP_0^*,\PP_1^*)$ maximizes $\bar R_\phi$ over $\Wball \e(\PP_0)\times \Wball \e(\PP_1)$ and $(h_0^*,h_1^*)$ minimizes $\Theta$ over $S_\phi$.
    Because $\PP^*_i\in \Wball\e(\PP_i)$ and $(h_0^*,h_1^*)\in S_\phi$, by Lemma~\ref{lemma:S_e_W_infty_equivalence} 
    \begin{align}
        &\Theta(h_0^*,h_1^*)=\int S_\e(h_1^*)d\PP_1+\int S_\e(h_0^*)d\PP_0\geq \int h_1^*d\PP_1^*+\int h_0^*d\PP_0^*\label{eq:c_slack_first_restricted}\\
        &=\int \eta^* h_1^*+\left(1-\eta^* \right) h_0^* d\PP^*\geq \int C_\phi^*(\eta^*)d\PP^*=\bar R_\phi(\PP_0^*,\PP_1^*)\label{eq:comp_slackness_restricted}    
    \end{align}
    By Lemma~\ref{lemma:duality_theta_all}, both the first expression of \eqref{eq:c_slack_first_restricted}  and the last expression of \eqref{eq:comp_slackness_restricted} are equal. Thus all the inequalities above must be equalities which implies 
    \eqref{eq:complimentary_slackness_restricted_necessary}.
    Next, because \eqref{eq:comp_slackness_restricted} implies that 
    \[\int S_\e(h_1^*)d\PP_1+\int S_\e(h_0^*)d\PP_0= \int h_1^*d\PP_1^*+\int h_0^*d\PP_0^*\]
    and Lemma~\ref{lemma:S_e_W_infty_equivalence} implies that $\int S_\e(h_0^*)d\PP_0\geq \int h_0^*d\PP_0^*$ and $\int S_\e(h_1^*)d\PP_1\geq \int h_1^*d\PP_1^*$ we can conclude \eqref{eq:sup_comp_slack_restricted}. 

    We will now show the opposite implication. Assume that $h_0^*,h_1^*,\PP_0^*,\PP_1^*$ satisfy \eqref{eq:sup_comp_slack_restricted} and \eqref{eq:complimentary_slackness_restricted_necessary}. Then
    \begin{align*}
        &\Theta(h_0^*,h_1^*)=\int S_\e(h_1^*)d\PP_1+\int S_\e(h_0^*)d\PP_0\\
        &=\int h_1^*d\PP_1^*+\int h_0^*d\PP_0^*&\text{(Equation~\ref{eq:sup_comp_slack_restricted})}\\
        &=\int \eta^* h_1^*+(1-\eta^*)h_0^*d\PP^*=\int C_\phi^*(\eta^*)d\PP^*&\text{(Equation~\ref{eq:complimentary_slackness_restricted_necessary})}\\
        &=\dl(\PP_0^*,\PP_1^*)    
    \end{align*}
     However, Lemma~\ref{lemma:duality_theta_all} implies that $\Theta(h_0,h_1)\geq \dl(\PP_0',\PP_1')$ for \emph{any} $h_0,h_1,\PP_0',\PP_1'$. Therefore, $h_0^*,h_1^*$ must be optimal for $\Theta$ and $\PP_0^*,\PP_1^*$ must be optimal for $\dl$.
    
\end{proof}
Notably, a similar strategy shows that if $(h_0^n,h_1^n)\in S_\phi$ is a sequence that satisfies \ref{it:h_0,1_limit} and \ref{it:C_phi_limit} of Lemma~\ref{lemma:approximate_complimentary_slackness}, then $(h_0^n,h_1^n)$ must be a minimizing sequence for $\Theta$.
\section{Technical Lemmas from Section~\ref{sec:primal_existence}}
\subsection{Proof of Lemma~\ref{lemma:exponential_loss}}\label{app:exponential_loss}

 \lemmaexponentialloss*
	\begin{proof}
		First, one can verify that $-\infty$ minimizes $C_\psi(0,\alpha)$ and $\infty$ minimizes $C_\psi(1,\alpha)$, and that $C_\psi^*(0)=C_\psi^*(1)=0$. 
		To find minimizers of $C_\psi(\eta,\alpha)$ for $\eta\in (0,1)$, we solve $\partial_\alpha C_\psi(\eta,\alpha)=-\eta e^{-\alpha}+(1-\eta)e^\alpha =0$, resulting in $\alpha_\psi(\eta)=1/2 \log(\eta/1-\eta)$.
		This formula allows for computation of $C_\psi^*(\eta)$ via $C_\psi^*(\eta)=C_\psi(\eta,\alpha_\psi(\eta))$.
		
		Next, by definition
		\[\eta \psi(\alpha_\psi(\eta))+(1-\eta)(-\psi(\alpha_\psi(\eta)))= C_\psi^*(\eta)\quad \text{ and } s \psi(\alpha_\psi(\eta))+(1-s)(-\psi(\alpha_\psi(\eta)))\geq C_\psi^*(s)\] for all $s\in[0,1]$.
		Therefore, $\psi(\alpha_\psi(\eta))-\psi(-\alpha_\psi(\eta))$ is a supergradient of $C_\psi^*(\eta)$ at $\eta$. 
		
		The function $C_\psi^*$ is differentiable on $(0,1)$, and thus the superdifferential is unique on this set. 
        To show that $\partial C_\psi^*(0)$, $\partial C_\psi^*(1)$ are singletons, it suffices to observe that 
		\[\lim_{\eta\to 0} \frac d {d\eta} C_\psi^*(\eta)=+\infty, \lim_{\eta\to 1} \frac d {d\eta} C_\psi^*(\eta)=-\infty.\]
	\end{proof}

\subsection{Proof of Lemma~\ref{lemma:a_n_b_n}}\label{app:a_n_b_n}

    \lemmaanbn*
	\begin{proof}
		Recall that on the extended real number line, every subsequence has a convergent subsequence. We will show that $\lim_{n\to \infty} a_n=\psi(\alpha_\psi(\eta_0))$ and $\lim_{n\to \infty} b_n=\psi(-\alpha_\psi(\eta_0))$ by proving that every convergent subsequence of $\{a_n\}$ converges to $\psi(\alpha_\psi(\eta_0))$ and every convergent subsequence of $b_n$ converges to $\psi(\alpha_\psi(\eta_0))$.
		
		Let $a_{n_k}$, $b_{n_k}$ be a convergent subsequences of $\{a_n\}$, $\{b_n\}$ respectively. (Again, this convergence is in $\ov \Rset$.) Set $a=\lim_{k\to \infty} a_{n_k}$, $b=\lim_{k\to\infty} b_{n_k}$.
		
		Then \eqref{eq:eta_all_ineq} \eqref{eq:eta_0_limit} imply  that
		\[\eta a+(1-\eta)b\geq C_\psi^*(\eta)\text{ for all }\eta\in [0,1]\]
		\begin{equation}\label{eq:regular_a_b}
			\eta_0 a+(1-\eta_0) b=C_\psi^*(\eta_0)
		\end{equation}
		These equations imply that $a-b \in \partial C_\psi^*(\eta_0)$ and thus 
		\begin{equation}\label{eq:subdiff_a_b}
			a-b=\psi(\alpha_\psi(\eta_0))-\psi(-\alpha_\psi(\eta_0))
		\end{equation}
		while \eqref{eq:regular_a_b} is equivalent to 
		\begin{equation}\label{eq:regular_a_b_simplified}
			\eta_0 a+(1-\eta_0) b=\eta_0\psi(\alpha_\psi(\eta_0))+(1-\eta_0)\psi(-\alpha_\psi(\eta_0))
		\end{equation}
	
		The equations \eqref{eq:subdiff_a_b} and \eqref{eq:regular_a_b_simplified} comprise a system of equations in two variables with a unique solution for $a$ and $b$.
	\end{proof}
	
\subsection{Proof of Lemma~\ref{lemma:sup_liminf_swap}}\label{app:S_e_liminf_limsup_swap}
Lastly, we prove Lemma~\ref{lemma:sup_liminf_swap}.
\lemmasupinfswap*
\begin{proof}
    We start by showing \eqref{eq:sup_liminf_swap}. 
    \begin{align*}
    &\liminf_{n\to \infty} S_\e(h_n)(\bx)=\liminf_{n\to \infty} \sup_{\|\bh\|\leq \e} h_n(\bx+\bh)=\sup_N \inf_{n\geq N} \sup_{\|\bh\|\leq \e}h_n(\bx+\bh)\\
    &\geq  \sup_{\|\bh\|\leq \e}\sup_N \inf_{n\geq N}h_n(\bx+\bh) = \sup_{\|\bh\|\leq \e} \liminf_{n\to \infty} h_n(\bx+\bh)= S_\e( \liminf_{n\to \infty} h_n)(\bx)
\end{align*}

    Equation~\ref{eq:sup_limsup_swap} can then be proved by the same argument:
    \begin{align*}
    &\limsup_{n\to \infty} S_\e(h_n)(\bx)=\limsup_{n\to \infty} \sup_{\|\bh\|\leq \e} h_n(\bx+\bh)=\inf_N \sup_{n\geq N} \sup_{\|\bh\|\leq \e}h_n(\bx+\bh)\\
    &\geq  \sup_{\|\bh\|\leq \e}\inf_N \sup_{n\geq N}h_n(\bx+\bh) = \sup_{\|\bh\|\leq \e} \limsup_{n\to \infty} h_n(\bx+\bh)= S_\e( \limsup_{n\to \infty} h_n)(\bx)
\end{align*}
\end{proof}

\bibliography{bibliography.bib,bib2.bib}

\begin{thebibliography}{50}
\providecommand{\natexlab}[1]{#1}
\providecommand{\url}[1]{\texttt{#1}}
\expandafter\ifx\csname urlstyle\endcsname\relax
  \providecommand{\doi}[1]{doi: #1}\else
  \providecommand{\doi}{doi: \begingroup \urlstyle{rm}\Url}\fi

\bibitem[Awasthi et~al.(2021{\natexlab{a}})Awasthi, Frank, Mao, Mohri, and
  Zhong]{AwasthiFrankMao2021}
P.~Awasthi, N.~Frank, A.~Mao, M.~Mohri, and Y.~Zhong.
\newblock Calibration and consistency of adversarial surrogate losses.
\newblock \emph{NeurIps}, 2021{\natexlab{a}}.

\bibitem[Awasthi et~al.(2021{\natexlab{b}})Awasthi, Frank, and
  Mohri]{AwasthiFrankMohri2021}
P.~Awasthi, N.~S. Frank, and M.~Mohri.
\newblock On the existence of the adversarial bayes classifier (extended
  version).
\newblock \emph{arxiv}, 2021{\natexlab{b}}.

\bibitem[Bao et~al.(2021)Bao, Scott, and Sugiyama]{bao2021calibrated}
H.~Bao, C.~Scott, and M.~Sugiyama.
\newblock Calibrated surrogate losses for adversarially robust classification.
\newblock \emph{arxiv}, 2021.

\bibitem[Barbu and Precupanu(2012)]{ConvexityAndOptimizationInBanachSpaces}
V.~Barbu and T.~Precupanu.
\newblock \emph{Convexity and Optimization in Banach Spaces}.
\newblock Springer Monographs in Mathematics, 4th edition, 2012.

\bibitem[Bartlett et~al.(2006)Bartlett, Jordan, and
  McAuliffe]{BartlettJordanMcAuliffe2006}
P.~L. Bartlett, M.~I. Jordan, and J.~D. McAuliffe.
\newblock Convexity, classification, and risk bounds.
\newblock \emph{Journal of the American Statistical Association}, 101\penalty0
  (473), 2006.

\bibitem[Ben-David et~al.(2003)Ben-David, Eiron, and
  Long]{BenDavidEironLong2003}
S.~Ben-David, N.~Eiron, and P.~M. Long.
\newblock On the difficulty of approximately maximizing agreements.
\newblock \emph{Journal of Computer System Sciences}, 2003.

\bibitem[Bertsekas and Shreve(1996)]{BertsekasShreve96}
D.~P. Bertsekas and S.~E. Shreve.
\newblock \emph{Stochastic Optimal Control: The Discrete-Time Case}.
\newblock Athena Scientific, 1996.

\bibitem[Bhagoji et~al.(2019)Bhagoji, Cullina, and
  Mittal]{BhagojiNitinCullina2019lower}
A.~N. Bhagoji, D.~Cullina, and P.~Mittal.
\newblock Lower bounds on adversarial robustness from optimal transport.
\newblock In \emph{Advances in Neural Information Processing Systems}, pages
  7498--7510, 2019.

\bibitem[Biggio et~al.(2013)Biggio, Corona, Maiorca, Nelson, {\v{S}}rndi{\'c},
  Laskov, Giacinto, and Roli]{biggio2013evasion}
B.~Biggio, I.~Corona, D.~Maiorca, B.~Nelson, N.~{\v{S}}rndi{\'c}, P.~Laskov,
  G.~Giacinto, and F.~Roli.
\newblock Evasion attacks against machine learning at test time.
\newblock In \emph{Joint European conference on machine learning and knowledge
  discovery in databases}, pages 387--402. Springer, 2013.

\bibitem[Bogachev(2007)]{BogachevMeasureTheory2}
V.~I. Bogachev.
\newblock \emph{Measure Theory}, volume~II.
\newblock Springer, 2007.

\bibitem[Bungert et~al.(2021)Bungert, Trillos, and
  Murray]{BungertGarciaMurray2021}
L.~Bungert, N.~G. Trillos, and R.~Murray.
\newblock The geometry of adversarial training in binary classification.
\newblock \emph{arxiv}, 2021.

\bibitem[Demontis et~al.(2018)Demontis, Melis, Pintor, Jagielski, Biggio,
  Oprea, Nita-Rotaru, and Roli]{DemontisMelisetal18}
A.~Demontis, M.~Melis, M.~Pintor, M.~Jagielski, B.~Biggio, A.~Oprea,
  C.~Nita-Rotaru, and F.~Roli.
\newblock Why do adversarial attacks transfer? explaining transferability of
  evasion and poisoning attacks.
\newblock \emph{CoRR}, 2018.

\bibitem[Domingo-Enrich et~al.(2021)Domingo-Enrich, Jelassi, Mensch, Rotskoff,
  and Bruna]{domingoenrichbruna2021meanfield}
C.~Domingo-Enrich, S.~Jelassi, A.~Mensch, G.~Rotskoff, and J.~Bruna.
\newblock A mean-field analysis of two-player zero-sum games, 2021.

\bibitem[Folland(1999)]{folland}
G.~B. Folland.
\newblock \emph{Real analysis: modern techniques and their applications},
  volume~40.
\newblock John Wiley \& Sons, 1999.

\bibitem[Frank and Niles-Weed(2023)]{FrankNilesWeed22}
N.~S. Frank and J.~Niles-Weed.
\newblock The adversarial consistency of surrogate risks for binary
  classification.
\newblock \emph{arxiv}, 2023.

\bibitem[Gao et~al.(2022)Gao, Chen, and Kleywegt]{gao2022wasserstein}
R.~Gao, X.~Chen, and A.~J. Kleywegt.
\newblock Wasserstein distributionally robust optimization and variation
  regularization, 2022.

\bibitem[Goodfellow et~al.(2014)Goodfellow, Shlens, and
  Szegedy]{GoodfellowShlensSzegedy14}
I.~J. Goodfellow, J.~Shlens, and C.~Szegedy.
\newblock Explaining and harnessing adversarial examples.
\newblock \emph{ICLR}, 2014.

\bibitem[Gu et~al.(2017)Gu, Dolan{-}Gavitt, and Garg]{GuDolanGavittGarg17}
T.~Gu, B.~Dolan{-}Gavitt, and S.~Garg.
\newblock Badnets: Identifying vulnerabilities in the machine learning model
  supply chain.
\newblock \emph{CoRR}, 2017.

\bibitem[Jylh\"{a}(2014)]{Jylha15}
H.~Jylh\"{a}.
\newblock The $l^\infty$ optimal transport: Infinite cyclical monotonicity and
  the existence of optimal transport maps.
\newblock \emph{Calculus of Variations and Partial Differential Equations},
  2014.

\bibitem[Kannan et~al.(2018)Kannan, Kurakin, and
  Goodfellow]{KannanKurakinGoodfellow18}
H.~Kannan, A.~Kurakin, and I.~J. Goodfellow.
\newblock Adversarial logit pairing.
\newblock \emph{CoRR}, 2018.

\bibitem[Kurakin et~al.(2017)Kurakin, Goodfellow, and
  Bengio]{kurakin2017adversarial}
A.~Kurakin, I.~Goodfellow, and S.~Bengio.
\newblock Adversarial machine learning at scale.
\newblock \emph{ICLR}, 2017.

\bibitem[Li and Telgarsky(2023)]{LiTelgarsky2023achieving}
J.~D. Li and M.~Telgarsky.
\newblock On achieving optimal adversarial test error, 2023.

\bibitem[Lin(2004)]{Lin2004}
Y.~Lin.
\newblock A note on margin-based loss functions in classification.
\newblock \emph{Statistics \& Probability Letters}, 68\penalty0 (1):\penalty0
  73--82, 2004.

\bibitem[Madry et~al.(2019)Madry, Makelov, Schmidt, Tsipras, and
  Vladu]{madry2019deep}
A.~Madry, A.~Makelov, L.~Schmidt, D.~Tsipras, and A.~Vladu.
\newblock Towards deep learning models resistant to adversarial attacks.
\newblock \emph{ICLR}, 2019.

\bibitem[Meunier et~al.(2022)Meunier, Ettedgui, Pinot, Chevaleyre, and
  Atif]{MeunierEttedguietal22}
L.~Meunier, R.~Ettedgui, R.~Pinot, Y.~Chevaleyre, and J.~Atif.
\newblock Towards consistency in adversarial classification.
\newblock \emph{arXiv}, 2022.

\bibitem[Mingyuan~Zhang(2020)]{ZhangAgarwal}
S.~A. Mingyuan~Zhang.
\newblock Consistency vs. h-consistency: The interplay between surrogate loss
  functions and the scoring function class.
\newblock \emph{NeurIps}, 2020.

\bibitem[Mokhtari et~al.(2019)Mokhtari, Ozdaglar, and
  Pattathil]{mokhtari2019unified}
A.~Mokhtari, A.~Ozdaglar, and S.~Pattathil.
\newblock A unified analysis of extra-gradient and optimistic gradient methods
  for saddle point problems: Proximal point approach, 2019.

\bibitem[Papernot et~al.(2016)Papernot, McDaniel, Goodfellow, Jha, Celik, and
  Swami]{PapernotMcDanielGoodfellowetal16}
N.~Papernot, P.~D. McDaniel, I.~J. Goodfellow, S.~Jha, Z.~B. Celik, and
  A.~Swami.
\newblock Practical black-box attacks against deep learning systems using
  adversarial examples.
\newblock \emph{CoRR}, abs/1602.02697, 2016.
\newblock URL \url{http://arxiv.org/abs/1602.02697}.

\bibitem[Philip M.~Long(2013)]{LongServedioH-consistency}
R.~A.~S. Philip M.~Long.
\newblock Consistency versus realizable h-consistency for multiclass
  classification.
\newblock \emph{ICML}, 2013.

\bibitem[Pydi and Jog(2019)]{PydiJog2019}
M.~S. Pydi and V.~Jog.
\newblock Adversarial risk via optimal transport and optimal couplings.
\newblock \emph{arXiv preprint arXiv:1912.02794}, 2019.

\bibitem[Pydi and Jog(2021)]{PydiJog2021}
M.~S. Pydi and V.~Jog.
\newblock The many faces of adversarial risk.
\newblock \emph{Neural Information Processing Systems}, 2021.

\bibitem[Rozsa et~al.(2016)Rozsa, G{\"{u}}nther, and
  Boult]{RozsaGuntherBoult16}
A.~Rozsa, M.~G{\"{u}}nther, and T.~E. Boult.
\newblock Are accuracy and robustness correlated?
\newblock \emph{CoRR}, 2016.

\bibitem[Santambrogio(2015)]{OptimalTransportforAppliedMathematiciansSantambroglio}
F.~Santambrogio.
\newblock \emph{Optimal Transport for Applied Mathematicians}.
\newblock Birkh\"{a}user, 1st edition, 2015.

\bibitem[Shafahi et~al.(2019)Shafahi, Najibi, Ghiasi, Xu, Dickerson, Studer,
  Davis, Taylor, and Goldstein]{ShafahiNajibietal19}
A.~Shafahi, M.~Najibi, A.~Ghiasi, Z.~Xu, J.~P. Dickerson, C.~Studer, L.~S.
  Davis, G.~Taylor, and T.~Goldstein.
\newblock Adversarial training for free!
\newblock \emph{CoRR}, 2019.

\bibitem[Steinwart(2007)]{Steinwart2007}
I.~Steinwart.
\newblock How to compare different loss functions and their risks.
\newblock \emph{Constructive Approximation}, 2007.

\bibitem[Szegedy et~al.(2013)Szegedy, Zaremba, Sutskever, Bruna, Erhan,
  Goodfellow, and Fergus]{szegedy2013intriguing}
C.~Szegedy, W.~Zaremba, I.~Sutskever, J.~Bruna, D.~Erhan, I.~Goodfellow, and
  R.~Fergus.
\newblock Intriguing properties of neural networks.
\newblock \emph{arXiv preprint arXiv:1312.6199}, 2013.

\bibitem[Tewari and Bartlett(2007)]{TewariBartlett2007}
A.~Tewari and P.~L. Bartlett.
\newblock On the consistency of multiclass classification methods.
\newblock \emph{Journal of Machine Learning Research}, 8\penalty0 (36), 2007.

\bibitem[Tramèr et~al.(2017)Tramèr, Papernot, Goodfellow, Boneh, and
  McDaniel]{FlorianPapernotGoodfellowetal17}
F.~Tramèr, N.~Papernot, I.~Goodfellow, D.~Boneh, and P.~McDaniel.
\newblock The space of transferable adversarial examples.
\newblock \emph{arXiv}, 2017.

\bibitem[Trillos and Trillos(2023)]{GarciaTrillosGarciaTrillos23}
C.~G. Trillos and N.~G. Trillos.
\newblock On adversarial robustness and the use of wasserstein ascent-descent
  dynamics to enforce it, 2023.

\bibitem[Trillos and Murray(2020)]{trillosMurray2020}
N.~G. Trillos and R.~Murray.
\newblock Adversarial classification: Necessary conditions and geometric flows.
\newblock \emph{arxiv}, 2020.

\bibitem[Trillos et~al.(2022)Trillos, Jacobs, and Kim]{TrillosJacobsKim22}
N.~G. Trillos, M.~Jacobs, and J.~Kim.
\newblock The multimarginal optimal transport formulation of adversarial
  multiclass classification.
\newblock \emph{arXiv}, 2022.

\bibitem[Villani(2003)]{TopicsInOptimalTransportVillani}
C.~Villani.
\newblock \emph{Topics in Optimal Transportation}.
\newblock American Mathematical Society, 2nd edition, 2003.

\bibitem[Wang and Chizat(2023)]{WangChizat2023exponentially}
G.~Wang and L.~Chizat.
\newblock An exponentially converging particle method for the mixed nash
  equilibrium of continuous games, 2023.

\bibitem[Wang et~al.(2021)Wang, Ma, Bailey, Yi, Zhou, and Gu]{wang2019}
Y.~Wang, X.~Ma, J.~Bailey, J.~Yi, B.~Zhou, and Q.~Gu.
\newblock On the convergence and robustness of adversarial training.
\newblock \emph{ICML}, 2021.

\bibitem[Wong et~al.(2019)Wong, Schmidt, and Kolter]{WongSchmidtKolter2019}
E.~Wong, F.~Schmidt, and Z.~Kolter.
\newblock {W}asserstein adversarial examples via projected {S}inkhorn
  iterations.
\newblock In \emph{Proceedings of the 36th International Conference on Machine
  Learning}, Proceedings of Machine Learning Research. PMLR, 2019.

\bibitem[Wong et~al.(2020)Wong, Rice, and Kolter]{WongRiceKolter2020}
E.~Wong, L.~Rice, and J.~Z. Kolter.
\newblock Fast is better than free: Revisiting adversarial training.
\newblock \emph{CoRR}, abs/2001.03994, 2020.

\bibitem[Wu et~al.(2020)Wu, Wang, and Yu]{wu2020stronger}
K.~Wu, A.~H. Wang, and Y.~Yu.
\newblock Stronger and faster wasserstein adversarial attacks, 2020.

\bibitem[Xie et~al.(2018)Xie, Wu, van~der Maaten, Yuille, and
  He]{XieWuVanDerMaaten18}
C.~Xie, Y.~Wu, L.~van~der Maaten, A.~L. Yuille, and K.~He.
\newblock Feature denoising for improving adversarial robustness.
\newblock \emph{CoRR}, 2018.

\bibitem[Yang et~al.(2020)Yang, Rashtchian, Wang, and
  Chaudhuri]{YangRashtchian2020}
Y.-Y. Yang, C.~Rashtchian, Y.~Wang, and K.~Chaudhuri.
\newblock Robustness for non-parametric classification: A generic attack and
  defense.
\newblock \emph{Proceedings of Machine Learning Research}, 2020.

\bibitem[Zhang(2004)]{zhang04}
T.~Zhang.
\newblock Statistical behavior and consistency of classification methods based
  on convex risk minimization.
\newblock \emph{The Annals of Statistics}, 2004.

\end{thebibliography}
\end{document}